\documentclass{article} % For LaTeX2e
\usepackage{my_stylefile,times}

\usepackage{hyperref}
\usepackage{url}
\usepackage{graphicx}

\usepackage{amsmath, amsthm, amssymb}
\usepackage{mathtools}
\mathtoolsset{showonlyrefs}	
\usepackage[nameinlink, capitalise, noabbrev]{cleveref}

\makeatletter
\AddToHook{cmd/appendix/before}{\def\cref@section@alias{appendix}\def\cref@subsection@alias{appendix}}
\makeatother

\usepackage{todonotes}
\usepackage{bbm}
\usepackage{subcaption}
\usepackage{titletoc}
\usepackage{tikz}
\usepackage{enumitem}
\usepackage{wrapfig}

\theoremstyle{plain}
\newtheorem{theorem}{Theorem}

\newtheorem{proposition}{Proposition}[section]
\newtheorem{lemma}{Lemma}[section]

\theoremstyle{remark}
\newtheorem{remark}{Remark}[section]

\theoremstyle{definition}

\newtheorem{assumption}{Assumption}
\crefname{assumption}{Assumption}{Assumptions}

\DeclareMathOperator{\sign}{sign}
\DeclareMathOperator{\dist}{dist}

\DeclareMathOperator*{\esssup}{ess\,sup}
\DeclareMathOperator*{\essinf}{ess\,inf}

\DeclareMathOperator{\Per}{Per}
\DeclareMathOperator{\TV}{TV}
\DeclareMathOperator{\Lip}{Lip}
\DeclareMathOperator{\Rad}{Rad}
\DeclareMathOperator{\supp}{supp}
\newcommand{\R}{\mathbb{R}}
\newcommand{\N}{\mathbb{N}}
\newcommand{\IE}{\mathbb{E}}
\newcommand{\B}{\mathcal{B}}

\renewcommand\L{\mathcal L}

\newcommand{\de}{\,\mathrm{d}}
\newcommand{\eps}{\varepsilon}

\title{Explaining the Saliency Map Sparsity\\of Adversarially-Trained Neural Networks}

\author{
Yannick Lunk \\
Institute of Mathematics\\
University of Würzburg, Germany\\
\texttt{yannick.lunk@uni-wuerzburg.de}
\And
Atell Yehor Krasnopolsky\\
Technical University of Munich, Germany\\
\texttt{atell.krasnopolskyi@tum.de}
\And
Damien Garreau\\
Institute of Computer Science, CAIDAS\\
University of Würzburg, Germany\\
\texttt{damien.garreau@uni-wuerzburg.de}
\And 
Leon Bungert\\
Institute of Mathematics, CAIDAS\\
University of Würzburg, Germany\\
\texttt{leon.bungert@uni-wuerzburg.de}
}

\iclrpreprintcopy 

\begin{document}

% \listoftodos
% \setcounter{page}{0}
% \clearpage

\maketitle

%%%%%%%%%%%%%%%%%%%%%%%%%%%%%%%%%%%%%%%%%%%%%%%%%%%%%%%%%%%%%%%%%%

\begin{abstract}
Understanding why deep neural networks make a given prediction is of great importance for their safe deployment.  
In computer vision, saliency maps, which highlight the image region most influential for a prediction, remain a widely-used form of explanation. 
An empirical observation is the apparent sparsity of gradient saliency maps of adversarially-trained neural networks.
In this paper, we propose a theoretical explanation of this phenomenon for two-layer ReLU networks. 
We build on the established equivalence of adversarial training to the minimization of the empirical risk with weight-decay penalization and an added adversarial total variation term -- valid for certain loss functions.
As the number of data points and neurons grows and the regularization parameters are sent to zero at appropriate rates, we prove that minimizers converge to a Bayes classifier with minimal gradient and Barron norm.
Sparsity appears since for adversarial training with $\ell_\infty$-attacks the gradient norm is anisotropic and favors axis-aligned / sparse gradients. 
We illustrate our theoretical findings experimentally by evaluating the gradient $\ell_1$-norm and thresholded sparsity of naturally versus adversarially trained models.
\end{abstract}

% DGA: if space (too technical)
% The weight decay models a common design choice of machine learning optimizers and, furthermore, is necessary to ensure compactness of minimizers. 
%over the class of shallow ReLU networks, 

%%%%%%%%%%%%%%%%%%%%%%%%%%%%%%%%%%%%%%%%%%%%%%%%%%%%%%%%%%%%%%%%%%

\section{Introduction}

Despite the immense success of deep neural networks, two fundamental challenges continue to limit their deployment in high-stakes domains: their vulnerability to small, imperceptible adversarial perturbations \citep{goodfellow2015explaining}, and the ``black-box'' nature of their decision-making. 
Addressing the former, \emph{adversarial training} \citep{madry2018toward}, which mitigates the problem by incorporating adversarial examples in the training, has emerged as one of the most successful defenses and  maintained its relevance even in the era of large language models~\citep{xhonneux2024efficient}. 

To address the latter, a wide range of methods have been proposed in recent years under the umbrella of \emph{Explainable AI (XAI)} \citep{linardatos_explainable_2020}. 
Among them, local feature importance methods aim to identify the input features that are the most influential for a model's decision at a given point. 
In computer vision, a prominent example is given by saliency maps based on the gradient $\nabla f_\theta(x)$ of the model with respect to the input. 
The most important features can then be easily visualized as an image overlay, see \cref{fig:sensitivity_maps_varying_eps}. 
Motivated by the observation that plain input gradients are often noisy and visually hard to interpret, a number of extension have been proposed, notably gradient times input \citep{pmlr-v70-shrikumar17a}, integrated gradient  \citep{pmlr-v70-sundararajan17a}, and SmoothGrad \citep{smilkov2017smoothgrad}. 
A line of work also developed around specific architecture, such as GradCAM for convolutional neural networks \citep{selvaraju2017grad}. 

Previous work has shown empirically that adversarial training not only increases robustness of the trained model but also \textbf{leads to sparse, perceptually aligned, and hence more interpretable gradient saliency maps} \citep{zhang2019interpreting, etmann19connection, chalasani2020concise}; see \cref{fig:sensitivity_maps_varying_eps}. 
This observation is surprising, since adversarial training was introduced to improve robustness against worst-case perturbations, with no explicit objective related to interpretability, or even the gradient structure of the predictor. 

%%%%%%%%%%%%%%%
% MAIN RESULT %
%%%%%%%%%%%%%%%

In this paper, we provide a theoretical explanation for this phenomenon.  
The starting point of our journey is to notice that adversarial training 
(with the $\ell_1$-loss)
can be reformulated as empirical risk minimization with an additional 
% nonlocal
total variation (TV) term depending on the adversarial budget~$\eps$ \citep{bungert2023geometry}. 
Focusing on two-layer neural networks, 
%and adding a weight decay regularization term, 
we show that, in the limit of infinite width and vanishing~$\eps$, the limiting optimization problem 
reduces to finding a Bayes classifier which minimizes a balance of the average dual norm of the classifier's input gradients and a measure of its complexity. 
In particular, for the $\ell_\infty$-bounded perturbations typically used in adversarial training, this yields an $\ell_1$-norm penalty of the input gradients. 
Informally, adversarial training in this regime selects a Bayes classifier that balances accuracy, 
% Barron
complexity, and average $\ell_1$-norm of its gradients. 
Since $\ell_1$-type regularization is well-known to promote sparsity, this provides a mathematical explanation for the sparse saliency maps observed in adversarially trained models. 
We illustrate this result in a simple setting in \cref{fig:toy_model}, see \cref{sec: additional numerical experiments} for more details. 

%%%%%%%%%%%%%%%%%%%%%%%%%%%%%%%%%%%%%%%%%%%%%
% TECHNICAL TOOLS / TECHNICAL CONTRIBUTIONS %
%%%%%%%%%%%%%%%%%%%%%%%%%%%%%%%%%%%%%%%%%%%%%

Our main technical tool is Gamma-convergence (see~\citet{braides2002gamma} for an introduction), which is key in showing that the empirical risk minimizers converge to minimizers of the limit problem. 
Existing results which we review in \cref{sec: related_work}
are obtained in an unrealistic non-parametric setting with a population risk, whereas in practice one uses neural networks trained on finite datasets. 
In this work, we bridge this gap by embedding our problem into the homogeneous Barron space \citep{park2023minimum} and deriving generalization bounds and quantitative approximation results to transition from the idealized and continuous theory to the empirical and parametric setting. 
% [for safe-keeping]
% A technical challenge in the transition from continuous theory to the empirical setting is controlling the gap between the empirical and population-level TV functionals which we achieve by leveraging generalization bounds. 
%We then analyze the asymptotic behavior of adversarial training as the number of data points and the network width tend to infinity while the adversarial budget and the weight decay parameter go to zero at appropriate coupled rates.
%Employing Gamma-convergence techniques in the spirit of \citet{bungert2024gamma} and using quantitative approximation results for Barron functions, we show that the limiting optimization problem contains a TV-type gradient regularization term involving the dual norm of the norm used to constrain the adversarial perturbations.
%Hence, since adversarial examples are typically restricted to an $\ell_\infty$ ball, the limiting TV functional corresponds to an expected $\ell_1$ norm of the input gradients.
%As a result of this asymptotic analysis we show that the solutions of adversarial training converge to a Bayes classifier which minimizes a balance of the $\ell_1$ norm of the classifier's input gradients and its Barron complexity, see \cref{fig:toy_model} for an illustration on a dataset with two features.
%This $\ell_1$ gradient regularization promotes sparse input gradients, providing a mathematical explanation for the sparse saliency maps observed in robust models.

%%%%%%%%%%%%%%%%%%%%%%%%%%%%%%%%%%%%%%%%%%%%%%%%%%%%%%%%%%%%%%%

\begin{figure}[t!]
\centering
    \begin{subfigure}[b]{0.19\linewidth}
        \centering
        \includegraphics[width=\linewidth]{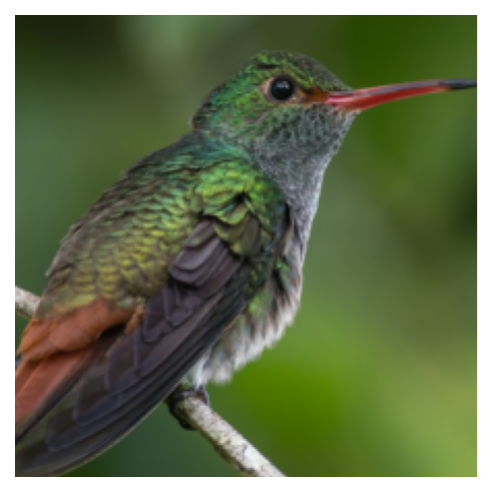}
        \caption{Original}
        \label{fig:sens_orig}
    \end{subfigure}
    \hfill
    \begin{subfigure}[b]{0.19\linewidth}
        \centering
        \includegraphics[width=\linewidth]{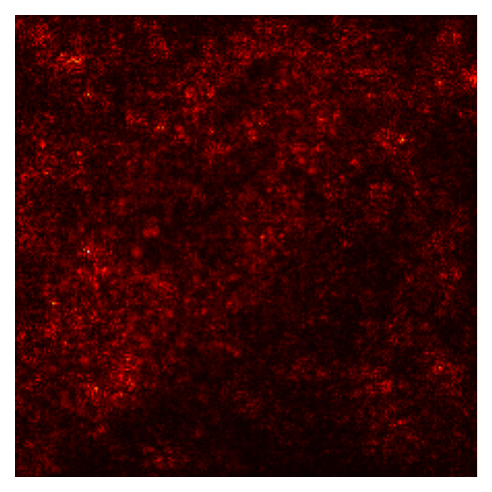}
        \caption{Natural training}
        \label{fig:sens_clean}
    \end{subfigure}
    \hfill
    \begin{subfigure}[b]{0.19\linewidth}
        \centering
        \includegraphics[width=\linewidth]{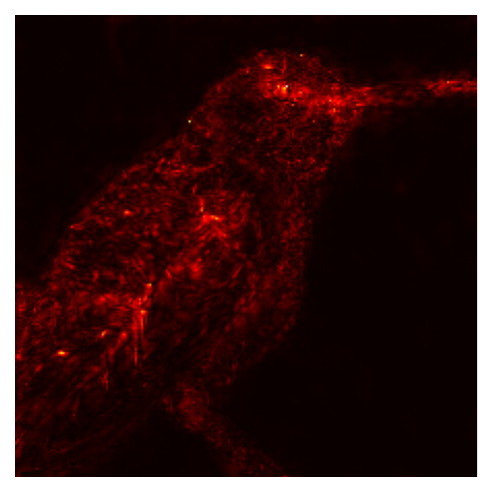}
        \caption{$\eps=0.5/255$}
        \label{fig:sens_eps0.5}
    \end{subfigure}
    \hfill
    \begin{subfigure}[b]{0.19\linewidth}
        \centering
        \includegraphics[width=\linewidth]{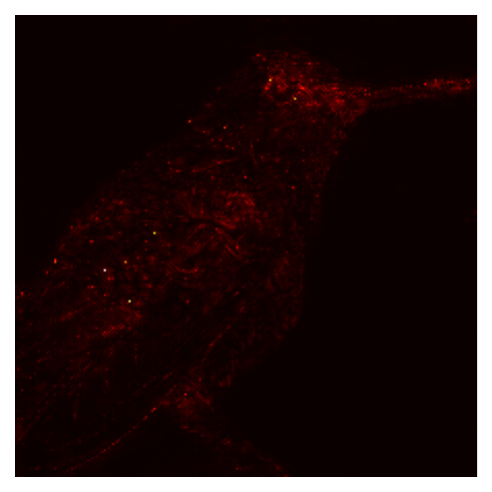}
        \caption{$\eps=1/255$}
        \label{fig:sens_eps1.0}
    \end{subfigure}
    \hfill
    \begin{subfigure}[b]{0.19\linewidth}
        \centering
        \includegraphics[width=\linewidth]{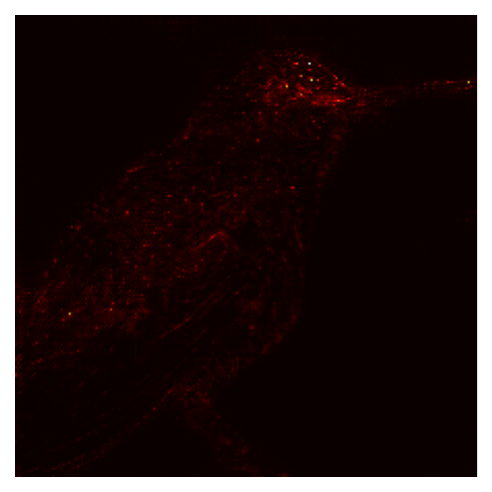}
        \caption{$\eps=2/255$}
        \label{fig:sens_eps2.0}
    \end{subfigure}
    \caption{Comparison of saliency maps for a naturally trained ResNet50 on ImageNet versus adversarially trained models across varying $\ell_\infty$ perturbation budgets $\eps$. As predicted by \cref{thm: main}, with decreasing adversarial budget~$\eps$ the gradient saliency maps converge to a sparse one whereas natural training (corresponding to $\eps=0$) produces a noisy saliency map.}    \label{fig:sensitivity_maps_varying_eps}
\end{figure}

%%%%%%%%%%%%%%%%%%%%%%%%%%%%%%%%%%%%%%%%%%%%%%%%%%%%%%%%%%%%%%%

Our main contributions are as follows:
\begin{itemize}[itemsep=0pt, topsep=0pt]
\item \cref{thm: main} states that as the number of data points and the width of the neural networks go to infinity and the adversarial budget and the weight decay coefficient go to zero at specific coupled rates, every sequence of empirical minimizers admits a subsequence that converges to a Bayes classifier with optimal balance of its Barron semi-norm with a TV functional of the form $\int \|\nabla f(x)\|_\star \de\mu(x)$, where $\|\cdot\|_\star$ is the dual norm to the one restricting adversarial attacks. 
From this, we deduce an explanation for the observed sparsity of saliency maps. 
    \item We support our theoretical findings with numerical experiments, demonstrating that adversarial training systematically reduces the average $\ell_1$-norm of the input gradients. 
    Furthermore, by evaluating thresholded gradient sparsity, we provide a quantitative empirical validation of the observations of \citet{zhang2019interpreting,etmann19connection} and confirm that this implicit $\ell_1$-regularization translates to the sparse saliency maps.
\end{itemize}

%%%%%%%%%%%%%%%%%%%%%%%%%%%%%%%%%%%%%%%%%%%%%%%%%%%%%%%%%%%%%%%
 
\begin{figure}[htb]
\centering%
\begin{tikzpicture}
    % Contour plot
    \node[anchor=south west, inner sep=0] (plot) at (0,0) {
 \includegraphics[width=\linewidth,trim=0.4cm 2cm 0.3cm 2cm,clip]{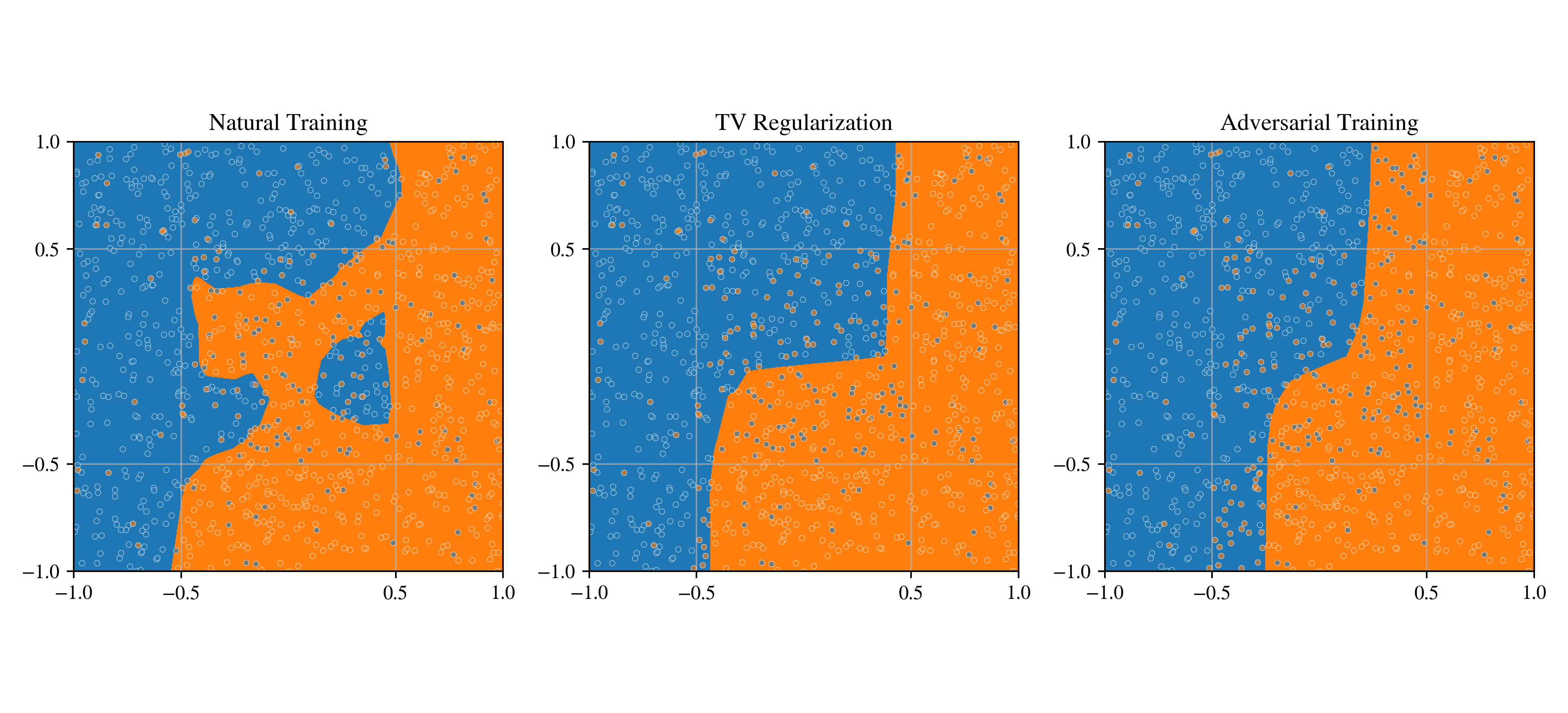}
    % };
%\includegraphics[width=\linewidth]{Figures/toy_cropped.png}
    };
% set these to 0 to revert to original figure
    % Gradient arrows
    \draw[->, very thick]
        (2.8,2.3) -- (2.4,2.7)
        node[midway, above right] {$\nabla f(x)$};    

    \draw[->, very thick]
        (7.95,2.8) -- (8.5,2.78)
        node[midway, above] {$\nabla f(x)$};    
    
    \draw[->, very thick]
        (12.25,2.8) -- (12.8,2.78)
        node[midway, above right] {$\nabla f(x)$};    

    % mark the points x
    \fill (2.8,2.3) circle (2pt)
        node[below left] {$x$};

    \fill (7.95,2.8) circle (2pt)
        node[below] {$x$};

    \fill (12.25,2.8) circle (2pt)
        node[below] {$x$};    
\end{tikzpicture}
 % \vspace{-2em}
    \caption{Decision boundaries and training data for a data distribution with non-uniquely determined Bayes classifier in the strip $[-0.5,0.5]^2$:
    Natural training (left) has a highly oscillatory decision boundary arising from fitting the training data. 
    Nonlocal TV regularization (center, analyzed here) and adversarial training (right) have a decision boundary with small anisotropic perimeter and exhibit approximate gradient sparsity, as indicated by the axis-parallel parts of the decision boundary. See \cref{sec: additional numerical experiments} for more details.}
    \label{fig:toy_model}
\end{figure}

%%%%%%%%%%%%%%%%%%%%%%%%%%%%%%%%%%%%%%%%%%%%%%%%%%%%%%%%%%%%%%%%%%

\section{Setting}
\label{sec: setting}

We study a binary classification problem, where the distribution of the training pairs $(x,y)\in\Omega\times\{0,1\}$ is described by some probability measure $\mu$ on $\Omega\times\{0,1\}$. In our setting, we assume that $\Omega\subset\R^d$ is a bounded domain with Lipschitz boundary. 
We define $\mu_i \coloneq \mu(\cdot\times\{i\})$ for $i\in\{0,1\}$, so $\frac{\mu_i}{\mu_i(\Omega)}$ is the probability distribution of the data points corresponding to the label $i$. 

\paragraph{Two-Layer Neural Networks.} 
In this work, we consider two-layer neural networks with $m$ hidden neurons of the form 
\begin{align}\label{eq:neural network}
    f_m(x) \coloneq b_0 + \sum_{i=1}^m a_i\sigma(w_i\cdot x+b_i),
    \qquad
    x\in\R^d,
\end{align}
for $\theta \coloneq (a,W,b) \in\R^m\times\R^{m\times d}\times\R^{m+1} =: \Theta_m$, where $w_i$ is the $i$-th row of the matrix $W$ and $\sigma$ denotes the ReLU activation function defined as $\sigma(t)\coloneq \max\{0,t\}$.
The matrix $W$ collects the weights of the hidden layer, while $(b_1,\dots,b_m)$ forms the bias vector. 
Additionally, $a$ denotes the output weights and $b_0$ acts as the output bias.
Often, when training neural networks, the weight decay regularizer
% \begin{equation}
% \label{eq:reg}
\(
R_{WD}(\theta) \coloneq \frac{1}{2} \sum_{i=1}^m \big(a_i^2 + \|w_i\|_2^2\big)
% \end{equation}
\)
is added to the objective, where $\|\cdot\|_2$ denotes the Euclidean norm.
Importantly, this functional does not control the bias vector~$b$, which is typical. 

%%%%%%%%%%%%%%%%%%%%%%%%%%%%%%%%%%%%%%%%%%%%%%

\paragraph{Barron space.}
Taking the infinite width limit \citep{bach2017breaking, e2022barron} of such neural networks, one ends up with functions of the form 
% \begin{align}
%     f_\pi(x) \coloneq \int_{\R\times\R^d\times\R} a\sigma(w\cdot x+b)\de \pi(a,w,b)
% \end{align}
\begin{align}\label{eq: homogeneous barron}
    f_{\pi,y}(x) \coloneq y + \int_{\R\times\R^d\times\R}a\big(\sigma(w\cdot x+b)-\sigma(b)\big)\de\pi(a,w,b),\qquad x\in\R^d,
\end{align}
where $y\in\R$ captures the function's value at the origin, and $\pi$ is a probability measure on $\R\times\R^d\times\R$. 
With a slight abuse of notation, the counterpart of the weight decay regularizer is given by
\[
    R_{WD}(\pi)\coloneq \frac{1}{2}\int_{\R\times\R^d\times\R}\big(a^2+\|w\|_2^2\big)\de\pi(a,w,b). 
\]
For a given function $f$ written as~\eqref{eq: homogeneous barron}, its representation via a measure $\pi$ is not unique and hence $R_{WD}(\pi)$ is not well-suited to measure the complexity of the function.
Instead, we define the Barron semi-norm $[f]_\B$ as the infimum of the weight decay regularizer over all possible representations, 
\begin{align}
    [f]_{\B(\Omega)} \coloneq \inf_{f_{\pi,y} = f \text{ on }\Omega} R_{WD}(\pi).
\end{align}
The \emph{homogeneous Barron space} $\B(\Omega)$ is then defined as the set of all functions that admit a representation $f = f_{\pi,y}$ with a finite Barron semi-norm, 
\begin{align}
    \B(\Omega)\coloneq \{f\colon\Omega\to\R\,\vert\,[f]_{\B(\Omega)} < \infty\}. 
\end{align}
For more details on this space and the previous definitions, we refer to \cref{sec: barron space}, \citet[Appendix C]{park2023minimum} and \citet{Ongie2020A}. 

\begin{remark}\label{rmk: nn}
    Neural networks of the form \eqref{eq:neural network} are contained in $\B(\Omega)$. Setting $y = b_0 + \sum_{i=1}^m a_i\sigma(b_i)$ and defining the probability measure $\pi = \frac{1}{m}\sum_{i=1}^m \delta_{\sqrt{m}(a_i,w_i,b_i)}$ yields $f_{\pi,y} = f_m$.
    %The scaling factor $\sqrt{m}$ compensates for the $1/m$ normalization of $\pi$ due to the positive $1$-homogeneity of the ReLU activation.
    Consequently, we get that $[f_m]_\B \leq R_{WD}(\pi) = R_{WD}(\theta)$.
\end{remark}

%%%%%%%%%%%%%%%%%%%%%%%%%%%%%%%%%%%%%%%%%%%%%%%%%%%%%%%%%%%%%%%%%%%%%%%

\paragraph{Adversarial Training.}
\emph{Adversarial training} aims to minimize the adversarial population risk which, in our binary classification setting, is given by
\begin{align}\label{eq:AT}
    \inf_{f\in\mathcal{H}} \int_{\Omega\times\{0,1\}}
    \sup_{\tilde x \in B_\eps(x)}\ell(f(\tilde x),y)\de\mu(x,y),
\end{align}
where $\ell:\R\times\{0,1\}$ is a loss-function, $\eps>0$ is the adversarial budget, and minimization takes place over a certain class of functions $\mathcal{H}$. 
Here and in the following, the balls $B_\eps(x)$ are taken with respect to an \emph{arbitrary} norm $\|\cdot\|$ on $\R^d$ (not necessarily the Euclidean norm) and by $\|\cdot\|_\star$, we denote its dual norm, defined via $\|y\|_\star\coloneq\sup_{\|x\|\leq1}\langle y,x\rangle$. 
In practice, the norm considered is often the $\ell_\infty$-norm \citep{madry2018toward, croce2021robustbench}, leading to the $\ell_1$-norm as the dual norm.  

% [from an older version, for safe-keeping]
% To tackle \eqref{eq:AT} computationally, one employs a stochastic gradient descent type scheme based on i.i.d. training data $(x,y)\sim\mu$ which are attacked by applying a few steps of projected gradient ascent to approximate the supremum.
% The worst-case structure of the objective function in \eqref{eq:AT} encourages robustness of the optimal classifier against attacks in an $\eps$-ball. 
% In fact, if a classifier achieves zero adversarial risk on a dataset, then it is certifiably robust against such attacks of this very dataset.\todo{could be removed}

%%%%%%%%%%%%%%%%%%%%%%%%%%%%%%%%%%%%%%%%%%%%%%

\paragraph{Total Variation Regularization.}
Let $\mathcal{H}$ be an hypothesis class of functions from $\Omega$ to $\R$. 
As is typical, 
%for any $f\in \mathcal{H}$ 
we define the \emph{population risk} as 
\begin{equation}
\label{eq:population-risk}
\mathcal{R}(f) \coloneq \int_{\Omega\times\{0,1\}}\ell(f(x),y)\de\mu(x,y),
\qquad f\in\mathcal{H} .
\end{equation}
Before introducing our primary empirical objective, we introduce the following motivating population-level problem
\begin{align}
\label{eq:Loss+TV}
\inf_{f\in\mathcal{H}} 
% \bigl\{ 
\mathcal{R}(f)
%\int_{\Omega\times\{0,1\}} \ell(f(x),y)\de\mu(x,y)
+\eps\TV_\eps(f;\mu)
% \bigr\} 
\, ,
\end{align}
where $\TV_\eps(f;\mu)$ is a \emph{nonlocal total variation} given by
\begin{align}
\label{eq: def TVeps}
    \TV_\eps(f;\mu) \coloneq 
    %\frac{1}{\eps}
    \int_\Omega 
    %\left(\sup_{B_\eps(x)\cap\Omega} f - f(x)\right) 
    \frac{\sup_{B_\eps(x)\cap\Omega} f - f(x)}{\eps}
    \mu_0(x)\de x + 
    %\frac{1}{\eps}
    \int_\Omega 
    %\left(f(x)-\inf_{B_\eps(x)\cap\Omega} f\right)
    \frac{f(x)-\inf_{B_\eps(x)\cap\Omega} f}{\eps}
    \mu_1(x)\de x.
\end{align}
Note that the objective in \eqref{eq:Loss+TV} is very similar to the TRADES method~\citep{zhang2019theoretically} and both coincide with adversarial training for the $\ell_1$-loss~\citep{bungert2023geometry}.
Intuitively, the total variation $\TV_\eps$ is closely connected to a nonlocal, $\eps$-dependent surface content of the decision boundary, and
quantifies the adversarial robustness of a classifier~\citep{bungert2024gamma}. 
One should think of~\eqref{eq:Loss+TV} as \textbf{loosely 
equivalent to population-level adversarial training} with budget~$\eps$, with strict mathematical equivalence for specific choices of loss functions and hypothesis classes. 
Our motivation for working with \eqref{eq:Loss+TV} instead of standard adversarial training \eqref{eq:AT} is that the former is amenable to our asymptotic analysis thanks to deep mathematical properties of the nonlocal total variation~$\TV_\eps$.

%%%%%%%%%%%%%%%%%%%%%%%%%%%%%%%%%%%%%%%%%%%%%%%%%%%%

\paragraph{Key optimization problem.}
In our setting, we are given $(X_1,Y_1), \dots, (X_n,Y_n) \sim \mu$, $n$ i.i.d. samples, and restrict our attention to $f=f_\theta$, where $f_\theta$ with $\theta\in\Theta_{m_n}$ belongs to the class of two-layer neural networks with ReLU activation and $m_n\in\N$ neurons \eqref{eq:neural network}. 
The population risk \eqref{eq:population-risk} is approximated, as usual, by the \emph{empirical risk} 
%with respect to these samples 
\begin{equation}
\label{eq: empirical risk}
\widehat{\mathcal{R}}_n(f) \coloneq \frac{1}{n}\sum_{i=1}^n \ell(f(X_i),Y_i)
\, ,
% \qquad
\end{equation} 
whereas the empirical counterpart of $\TV_\eps(f;\mu)$ is given by
\begin{equation}\label{eq: empirical TVeps}
\widehat{\TV}_{\eps_n,n} (f;\mu) 
\coloneq 
    \frac{1}{n}
\left(\sum_{i\,:\,Y_i=0 }
%\left(\sup_{B_{\eps_n}(X_i)\cap\Omega}f - f(X_i)\right) 
\frac{\sup_{B_{\eps_n}(X_i)\cap\Omega}f - f(X_i)}{\eps_n}
\!+ 
\!\sum_{i\,:\,Y_i=1}
%\left(f(X_i) - \inf_{B_{\eps_n}(X_i)\cap\Omega} f\right)
\frac{f(X_i) - \inf_{B_{\eps_n}(X_i)\cap\Omega} f}{\eps_n}
\right).
\end{equation} 
With these definitions in hand, the natural finite-sample version of \eqref{eq:Loss+TV} is 
\begin{equation}
\label{eq:intermediate-sample-tv}
\inf_{\theta\in\Theta_{m_n}} \widehat{\mathcal{R}}_n(f_\theta) + \eps_n \widehat{\TV}_{\eps_n,n}(f_\theta;\mu) 
\, .
\end{equation} 
% DGA: 'parameters might blow up' in the next sentence is too vague. what happens exactly? 
% LB: The space of ReLU networks is not closed under various notions of convergence, in particular, the weak-* topology on $L^\infty, under which TV_\eps is lower semicontinuous, see \citet{bungert2023geometry}. ;-) 
%To fit discontinuous training data, as arises in classification, you have to steepen the networks. I would not discuss any of this here.
% DGA: so the solutions for the population version, but we are now in the empirical setting, maybe a bit confusing for the reader? (unless this also happens in the finite sample world)
%LB: that is a good point, as far as I know there is no existence proof in the empirical setting either. Another motivation for weight decay is compactness. For us this is even more important.
Minimization problems over a space of neural networks usually do not possess solutions since parameters might have to blow up to fit the data.
Therefore, we add the usual weight regularization term $R_{WD}(\theta)$ defined previously which ensures existence as well as a mathematical compactness property in the Barron space $\B(\Omega)$, making sure that sequences of minimizers $(\theta_n)_{n\in\N}$ of problem \eqref{eq: full objective} below do not diverge. 
We want to stress that, while the weight decay term is usually not written explicitly as part of the objective in the practice of adversarial training, it is often implicitly present in the optimization procedure.\footnote{See, e.g., the adversarial training code of the Madry lab: \url{https://github.com/MadryLab/cifar10_challenge}} 
Letting the adversarial budget $\eps_n>0$ and the weight decay parameter $\lambda_n>0$ depend on~$n$, we arrive at the following optimization problem, which is at the core of our study: 
\begin{equation}
\label{eq: full objective}
\inf_{\theta\in\Theta_{m_n}} \widehat{\mathcal{R}}_n(f_\theta) + \eps_n \widehat{\TV}_{\eps_n,n}(f_\theta;\mu) + \lambda_n R_{WD}(\theta)
\, .
\end{equation}
%where $f_\theta$ denotes the neural network with $m_n$ neurons parametrized by the parameters $\theta\in\Theta_{m_n}$. 
In the same spirit as the previous paragraph, one should think of \eqref{eq: full objective} as loosely
equivalent to finite-sample adversarial training with budget~$\eps_n$ and weight decay~$\lambda_n$. 
We show in \cref{fig:toy_model,fig:robustness_bars,fig:l2_classifiers} for the toy dataset and in \cref{fig:robustness_bars_CIFAR10,fig:sparsity_comparison_CIFAR10} for CIFAR-10 that empirically they produce very similar results, with even a slight advantage for \eqref{eq: full objective} in terms of robust accuracy and sparsity on CIFAR-10. 

%%%%%%%%%%%%%%%%%%%%%%%%%%%%%%%%%%%%%%%%%%%%%%%%%%%%%%%%%%%%%%%%

%\paragraph{Main Result.}
\section{Main result}
\label{sec: main result}

% We now turn to our main result: 
In this section, we state our main result: under suitable scalings of the parameters $\eps_n,\lambda_n \to 0$ and $m_n\to\infty$, a sequence of solutions to \eqref{eq: full objective} converges in a weak sense to a Bayes classifier with specific properties. 
To make this precise, we first define 
\begin{align}
    \TV(f;\mu) \coloneq \int_\Omega \|\nabla f(x)\|_\star (\mu_0(x)+\mu_1(x)) \de x
\end{align}
for $f\in\B$, which arises as limit of $\TV_\eps$ defined in $\eqref{eq: def TVeps}$ as $\eps\to0$.
We note that this expression is well-defined for Barron functions, since they are Lipschitz-continuous (\Cref{lem:lipschitz_barron}) and in particular differentiable almost everywhere by Rademacher's theorem. 
We additionally define the functional 
\begin{align}\label{eq: def F}
    F(f) \coloneq \begin{dcases}
        \TV(f;\mu)+\alpha [f]_\B, & f\in\B \text{ and } \mathcal{R}(f) = \inf_\B \mathcal{R}, \\
        \infty, &\text{otherwise}. 
    \end{dcases}
\end{align} 
In the whole paper, we shall assume the existence of a Barron function $f^\star\in\B$ that minimizes the population risk $\mathcal{R}$, i.e., $\mathcal{R}(f^\star) = \inf_\B\mathcal{R}$. 
Otherwise, $F\equiv\infty$ and there is nothing to show.

Minimizers of the functional $F$ are optimal classifiers (i.e., minimizers of the population risk in \eqref{eq: empirical risk}) that additionally achieve the minimal value of the sum of the total variation $\TV(f;\mu)$ and the Barron semi-norm $\alpha[f]_\B$.
Hence, adversarial training acts as selection procedure to pick a robust Bayes classifier as the adversarial budget shrinks.
In a nutshell, \cref{thm: main} below states that the limit of solutions to \eqref{eq: full objective} is a minimizer of the functional~$F$. 

We make the following assumptions on the loss function $\ell$ in \eqref{eq: empirical risk}.
\begin{assumption}\label{assum: loss}
    We assume that the loss function $\ell:\R\times\{0,1\}\to[0,\infty)$ is measurable, bounded and Lipschitz continuous in the first argument. 
    Moreover, we assume that there exist constants $c,C > 0$ such that almost surely (w.r.t. the samples) $\widehat{\mathcal{R}}_n(f)-\inf\mathcal{R} \geq c$ for all but finitely many $n\in\N$ whenever $f \geq C$ on $\Omega$ or $f \leq -C$ on $\Omega$.
\end{assumption} 
The first part of \cref{assum: loss} is satisfied for, e.g., clipped versions of the cross-entropy, hinge loss and $\ell_1$-loss. 
The second part demands that trivial constant classifiers be strictly suboptimal which is true, e.g., for $\ell(\tilde{y},y) \coloneq |\tilde{y}-y|$. 
Then, for $f$ such that $f\geq 3$, we immediately get $\widehat{\mathcal{R}}_n(f) \geq 2$ while $\inf\mathcal{R} \leq \mathcal{R}(1) \leq 1$ yielding $\widehat{\mathcal{R}}_n(f) - \inf\mathcal{R} \geq 1$. 
Similar arguments hold for $f \leq -2$ and the statement obviously remains true if the loss function is clipped at a sufficiently large value. 

Additionally, we need to make assumptions about the data distribution $\mu$ \citep[cf.][]{siethoff2025existence} which enforce the class-conditionals to be sufficiently regular and the data density to be strictly positive.
\begin{assumption}\label{assum: mu}
    We assume that the measures $\mu_0$ and $\mu_1$ are absolutely continuous with respect to the Lebesgue measure.
    By slight abuse of notation, we denote their densities by $\mu_0$ and $\mu_1$ as well and assume that $\mu_i \in C(\overline{\Omega})$ for $i\in\{0,1\}$. 
    Moreover, we assume that $\mu_1+\mu_0 > 0$ on $\overline{\Omega}$.
\end{assumption} 

With these assumptions in place, we can state our main result. 

\begin{theorem}\label{thm: main}
    Let \cref{assum: loss,assum: mu} be satisfied. 
    Moreover, let $(\eps_n)_{n\in\N}$ be a sequence of positive numbers converging to zero and $\lambda_n > 0$, $m_n\in\N$ chosen such that 
    \begin{align}
\label{eq:limit-functional}
        \lim_{n\to\infty} \left(\frac{1}{\eps_n} \sqrt{\frac{\log(n)}{n}} + \frac{1}{\eps_n\sqrt{m_n}} + \frac{1}{\lambda_n\sqrt{n}}\right) =0,  \qquad  \alpha \coloneq \lim_{n\to\infty} \frac{\lambda_n}{\eps_n} \in (0,\infty). 
    \end{align}
    Let $\theta_n$ be a solution to \eqref{eq: full objective} and $f_n = f_{\theta_n}$ be the associated neural network. 
    Then, almost surely, the sequence $(f_n)$ possesses a subsequence that converges in $L^1(\Omega)$ to a minimizer of $F$. 
\end{theorem}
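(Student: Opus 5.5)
The plan is a $\Gamma$-convergence argument combined with compactness. I work with the rescaled functionals
\[
F_n(f) \coloneq \frac{1}{\eps_n}\Big(\widehat{\mathcal{R}}_n(f) - \inf_\B\mathcal{R}\Big) + \widehat{\TV}_{\eps_n,n}(f;\mu) + \frac{\lambda_n}{\eps_n}\, [f]_\B .
\]
By \cref{rmk: nn}, replacing $R_{WD}(\theta)$ with $[f_\theta]_\B$ is harmless. A minimizer $\theta_n$ of \eqref{eq: full objective} can always be rebalanced, using the $1$-homogeneity of ReLU ($a_i\mapsto t a_i$, $(w_i,b_i)\mapsto (w_i,b_i)/t$), to satisfy $R_{WD}(\theta_n)=\sum_i |a_i|\|w_i\|_2$. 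On $m_n$-neuron networks this value is comparable to the Barron seminorm up to the approximation issue treated below. The proof has three steps, carried out almost surely on a common probability-one event.

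\textbf{Step 1: a priori bounds and compactness.} I compare $f_n$ with a competitor built from $f^\star$. By Barron approximation (a quantitative result from \cref{sec: barron space}), there is an $m_n$-neuron network $g_n$ with $[g_n]_\B\le [f^\star]_\B$ and $\|g_n-f^\star\|_{L^\infty(\Omega)}\lesssim [f^\star]_\B/\sqrt{m_n}$. Lipschitz continuity of $\ell$ gives $\mathcal R(g_n)-\inf\mathcal R\lesssim m_n^{-1/2}=o(\eps_n)$. A uniform law of large numbers over Barron balls (Rademacher complexity $\lesssim$ radius$/\sqrt n$, plus a union bound and Borel--Cantelli, which produces the $\log n$) gives $|\widehat{\mathcal R}_n-\mathcal R|\lesssim \sqrt{\log n/n}=o(\eps_n)$ on that ball. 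Since $\widehat{\TV}_{\eps_n,n}(g_n)\le \Lip(g_n)\le C[f^\star]_\B$, we get $F_n(f_n)\le F_n(g_n)\le C$.

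From this bound, $\lambda_n[f_n]_\B\le C\eps_n$, so $[f_n]_\B$ is bounded because $\alpha>0$. This also lets me use the concentration bound for $f_n$ itself, although the condition $1/(\lambda_n\sqrt n)\to0$ is needed here to first localize $f_n$ to a ball of radius $\lesssim 1/\lambda_n$ using only $\lambda_n R_{WD}\le \widehat{\mathcal R}_n(0)$. The second part of \cref{assum: loss} rules out $f_n\ge C$ or $f_n\le -C$ everywhere. Together with the Lipschitz bound on a bounded domain, this bounds $\|f_n\|_\infty$. Barron balls with bounded offset are compact in $C(\overline\Omega)$ by Arzelà--Ascoli, and $[\cdot]_\B$ is lower semicontinuous under uniform convergence. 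Hence a subsequence converges uniformly, and in particular in $L^1$, to some $f\in\B$.

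\textbf{Step 2: liminf inequality and identification of the limit.} Multiplying $F_n(f_n)\le C$ by $\eps_n$ and using the concentration bound gives $\mathcal R(f_n)-\inf\mathcal R\to0$, so $\mathcal R(f)=\inf_\B\mathcal R$ by continuity of $\mathcal R$ under uniform convergence. For the TV part, I first replace $\widehat{\TV}_{\eps_n,n}(f_n)$ by the population quantity $\TV_{\eps_n}(f_n;\mu)$. The integrand $(\sup_{B_{\eps}(x)\cap\Omega}f-f(x))/\eps$ is bounded by $\Lip(f)$, and the class of these integrands over a Barron ball has complexity controlled by that of the ball itself, since $\sup$ over a ball is a $1$-Lipschitz operation in $L^\infty$. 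This gives an error of order $\sqrt{\log n/n}/\eps_n\to0$, which is exactly the first rate condition. Next, the liminf inequality $\liminf \TV_{\eps_n}(f_n;\mu)\ge \TV(f;\mu)$ for $f_n\to f$ holds by the $\Gamma$-convergence results of \citet{bungert2024gamma}, which rely on continuity of $\mu_i$ (\cref{assum: mu}). Alternatively, I can argue directly: for a.e. $x$ and fixed direction $v$ with $\|v\|\le1$, $(f_n(x+\eps_n v)-f_n(x))/\eps_n$ tested against smooth nonnegative weights converges to $\nabla f\cdot v$ by uniform Lipschitz bounds and integration by parts, and then one takes a supremum over a countable dense set of measurable direction fields. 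Combined with lower semicontinuity of $[\cdot]_\B$ and $\lambda_n/\eps_n\to\alpha$, this yields $F(f)\le\liminf F_n(f_n)$.

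\textbf{Step 3: recovery sequence and conclusion.} For any $g$ with $F(g)<\infty$, I take $m_n$-neuron approximants $g_n$ as in Step 1. Then $F_n(g_n)\le o(1)+\widehat{\TV}_{\eps_n,n}(g_n)+\frac{\lambda_n}{\eps_n}[g]_\B$. The TV term is again replaced by $\TV_{\eps_n}(g_n;\mu)$, and I need $\limsup\TV_{\eps_n}(g_n;\mu)\le\TV(g;\mu)$. Since $|\TV_{\eps}(g_n)-\TV_\eps(g)|\le 2\|g_n-g\|_\infty/\eps\lesssim 1/(\eps_n\sqrt{m_n})\to0$ (the second rate condition), it suffices to prove the pointwise limsup $\limsup_{\eps\to0}\TV_\eps(g;\mu)\le\TV(g;\mu)$ for a fixed Lipschitz $g$. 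This is a standard fact, by dominated convergence at differentiability points, using that $\Omega$ is Lipschitz so that boundary layers have vanishing measure. Minimality then gives $F(f)\le\liminf F_n(f_n)\le\limsup F_n(g_n)\le F(g)$, so $f$ minimizes $F$.

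I expect the main obstacle to be the uniform concentration of the nonlocal, sup-type empirical TV over growing Barron balls, at the precise rate $\sqrt{\log n/n}/\eps_n$ and holding almost surely, together with the Lipschitz-boundary technicalities in the liminf inequality.
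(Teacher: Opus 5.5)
Your architecture is the paper's. A competitor built from the $L^\infty$ Barron approximation of $f^\star$ gives $F_n(f_n)\le C$. Note that minimality is for $R_{WD}$, so you need $R_{WD}(\theta_{g_n})\le[f^\star]_\B$, which \cref{lem:nn_approx_in_l_infty} provides, not merely $[g_n]_\B\le[f^\star]_\B$. Then come compactness, a liminf inequality, and the same recovery sequence as in \cref{prop: limsup AT}, combined with the pointwise limit of \cref{lem: limsup TV Barron}. Several local steps differ from the paper and are fine:
\begin{itemize}
\item Arzel\`a--Ascoli plus \cref{lem: semi norm lower semicontinuous} replaces the cited compactness lemma of Park et al., and even gives uniform convergence.
\item Reading off $\mathcal R(f)=\inf_\B\mathcal R$ directly from $\widehat{\mathcal R}_n(f_n)-\inf_\B\mathcal R\le C\eps_n$ replaces the case analysis in \cref{prop: liminf AT}.
\item Your direction-field argument proves $\liminf_n\TV_{\eps_n}(f_n;\mu)\ge\TV(f;\mu)$ far more cheaply than the paper's route through the coarea formula, \cref{lem: per compactness} and Reshetnyak's theorem. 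You bound the supremum below by $f_n(x+\eps_n v_j)$ with $\|v_j\|<1$, against nonnegative continuous weights compactly supported in $\Omega$. You then pass to the limit by a discrete integration by parts and take the supremum over a countable family. This needs the $f_n$ to be uniformly Lipschitz, which your compactness step supplies; the paper's version covers arbitrary $L^1$-convergent sequences.
\end{itemize}

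Two steps do not work as written. First, you rightly notice that the risk term of $F_n$ can be negative; the paper's proof of \cref{prop: compactness} passes over this. But a single generalization bound on the ball of radius $C/\lambda_n$ only gives an error $C/(\eps_n\lambda_n\sqrt n)$ in $F_n$. That error is unbounded for the admissible choice $\eps_n=\lambda_n=n^{-1/3}$. You need a peeling version, valid on one a.s.\ event for all dyadic radii up to $C/\lambda_n$:
\[
\mathcal R(f)-\widehat{\mathcal R}_n(f)\le C(1+[f]_\B)/\sqrt n+C\sqrt{\log n/n}.
\]
Here the offset is controlled by $|f_n(0)|\lesssim 1+[f_n]_\B$ via \cref{assum: loss}, which only uses $\widehat{\mathcal R}_n(f_n)-\inf_\B\mathcal R\le C\eps_n$ and connectedness of $\Omega$. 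Comparing with the competitor then yields $(\lambda_n-C/\sqrt n)[f_n]_\B\le C\eps_n+C\sqrt{\log n/n}$. The condition $\lambda_n\sqrt n\to\infty$ absorbs the error into the weight decay, and this absorption is where that condition is really used, not in the localization.

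Second, your claim that the sup-type integrands inherit the complexity of the Barron ball does not hold for Rademacher complexity. You justify it by the supremum being $1$-Lipschitz in $L^\infty$, but $\sup_{B_\eps(X_i)}f$ depends on $f$ away from the sample. The Lipschitz property only transfers sup-norm covering numbers. With the elementary Maurey-type $L^\infty$ entropy of Barron balls, Dudley's integral gives roughly $Q(\log n)^{3/2}/(\eps_n\sqrt n)$, which $\sqrt{\log n/n}/\eps_n\to0$ does not control.

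You can avoid uniform control of the sup-class altogether:
\begin{itemize}
\item In the liminf only a lower bound is needed, so apply your direction-field bound already at the empirical level. The quotient $\bigl(f(X_i+\eps_n v_j)-f(X_i)\bigr)/\eps_n$ is $1/\eps_n$ times a difference of two elements of a Barron ball, since a shifted ReLU neuron is again a ReLU neuron. So the standard bound $CQ/(\eps_n\sqrt n)$ plus McDiarmid applies for each of countably many pairs $(v_j,\chi_j)$.
\item In the recovery step $g_n$ is deterministic. Hoeffding for i.i.d.\ terms bounded by $C[g]_\B$ therefore handles both $\widehat{\TV}_{\eps_n,n}(g_n)$ and $\widehat{\mathcal R}_n(g_n)$, provided you run it for a countable family of near-minimizers $g$.
\end{itemize}
This also sidesteps \cref{lem: Rademacher bound 2}, whose proof multiplies $\sup\int\le\int\sup$ by the Rademacher sign $\sigma_i$, which fails when $\sigma_i=-1$.
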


Note that a similar result was shown by \citet{bungert2024gamma} and quantified by \citet{morris2026uniform}, however, in a non-parametric population risk setting where the Barron norm $[f]_\B$, that we need to control the parameters of the neural networks, does not appear. 

\begin{remark}
    We note that there exist many hyperparameter configurations satisfying the scaling assumptions.
    One may, for example, choose $m_n = n$, and $\lambda_n = \eps_n = n^{-\frac{1}{3}}$, leading to $\alpha = 1$. 
\end{remark}

\begin{remark}\label{rem:approximate_min}
    While \Cref{thm: main} is stated for minimizers of \eqref{eq: full objective}, it is actually sufficient to consider quasi-minimizers as long as the gap between the achieved objective and the optimal value vanishes sufficiently fast as $n\to\infty$. Precisely, \Cref{thm: main} also holds for $\theta_n$ with
    \(
        \mathcal{L}_n(\theta_n) - \inf_{\theta\in\Theta_{m_n}}\mathcal{L}_n(\theta) = o(\eps_n), 
    \)
    where $\mathcal{L}_n$ denotes the objective in \eqref{eq: full objective}:
    \begin{align}
    \mathcal{L}_n(\theta) \coloneq \widehat{\mathcal{R}}_n(f_{\theta}) + \eps_n\widehat{\TV}_{\eps_n,n}(f_{\theta};\mu) + \lambda_n R_{WD}(\theta).
    \end{align}
\end{remark}

\paragraph{Connection to Sparse Saliency Maps.}

The standard choice for the norm in adversarial training is the $\ell_\infty$-norm with the $\ell_1$-norm as the corresponding dual norm. 
In this particular setting, \Cref{thm: main} states that the minimizers of \eqref{eq: full objective} converge (up to a subsequence) to an optimal classifier minimizing
\begin{align}
    \int_\Omega \|\nabla f(x)\|_1 (\mu_0(x)+\mu_1(x)) \de x + \alpha [f]_\B. 
\end{align}
To interpret the effect of this regularizer, it is instructive to first consider a linear classifier $f(x) = w\cdot x + b$. 
In this regime, the gradient is constant with respect to the input ($\nabla f(x) = w$), and the integral reduces to $\|w\|_1$. 
Furthermore, its Barron semi-norm is $[f]_\B=\|w\|_2$
%\todo[inline]{double-check \\YL: I am not sure about the square and on a bounded domain one might get rid of the coefficient.\\LB: sorry, the square was a typo. I also agree with your first point, writing $f(x)=w\cdot x+b = \frac{1}{c}\sigma(c(w\cdot x + b - m))+m$ where $c>0$ is arbitrary and $m$ is chosen such that $w\cdot x+b-m>0$ for all $x\in\Omega$.\\YL: Exactly, I think on $\R^d$, we have $[f]_\B = 2\|w\|_2$ and on $\Omega$, we have $[f]_\B = \|w\|_2$. LB: Yes} 
and the regularizer reduces to the elastic-net-type regularization $\|w\|_1+\alpha\|w\|_2$ which is known to be sparsity-promoting, e.g., if the loss function is the squared $\ell_2$-loss \citep{zou2005regularization}.
\\
In the general non-linear case, the objective imposes an \textit{average} $\ell_1$-penalty of the gradient weighted by the data distribution plus the Barron semi-norm.
The former can be interpreted as group $\ell_{1,1}$-norm where a group is the array $\nabla f(x)$ for a data point $x$.
Hence, the regularizer promotes group-sparsity in the sense that for many data points the array $\nabla f(x)$ is the zero array and for the remaining data points (the ones close to or on the decision boundary) it is sparse; see \cref{fig:toy_model} for an illustration.

%%%%%%%%%%%%%%%%%%%%%%%%%%%%%%%%%%%%%%%%%%%%%%%%%%%%%%%

\paragraph{Limitations and Extensions.}
\subparagraph{Limiting Cases for $\alpha$}
We note that we restrict our analysis to the case, where $\alpha$ is positive but finite, meaning that  weight decay and adversarial regularization are asymptotically equivalent. 
%equally strong. 
On a technical level, this enables us to argue that sequences of minimizers of \eqref{eq: full objective} are compact. 
While the case $\alpha = \infty$, corresponding to $\lambda_n \gg \eps_n$ essentially boils down to the setting considered in \citet{park2023minimum} without the total variation regularization, the case $\alpha = 0$ presents technical challenges. 
Specifically, in \Cref{prop: limsup AT} one would need to approximate general $BV$ functions with neural networks and not only Barron functions. 
Bounding the approximating sequence's Barron semi-norm in this case is expected to require a dimension-dependent scaling of $\eps_n$ relative to $n$. 

%\paragraph{Multiclass Classification.}
\subparagraph{Multiclass Classification}
In a classification problem with more than two different classes, which means that labels can take values in $\{1,\dots,C\}$ for some $C\in\N$, we need to consider functions $f:\Omega\to\R^C$, and the nonlocal total variation takes the form~\citep{garciatrillos2026existence}
\begin{align}
    \TV_\eps(f;\mu) \coloneq \frac{1}{\eps}\sum_{c=1}^C \int_\Omega \left(f_c(x) - \inf_{B_\eps(x)\cap\Omega} f_c\right) \mu_c(x)\de x,
\end{align}
where $\mu_c$ describes the distribution of data points $x$ with label $c$ and $f_c$ denotes the $c$-th component of the vector-valued function $f$. 
% Obviously, the empirical counterpart is given by  
% \begin{align}
%     \widehat{\TV}_{\eps,n}(f;\mu) \coloneq \frac{1}{n\eps}\sum_{c=1}^C \sum_{Y_i=c}\left( f_c(X_i) - \inf_{B_\eps(X_i)\cap\Omega} f_c \right) 
% \end{align}
% and the limit functional in this case is 
\\
The limit functional in this case is
\(
    \TV(f;\mu) \coloneq \sum_{c=1}^C\int_\Omega \|\nabla f_c(x)\|_\star \mu_c(x) \de x
\)
for sufficiently regular functions $f$.
We expect our theory to extend to this setting, provided that the neural network approximation result (\Cref{lem:nn_approx_in_l_infty}) and the bounds for the Rademacher complexity (\Cref{lem:Rademacher bound,lem: Rademacher bound 2}) are generalized to vector-valued Barron functions.

%\paragraph{General Activation Functions.}
\subparagraph{General Activation Functions}
The specific structure of the ReLU activation function enters at different occasions into our analysis, for example, when proving lower semi-continuity of the Barron semi-norm (\Cref{lem: semi norm lower semicontinuous}).
However, we believe that the theory can be extended to general Lipschitz-continuous activation functions if one is willing to include the bias term into the regularizer which is favorable for showing compactness in the Barron space. 

%\paragraph{Deep Networks.}
\subparagraph{Deep Networks}
We only consider neural networks with one hidden layer, and these can be embedded into the Barron space. 
For deeper feedforward architectures such an analytical framework does not exist to this date.
A notable exception are deep ResNets since they can be embedded into the flow-induced Barron space \citep{e2022barron} which supports approximation by neural networks and Rademacher complexity bounds.
We expect that our results can be generalized to such architectures.

\subparagraph{Original Adversarial Training}
%\paragraph{Original Adversarial Training.}
Our objective in \eqref{eq: full objective} only coincides with the ``original'' adversarial training of \citet{madry2018toward} for the zero-one loss \citep{bungert2023geometry}. 
For other loss functions, \eqref{eq: full objective} is a variant of adversarial training that is interesting in its own right, just like the TRADES method \citep{zhang2019theoretically}.
As mentioned before, the results achieved by this method (see \cref{fig:toy_model,fig:robustness_bars,fig:robustness_bars_CIFAR10,fig:sparsity_comparison_CIFAR10}) are very similar to the results produced by adversarial training, even though we use the cross-entropy loss in these experiments. 
Extending our analysis to Madry's adversarial training seems challenging, but we expect the corresponding regularizer to have the similar form $\sum_{c=1}^C\int_\Omega\|\nabla_x\ell(u(x),c)\|_\star\mu_c(x)\de x$ \citep[cf.][]{siethoff2025existence,finlay2021scaleable}.

%%%%%%%%%%%%%%%%%%%%%%%%%%%%%%%%%%%%%%%%%%%%%%%%%%%%%%%%%%%%%%%%%%%%%%%%%%%%%%%

\section{Related Work}
\label{sec: related_work}

\textbf{Adversarial Robustness and Interpretability}
A growing body of literature highlights the connection between a model's robustness and the interpretability of its saliency maps. 
Early observations by \citet{tsipras2018robustness} noted a tradeoff between adversarial and standard accuracy, while also revealing that the saliency maps of adversarially trained models (especially with $\ell_\infty$-restricted attacks) align well with perceptually relevant features.
This was further supported by \citet{zhang2019interpreting}, who empirically show that adversarially trained CNNs produce sparser saliency maps that focus more strongly on object shapes and contours. 
Theoretical justifications have been proposed by \citet{etmann19connection,chalasani2020concise} for linear classifiers. 
More recently, analyzing linear classifiers as well, \citet{bhusal2026training} observed that while adversarial training enhances the sparsity and input-stability of saliency maps it can compromise output-side stability.

\textbf{Gradient Regularization} 
\citet{ross2018improving} demonstrated that direct input-gradient regularization not only improves adversarial robustness but also naturally improves model interpretability. 
Building on this, \citet{gong2024structured} proposed replacing fixed-budget adversarial training with a penalty-based approach, utilizing a regularizer to induce desired properties in the gradient with respect to the input.
The TRADES method \citep{zhang2019theoretically} involves regularization with a nonlocal loss-gradient as modification of adversarial training \citep{madry2018toward}.

\textbf{Theoretical Foundations}
\citet{bungert2023geometry} formalized the equivalence of adversarial training to regularization with the nonlocal total variation functional $\TV_\eps$ (see \eqref{eq:Loss+TV}) for the $\ell_1$-loss and gave an existence proof. 
Subsequently, \citet{bungert2024gamma} established Gamma-convergence of this functional. 
\citet{siethoff2025existence} proved existence of solutions for \eqref{eq:Loss+TV} with general loss functions.
Furthermore, \citet{yeh_et_al_2019} and \citet{garcia2022regularized} both provide an upper bound for the adversarial risk in terms of a regularization involving the dual\footnote{dual with respect to the norm used to constrain the adversarial perturbations} norm of the gradient with respect to the input.
Finally, \citet{park2023minimum} leveraged Gamma-convergence to analyze the training with weight decay of two-layer neural networks with ReLU activation, operating in a similar setting to ours.

\section{Sketch of Proof}
\label{sec: proof details}

In this section we provide an overview of the proof of \cref{thm: main}. 
We start by defining 
\begin{align}\label{eq: def F_n}
    F_n(\theta) \coloneq \frac{\widehat{\mathcal{R}}_n(f_\theta)-\inf_\B\mathcal{R}}{\eps_n} + \widehat{\TV}_{\eps_n,n}(f_\theta;\mu) + \frac{\lambda_n}{\eps_n} R_{WD}(\theta)
\, ,
\end{align}
and observe that the minimizers of this functional coincide with the solutions to \eqref{eq: full objective}. 
We divided the objective in \eqref{eq: full objective} by $\eps_n$ for otherwise one would only be able to prove that its minimizers converge to Bayes classifiers, without recovering the selection principle encoded in the functional $F$ from \eqref{eq: def F}.

To prove \cref{thm: main}, we will show that these functionals Gamma-converge \citep{braides2002gamma} to the functional $F$ defined in \eqref{eq:limit-functional} (see \cref{thm: gamma} for the precise statement). 
Gamma-convergence is a notion of convergence that together with a corresponding compactness result---that we provide in \Cref{prop: compactness} for our case---implies that minimizers of the functionals $F_n$ converge to minimizers of the limit functional $F$. 
Formally, establishing Gamma-convergence requires two complementary conditions: the $\liminf$ inequality and the $\limsup$ inequality. 

The $\liminf$ implies that, if we have a sequence of minimizers of $F_n$ that is to known to converge to some limit, this limit has a sufficiently small value of $F$.
\begin{proposition}\label{prop: liminf AT}
    Let \cref{assum: loss,assum: mu} be satisfied.
    For $n\in\N$ let $\theta_n\in\Theta_{m_n}$ such that $f_n := f_{\theta_n}$ converges to a function $f\in L^1(\Omega)$ in $L^1(\Omega)$.
    Moreover, let $(\eps_n)_{n\in\N}$ and $(\lambda_n)_{n\in\N}$ be sequences of positive numbers converging to zero satisfying
    \begin{align}
        \limsup_{n\to\infty}\frac{1}{\eps_n} \sqrt{\frac{\log(n)}{n}} < \infty, \qquad \lim_{n\to\infty} \frac{1}{\lambda_n\sqrt{n}} = 0, \qquad \alpha \coloneq \lim_{n\to\infty} \frac{\lambda_n}{\eps_n} \in (0,\infty). 
    \end{align}
    Then it holds
    \(
        F(f) \leq \liminf_{n\to\infty} F_n(\theta_n)
    \)
    almost surely (with respect to the drawing of samples).
\end{proposition}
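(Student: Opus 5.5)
We may assume $L\coloneq\liminf_n F_n(\theta_n)<\infty$ and, after passing to a subsequence, that the $\liminf$ is a limit. All three terms of $F_n$ are then bounded. The first term needs care, because $\widehat{\mathcal{R}}_n(f_{\theta_n})-\inf_\B\mathcal{R}$ may be negative. To control it, use the weight decay term: $\frac{\lambda_n}{\eps_n}R_{WD}(\theta_n)\le L+1+\sup_n(\cdot)$, and \cref{rmk: nn} gives $[f_n]_\B\le R_{WD}(\theta_n)$.

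To make this non-circular, first bound the negative part of the risk term. A uniform deviation bound over Barron balls (the Rademacher bounds \cref{lem:Rademacher bound,lem: Rademacher bound 2}, combined with boundedness and Lipschitz continuity of $\ell$) gives, almost surely,
\[
\sup_{[f]_\B\le R,\ |f(0)|\le C'}|\widehat{\mathcal{R}}_n(f)-\mathcal{R}(f)|\lesssim (R+1)/\sqrt n+\sqrt{\log n/n}.
\]
\cref{assum: loss} keeps $f_n$ from being uniformly above $C$ or below $-C$, since otherwise the risk term would be at least $c/\eps_n\to\infty$. By Lipschitz continuity of Barron functions (\cref{lem:lipschitz_barron}) on bounded $\Omega$, this controls the offset in terms of $[f_n]_\B$. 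Then
\[
\frac{\widehat{\mathcal{R}}_n(f_n)-\inf\mathcal{R}}{\eps_n}\ge -\frac{C(1+[f_n]_\B)}{\eps_n\sqrt n}-\frac{C}{\eps_n}\sqrt{\frac{\log n}{n}}.
\]
Since $\lambda_n\sqrt n\to\infty$, the negative linear term is absorbed by $\frac{\lambda_n}{\eps_n}[f_n]_\B$. The remaining term is bounded by the first scaling hypothesis. Hence $\sup_n[f_n]_\B<\infty$, and the risk term is bounded below.

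\textbf{Three limits.} With this Barron bound in hand, I would take limits of the three terms separately.

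\emph{Bayes property.} Multiply by $\eps_n$: $\widehat{\mathcal{R}}_n(f_n)-\inf\mathcal{R}\le C\eps_n\to0$. The uniform deviation bound gives $\mathcal{R}(f_n)-\widehat{\mathcal{R}}_n(f_n)\to0$. Next, $f_n\to f$ in $L^1$ with uniformly bounded Lipschitz constants and offsets, so by Arzelà–Ascoli the convergence is uniform on $\overline\Omega$. Together with Lipschitz continuity of $\ell$ this gives $\mathcal{R}(f)\le\inf\mathcal{R}$. Lower semicontinuity of the Barron seminorm (\cref{lem: semi norm lower semicontinuous}) gives $f\in\B$ with $[f]_\B\le\liminf[f_n]_\B$. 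Therefore $f$ is a Bayes classifier in $\B$.

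\emph{Weight decay term.} $\liminf\frac{\lambda_n}{\eps_n}R_{WD}(\theta_n)\ge\alpha\liminf[f_n]_\B\ge\alpha[f]_\B$. For the final bound we also need the risk term to have $\liminf\ge0$. This follows from the lower bound above, whose right-hand side tends to $0$ as soon as $\frac1{\eps_n}\sqrt{\log n/n}\to0$; I expect that limit is needed here (or an $o$-version of the $\limsup<\infty$ hypothesis). Alternatively, a sharper deviation estimate for the fixed limit function may suffice.

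\emph{Total variation term.} This is the main obstacle. The step is to show $\liminf\widehat{\TV}_{\eps_n,n}(f_n;\mu)\ge\TV(f;\mu)$. The plan has two parts.

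First, replace the empirical average by the population one. The integrands $g_n(x)=(\sup_{B_{\eps_n}(x)\cap\Omega}f_n-f_n(x))/\eps_n$ are bounded by the Lipschitz constant, uniformly in $n$. One needs a uniform law of large numbers over the class $\{(\sup_{B_\eps(\cdot)}f-f)/\eps:[f]_\B\le R\}$, with error $O(\eps_n^{-1}\sqrt{\log n/n})$. I would get this from Rademacher complexity of $\sup$-compositions, where sup over a ball is controlled by a covering argument producing the $\log n$. This reduces the problem to $\liminf\TV_{\eps_n}(f_n;\mu)$.

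Second, obtain the population $\liminf$ in the style of \citet{bungert2024gamma}. By continuity of the densities we may localize. For a.e. $x$ and any unit vector $v$ with $\|v\|\le1$,
\[
\frac{\sup_{B_{\eps}(x)}f_n-f_n(x)}{\eps}\ge\frac{f_n(x+\eps v)-f_n(x)}{\eps}.
\]
Test against a nonnegative $\varphi\in C_c(\Omega)$ and a smooth vector field $v(x)$ with $\|v\|\le1$. Translate and then pass to the limit using uniform convergence and Lipschitz bounds. The difference quotient integrated against $\varphi\mu_0$ converges to $\int\nabla f\cdot v\,\varphi\mu_0$, via integration by parts on the difference quotient. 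Taking the supremum over $v$ and $\varphi$ yields $\int\|\nabla f\|_\star\mu_0$. The infimum part is analogous with $\mu_1$.

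Summing the three $\liminf$s gives $F(f)\le\liminf F_n(\theta_n)$.
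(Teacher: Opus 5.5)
Your handling of the risk term is more careful than the paper's. The paper's proof drops $\bigl(\widehat{\mathcal R}_n(f_n)-\inf_\B\mathcal R\bigr)/\eps_n$ as though it were nonnegative: in the Bayes case, in the case $f\notin\B$, and again in \cref{prop: compactness}. Your absorption of $C(1+[f_n]_\B)/(\eps_n\sqrt n)$ into $\frac{\lambda_n}{\eps_n}[f_n]_\B$ via $\lambda_n\sqrt n\to\infty$ is what actually justifies $\sup_n[f_n]_\B<\infty$. Since $[f_n]_\B$ is data-dependent, the deviation bound must hold simultaneously for all radii; peeling over dyadic $R$ with a union bound does this. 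Your worry about $\liminf\ge0$ is also justified. The Borel--Cantelli bound with $\delta=n^{-2}$ leaves a negative part of order $\eps_n^{-1}\sqrt{\log n/n}$. This vanishes under the hypothesis of \cref{thm: main}, but not under the literal $\limsup<\infty$ assumption. The paper does not resolve this either; doing so would need an almost-sure uniform deviation rate $o(\sqrt{\log n/n})$.

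The step that does not go through as written is replacing $\widehat{\TV}_{\eps_n,n}$ by $\TV_{\eps_n}$. Under the proposition's hypotheses, an error $O\bigl(\eps_n^{-1}\sqrt{\log n/n}\bigr)$ is only bounded, so it does not reduce the problem to $\liminf_n\TV_{\eps_n}(f_n;\mu)$. Any bound in which a logarithm multiplies $1/(\eps_n\sqrt n)$ has the same defect.

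What is needed, and what the paper uses in \cref{lem: empirical TV convergence}, is the rate $O\bigl(\frac{1}{\eps_n\sqrt n}+\sqrt{\log n/n}\bigr)$. The concentration term carries no $1/\eps_n$, because the integrands are bounded by the uniform Lipschitz constant, as you observed. The symmetrization term is $O(Q/(\eps_n\sqrt n))$ with no logarithm. Then $\frac{1}{\eps_n\sqrt n}=\frac{\lambda_n}{\eps_n}\cdot\frac{1}{\lambda_n\sqrt n}\to0$, which is where $\lambda_n\sqrt n\to\infty$ enters.

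The simplest way for you to get this rate is to apply your test-field inequality before averaging. For fixed $\varphi\in C_c(\Omega)$ with $0\le\varphi\le1$, smooth $v$ with $\|v\|\le1$, and $n$ large,
\[
\widehat{\TV}_{\eps_n,n}(f_n;\mu)\ge\frac1n\sum_{i:Y_i=0}\varphi(X_i)\,\frac{f_n(X_i+\eps_nv(X_i))-f_n(X_i)}{\eps_n}+\bigl(\text{analogous }Y_i=1\text{ term}\bigr).
\]
The right-hand side is linear in $f_n$, so its supremum over a Barron ball reduces to single neurons. Contraction plus McDiarmid then give a deviation $O\bigl(\frac{Q}{\eps_n\sqrt n}+Q\sqrt{\log n/n}\bigr)$ with probability $1-O(n^{-2})$. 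A countable dense family of pairs $(\varphi,v)$ and integer radii $Q$ handles the almost-sure quantifier. This is exactly the one-sided estimate a $\liminf$ needs. It also avoids any complexity bound for the nonlinear class $\{\sup_{B_\eps(\cdot)\cap\Omega}f\}$.

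With this repair your route is a genuine alternative to the paper's. The paper proves the population $\liminf$ for arbitrary $BV$ sequences. It uses the coarea formula, compactness and $\liminf$ for the nonlocal perimeter (\cref{lem: per compactness,lem:liminf perimeter}), and Reshetnyak lower semicontinuity, and it gets the Bayes property by case distinction and contradiction. You instead exploit the uniform Lipschitz bound supplied by weight decay. Difference quotients along smooth fields then converge by a change of variables and weak-$*$ convergence of $\nabla f_n$. Integration by parts onto $\varphi\mu_0$ would need an extra smoothing step, since $\mu_0$ is only continuous. Duality for $\|\cdot\|_\star$ then yields $\int\|\nabla f\|_\star\mu_i$. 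This is much shorter and avoids geometric measure theory, but unlike the paper's lemma it does not cover sequences that are merely bounded in $BV$.
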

The proof of this result, which we carry out in \cref{sec: proof of propositions}, combines Rademacher-based generalization bounds for the empirical risk established in \cref{sec: generalization bounds} with a corresponding inequality for the regularizing term (empirical nonlocal total variation and weight decay) which we prove in \cref{sec: regularizer}. 
This analysis is achieved by combining the properties of the population level functional $\TV_\eps$ (\cref{sec: nonlocal per and tv}) with uniform concentration bounds for its empirical version (\cref{sec: generalization bounds}).

The $\limsup$ inequality complements the $\liminf$ inequality by constructing a sequence that achieves the bound established there.
This will be used to show that the limit of a sequence of minimizers of $F_n$ is actually a minimizer of $F$. 
\begin{proposition}\label{prop: limsup AT}
    Let \cref{assum: mu} be satisfied. 
    Let $f\in L^1(\Omega)$, $(\eps_n)_{n\in\N}$ and $(\lambda_n)_{n\in\N}$ be sequences of positive numbers converging to zero satisfying
    \begin{align}
        \lim_{n\to\infty}\frac{1}{\eps_n} \sqrt{\frac{\log(n)}{n}} = 0, \qquad \lim_{n\to\infty}\frac{1}{\eps_n \sqrt{m_n}} = 0, \qquad \alpha \coloneq\lim_{n\to\infty}\frac{\lambda_n}{\eps_n} \in (0,\infty). 
    \end{align}
    Then, there exists a sequence $(\theta_n)$ with $\theta_n \in\Theta_{m_n}$ such that the sequence $f_n\coloneq f_{\theta_n}$ converges to $f$ in $L^1(\Omega)$ and 
    \(
        \limsup_{n\to\infty} F_n(\theta_n) \leq F(f)
    \)
    almost surely (with respect to the drawing of samples). 
    Such a sequence $(f_n)$ is called recovery sequence for $f$.
\end{proposition}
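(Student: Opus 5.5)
If $F(f)=\infty$ there is nothing to prove: we may take, for instance, $\theta_n$ such that $f_{\theta_n}\to f$ in $L^1(\Omega)$, which exists by density of ReLU networks in $L^1(\Omega)$. So assume $F(f)<\infty$. Then $f\in\B(\Omega)$, $\mathcal{R}(f)=\inf_\B\mathcal{R}$, and $F(f)=\TV(f;\mu)+\alpha[f]_\B$. The plan has three steps. First, fix $\delta>0$ and a representation $f=f_{\pi,y}$ on $\Omega$ with $R_{WD}(\pi)\le[f]_\B+\delta$. Second, by the quantitative neural network approximation result (\Cref{lem:nn_approx_in_l_infty}), choose $\theta_n\in\Theta_{m_n}$ with
\[
\|f_{\theta_n}-f\|_{L^\infty(\Omega)}\lesssim \frac{R_{WD}(\pi)}{\sqrt{m_n}}
\qquad\text{and}\qquad
R_{WD}(\theta_n)\le R_{WD}(\pi).
\]
Such parameters can be obtained by sampling $m_n$ neurons from $\pi$, rescaling by $1/\sqrt{m_n}$ as in \cref{rmk: nn}, and using a Maurey/Rademacher argument; the weight decay is controlled in expectation, so one can select a realization that satisfies both bounds. $L^\infty$ convergence gives $L^1$ convergence, and $\frac{\lambda_n}{\eps_n}R_{WD}(\theta_n)\to\alpha R_{WD}(\pi)\le\alpha([f]_\B+\delta)$. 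A diagonal argument in $\delta\to0$ removes $\delta$ at the end.

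Third, the risk term. Split it as
\[
\frac{\widehat{\mathcal{R}}_n(f_{\theta_n})-\mathcal{R}(f)}{\eps_n}
=\frac{\widehat{\mathcal{R}}_n(f_{\theta_n})-\mathcal{R}(f_{\theta_n})}{\eps_n}
+\frac{\mathcal{R}(f_{\theta_n})-\mathcal{R}(f)}{\eps_n}.
\]
The second summand is at most $\Lip(\ell)\,\|f_{\theta_n}-f\|_\infty/\eps_n\lesssim 1/(\eps_n\sqrt{m_n})\to0$. For the first summand, note that the $f_{\theta_n}$ lie in a Barron ball of bounded radius with uniformly bounded sup-norm, since $f$ is bounded on the bounded set $\Omega$. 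The generalization bounds of \cref{sec: generalization bounds} (\Cref{lem:Rademacher bound}), combined with Borel–Cantelli, then give almost surely a uniform deviation of order $\sqrt{\log n/n}$. Dividing by $\eps_n$ this tends to zero by assumption.

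Fourth, the TV term. Write
\[
\widehat{\TV}_{\eps_n,n}(f_{\theta_n};\mu)
=\bigl[\widehat{\TV}_{\eps_n,n}(f_{\theta_n};\mu)-\TV_{\eps_n}(f_{\theta_n};\mu)\bigr]+\TV_{\eps_n}(f_{\theta_n};\mu).
\]
The summands in $\widehat{\TV}$ are of the form $(\sup_{B_{\eps_n}(x)\cap\Omega}g-g(x))/\eps_n$, which are bounded by $\Lip(g)\lesssim[g]_\B$ by \Cref{lem:lipschitz_barron}. The uniform concentration bound for the empirical nonlocal TV over Barron balls (\Cref{lem: Rademacher bound 2}) should give a deviation of order $\sqrt{\log n/n}/\eps_n\to0$ almost surely, after accounting for the $1/\eps_n$ scaling. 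The remaining population term does not simply converge, because $\|f_{\theta_n}-f\|_\infty/\eps_n$ need not vanish fast enough to compare $\TV_{\eps_n}(f_{\theta_n})$ with $\TV_{\eps_n}(f)$ directly. However, the sup-difference satisfies
\[
\Bigl|\sup_{B_{\eps}(x)} g-g(x)-\bigl(\sup_{B_{\eps}(x)} h-h(x)\bigr)\Bigr|\le 2\|g-h\|_\infty,
\]
so
\[
\TV_{\eps_n}(f_{\theta_n})\le\TV_{\eps_n}(f)+\frac{C}{\eps_n\sqrt{m_n}},
\]
and the error term vanishes by hypothesis. Finally, I would show $\limsup_{\eps\to0}\TV_\eps(f;\mu)\le\TV(f;\mu)$ for Lipschitz $f$ and continuous densities; this should be available from \cref{sec: nonlocal per and tv}. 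Pointwise at differentiability points, $(\sup_{B_\eps(x)}f-f(x))/\eps\to\|\nabla f(x)\|_\star$; near $\partial\Omega$, intersecting with $\Omega$ only decreases the supremum. The quotients are dominated by $\Lip(f)$, so dominated convergence applies.

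The main obstacle is the TV concentration step. It requires a uniform-in-class bound for functionals involving $\sup$ over balls, together with the correct $\eps_n$-scaling, and possibly a continuity argument in $\eps$ for $\TV_\eps$ that is uniform over the Barron ball. If \Cref{lem: Rademacher bound 2} does not directly cover this, I would bound the Rademacher complexity of $x\mapsto\sup_{B_\eps(x)\cap\Omega}g$ by that of the Barron ball through a covering argument in $\|\cdot\|_\infty$, using that the sup-map is $1$-Lipschitz in sup-norm.
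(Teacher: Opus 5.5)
Your proposal is correct and follows the paper's proof essentially step by step. The recovery sequence comes from the $L^\infty$ Barron approximation lemma. Both $(\mathcal{R}(f_{\theta_n})-\mathcal{R}(f))/\eps_n$ and $\TV_{\eps_n}(f_{\theta_n};\mu)-\TV_{\eps_n}(f;\mu)$ are $O(1/(\eps_n\sqrt{m_n}))$, and $\TV_{\eps_n}(f;\mu)\to\TV(f;\mu)$ by Taylor expansion and dominated convergence. The empirical deviations are handled by generalization bounds plus Borel--Cantelli; the uniform TV concentration you flag as the main obstacle is exactly the paper's TV generalization lemma, built on the Rademacher bound for $\mathcal{G}_{Q,\eps}$.

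Two simplifications are available. First, the approximation lemma already gives $R_{WD}(\theta_n)\le[f]_\B$ for every realization, because it first normalizes $\pi$ so that each atom carries the same weight decay; mere expectation control, as your parenthetical suggests, would not give the sharp constant $1$ simultaneously with the $L^\infty$ rate. So no $\delta$-diagonal argument is needed. Second, since $\theta_n$ does not depend on the data, a plain Hoeffding bound for each fixed $f_{\theta_n}$ would suffice in place of the uniform bounds.
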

As for the $\liminf$ inequality, we combine generalization bounds and the results for the population level nonlocal total variation to prove this result (see \cref{sec: proof of propositions}). 
Additionally, since $F(f)=\infty$ otherwise, we can assume that $f$ is a Barron function, and use this to obtain a sequence of neural networks approximating $f$ with dimension-independent rates (\cref{lem:nn_approx_in_l_infty}). 

Finally, as already mentioned, to enable us to extract a convergent subsequence from a sequence of minimizers of $F_n$, we require a compactness result. 
\begin{proposition}\label{prop: compactness}
    Let \Cref{assum: loss} be satisfied and assume that $\alpha \coloneq\lim_{n\to\infty}\lambda_n/\eps_n \in (0,\infty)$.
    For $n\in\N$ let $\theta_n\in\Theta_{m_n}$ such that $\limsup_{n\to\infty} F_n(\theta_n) < \infty$.
    Then, almost surely (with respect to the samples) there exists a subsequence $\theta_{n_j}$ and a function $f\in\B$ such that $f_{\theta_{n_j}}\to f$ in $L^1(\Omega)$. 
\end{proposition}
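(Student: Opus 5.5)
The plan is to extract from the bound $\limsup_n F_n(\theta_n)<\infty$ a uniform Barron bound on $f_n:=f_{\theta_n}$, together with a uniform bound on one function value, and then to use compactness of Barron balls in $L^1(\Omega)$.

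\textbf{Barron seminorm.} All three terms of $F_n$ are nonnegative or bounded below, so each is controlled by $M:=\sup_{n\ge n_0}F_n(\theta_n)<\infty$. The empirical nonlocal total variation $\widehat{\TV}_{\eps_n,n}$ is nonnegative, since $\sup_{B_\eps(x)\cap\Omega} f\ge f(x)\ge \inf_{B_\eps(x)\cap\Omega} f$. The risk term $(\widehat{\mathcal R}_n(f_n)-\inf_\B\mathcal R)/\eps_n$ is only bounded below after some work, so it is safer to avoid it. We have $\widehat{\mathcal R}_n\ge0$ and $\inf_\B\mathcal R\le \|\ell\|_\infty$, hence this term is at least $-\|\ell\|_\infty/\eps_n$. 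The bound $\frac{\lambda_n}{\eps_n}R_{WD}(\theta_n)\le M+\|\ell\|_\infty/\eps_n$ then only gives $R_{WD}\lesssim 1/\lambda_n$, which is useless.

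Instead I would use the first-order structure directly. Multiply by $\eps_n$: $\widehat{\mathcal R}_n(f_n)-\inf\mathcal R+\lambda_n R_{WD}(\theta_n)\le M\eps_n$. A lower bound $\widehat{\mathcal R}_n(f_n)-\inf\mathcal R\ge -C\eps_n$ would follow from a generalization bound, but that requires a Barron bound and is circular. The cleaner route is the following. The hypothesis is about a fixed sequence $\theta_n$, not minimizers. Still, $\widehat{\mathcal R}_n(f)\ge \inf_\B\mathcal R - \sup_{[g]_\B\le r}|\widehat{\mathcal R}_n(g)-\mathcal R(g)|$, and the Rademacher bound of \Cref{lem:Rademacher bound} yields, almost surely for large $n$, a deviation of at most $C(1+r)\sqrt{\log n/n}$ uniformly over Barron balls of radius $r$ (after fixing the constant part, see below; the Lipschitz loss gives contraction). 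With $r=[f_n]_\B\le R_{WD}(\theta_n)$ we obtain
\[
\lambda_n R_{WD}(\theta_n)\le M\eps_n + C\big(1+R_{WD}(\theta_n)\big)\sqrt{\log n/n}.
\]
Since $\sqrt{\log n/n}/\lambda_n\to0$ follows from $\lambda_n\sim\alpha\eps_n$ and $\sqrt{\log n/n}=o(\eps_n)$, or else from the paper's assumption $1/(\lambda_n\sqrt n)\to0$ up to a log (a union bound over dyadic radii may be needed to make the bound uniform in $r$), we can absorb and get $\limsup_n [f_n]_\B\le \limsup_n R_{WD}(\theta_n)\le 2M/\alpha<\infty$.

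\textbf{Offset.} Barron functions with $[f]_\B\le K$ are $K$-Lipschitz on $\Omega$ (\Cref{lem:lipschitz_barron}), up to a constant factor, so $\operatorname{osc}_\Omega f_n\le K\operatorname{diam}\Omega$. If $\sup_\Omega|f_n|$ were unbounded along a subsequence, then eventually $f_n\ge C$ or $f_n\le -C$ on all of $\Omega$. By \Cref{assum: loss} this gives $\widehat{\mathcal R}_n(f_n)-\inf\mathcal R\ge c$, so $F_n(\theta_n)\ge c/\eps_n\to\infty$, a contradiction. Hence $\sup_n\|f_n\|_{L^\infty(\Omega)}<\infty$. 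This also justifies restricting the Rademacher bound in the first step to functions with a bounded value at a point, so both steps should be run together, splitting into the cases where $f_n$ stays in $[-C-K\operatorname{diam}\Omega,\,C+K\operatorname{diam}\Omega]$ or not.

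\textbf{Compactness.} The $f_n$ are uniformly bounded and equi-Lipschitz on $\overline\Omega$. Arzel\`a--Ascoli gives a subsequence converging uniformly, hence in $L^1(\Omega)$, to some Lipschitz $f$. To see $f\in\B$, I use lower semicontinuity of the Barron seminorm (\Cref{lem: semi norm lower semicontinuous}), which gives $[f]_\B\le\liminf_j[f_{n_j}]_\B<\infty$.

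The main obstacle is the first step: bounding the risk deficit from below without already knowing a Barron bound. This requires a generalization bound that is uniform over Barron radii and absorbable via $\sqrt{\log n/n}=o(\lambda_n)$, and the uniformity in the radius and in the constant offset must be handled carefully.
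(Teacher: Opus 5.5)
Your offset argument (\Cref{assum: loss} plus the equi-Lipschitz bound) matches the paper's. Your Arzel\`a--Ascoli step combined with \Cref{lem: semi norm lower semicontinuous} is a self-contained replacement for the paper's appeal to \citet[Lemma C.3]{park2023minimum}. The real difference is the Barron bound. The paper gets $R_{WD}(\theta_n)\le C$ in one line from $\alpha>0$ and $\limsup_n F_n(\theta_n)<\infty$, which tacitly treats the risk term as bounded below. You correctly see that $(\widehat{\mathcal R}_n(f_{\theta_n})-\inf_\B\mathcal R)/\eps_n$ can be as negative as $-\inf_\B\mathcal R/\eps_n$. So a generalization bound uniform in the Barron radius, followed by absorption, is genuinely needed, and your proposal supplies the step the paper's proof skips.

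There are two things to tighten.

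\emph{The rate is an extra hypothesis.} The absorption needs $\sqrt{\log n/n}=o(\lambda_n)$, i.e.\ (since $\lambda_n/\eps_n\to\alpha$) the first scaling condition of \Cref{thm: main}. This is not a hypothesis of the proposition. The condition $1/(\lambda_n\sqrt n)\to0$ alone does not suffice, because Borel--Cantelli produces the $\sqrt{\log n}$ factor. State the rate explicitly as an added assumption; it costs nothing for \Cref{thm: main}. It also cannot be dropped. Take $d=1$, $\mu_0=\mu_1$ uniform, $\ell$ the clipped $\ell_1$-loss, and $\eps_n=\lambda_n=e^{-n}$. Consider networks equal to $Y_i$ on plateaus of radius $\min\{\gamma_n/4,n^{-3}\}$ around the samples, and equal to a $[0,1]$-valued sawtooth of period $1/n$ elsewhere. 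Here $\gamma_n$ is the minimal sample spacing, which is a.s.\ eventually $\ge n^{-4}$. These networks have $\widehat{\mathcal R}_n=0$, eventually $\widehat{\TV}_{\eps_n,n}=0$, and $R_{WD}(\theta_n)=O(n^5)$. Hence $F_n(\theta_n)\le-\tfrac12e^n+O(n^5)\to-\infty$, yet no subsequence converges in $L^1(\Omega)$.

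\emph{The circularity disappears if you use $[f_n]_\B$.} Work with $[f_n]_\B$ itself rather than a yet-unknown $K$. Since $\widehat{\TV}_{\eps_n,n}\ge0$ and $R_{WD}\ge0$, \Cref{assum: loss} rules out $f_n\ge C$ or $f_n\le-C$ on all of $\Omega$ for large $n$. So there are $x,x'\in\Omega$ with $f_n(x)<C$ and $f_n(x')>-C$. This gives $\|f_n\|_{L^\infty(\Omega)}\le C+[f_n]_\B\operatorname{diam}\Omega$ a priori. Now apply \Cref{lem: generalization error} with $Q=2^k$, offset bound of order $1+2^k$, and confidence $n^{-2}2^{-k}$. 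Summing over $k$ and using Borel--Cantelli yields the uniform deviation bound $C(1+[f]_\B)\sqrt{\log n/n}$ that your absorption step requires.
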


With these ingredients established, the proof of \cref{thm: main} is a standard argument from the theory of Gamma-convergence. 
For the sake of completeness, we carry out the explicit steps below. 
\begin{proof}[Proof of \cref{thm: main}]
    For $n\in\N$, let $\theta_n$ be a solution to \eqref{eq: full objective} and denote the induced neural network by $f_n = f_{\theta_n}$. 
    As we have mentioned, this implies that $\theta_n$ minimizes $F_n$. 
    By $f^\star$ we denote an approximate minimizer of $F$, i.e. $f^\star\in\B$ with $\mathcal{R}(f^\star) = \inf_\B \mathcal{R}$ such that 
    \begin{align}
        \TV(f^\star;\mu) + \alpha [f^\star]_\B \leq \inf F + \delta
    \end{align}
    for $\delta > 0$. 
    Moreover, let $(f_n^\star)$ be a recovery sequence for $f^\star$ according to \Cref{prop: limsup AT}, i.e.,
    \begin{align}
         \alpha [f^\star]_\B + \TV(f^\star;\mu) = F(f^\star) \geq \limsup_{n\to\infty} F_n(f_n^\star). 
    \end{align}
    almost surely. 
    Since $f_n$ minimizes the functional $F_n$, this implies
    \begin{align}
        \limsup_{n\to\infty} F_n(f_n) \leq \limsup_{n\to\infty} F_n(f_n^\star) \leq \TV(f^\star;\mu) + \alpha[f^\star]_\B < \infty. 
    \end{align}
    Therefore, we can apply \Cref{prop: compactness} and obtain the existence of a function $f\in\B$ and a subsequence $(f_{n_j})$ such that $f_{n_j}\to f$ in $L^1(\Omega)$. 
    We can conclude
    \begin{align}
        F(f) \leq \liminf_{j\to\infty} F_{n_j}(f_{n_j}) \leq \limsup_{j\to\infty} F_{n_j}(f_{n_j}) \leq \TV(f^\star;\mu)+\alpha[f^\star]_\B \leq \inf F + \delta
    \end{align}
    almost surely from \Cref{prop: liminf AT}. 
    Since $\delta > 0$ is arbitrary, this implies that $f$ minimizes $F$. 
\end{proof}

%%%%%%%%%%%%%%%%%%%%%%%%%%%%%%%%%%%%%%%%%%%%%%

\section{Numerical Illustrations}\label{sec:numerical illustrations}

To empirically validate our findings beyond simple two-layer ReLU networks, we compare ResNet18 models trained on clean CIFAR-10 images to those that were adversarially trained on the CIFAR-10 dataset \citep{krizhevsky2009learning}, using different adversarial budgets. 
We report our results in the top row of \cref{fig:sweep}, plotting the mean over 5 random seeds as well extremal values. 
From left to right, we plot the accuracy achieved on clean images, the robust accuracy for images attacked with Projected Gradient Descent (PGD) \citep{madry2018toward} with $50$ steps and a fixed adversarial budget of $8/255$, the $\ell_1$-norms of the gradient averaged over the test set, and the average sparsity of these gradients thresholded at different levels.
Here, sparsity refers to the ratio of components in the gradient whose magnitude lies below the threshold.
%In this context, by gradients with respect to the input, we always mean $\nabla_x f_c(x)$, where $c$ denotes the true label of the data-point $x$. 

We observe the usual trade-off between accuracy and robustness. 
Interestingly, even though training with a small adversarial budget leads to comparable accuracy as standard training, these models exhibit a higher sparsity of their input-gradients and also a significantly lower $\ell_1$-norm, in line with our theoretical results.  
The confidence intervals are hardly visible, indicating high robustness against the stochasticity of neural network training.

In the bottom row of \cref{fig:sweep}, we plot the same metrics for the ResNet50 models\footnote{Taken from \url{https://huggingface.co/madrylab/robust-imagenet-models}} of \citet{salman2020do}, pretrained on the ImageNet dataset \citep{deng2009imagenet}.
Our previous observations remain true, indicating that our results scale to large datasets.
See \cref{sec: additional numerical experiments} for more experimental results.

%%%%%%%%%%%%%%%%%%%%%%%%%%%%%%%%%%%%%
\begin{figure}[htb]
    \centering
    \includegraphics[width=0.24\linewidth,trim=1.1cm 1.1cm 1.1cm 1.1cm,clip]{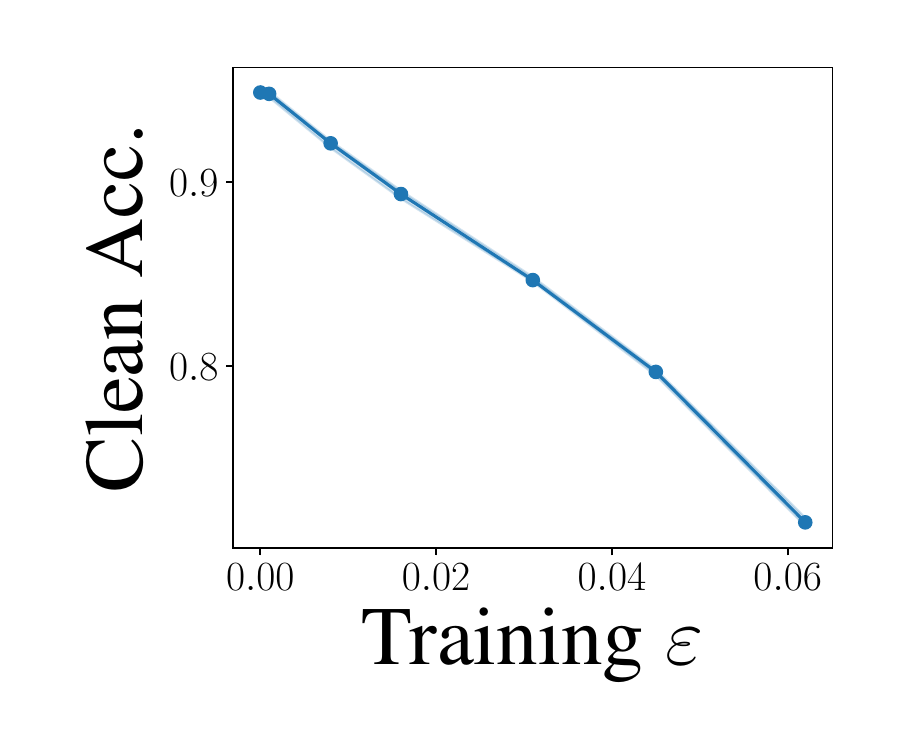}
    \hfill
    \includegraphics[width=0.24\linewidth,trim=1.1cm 1.1cm 1.1cm 1.1cm,clip]{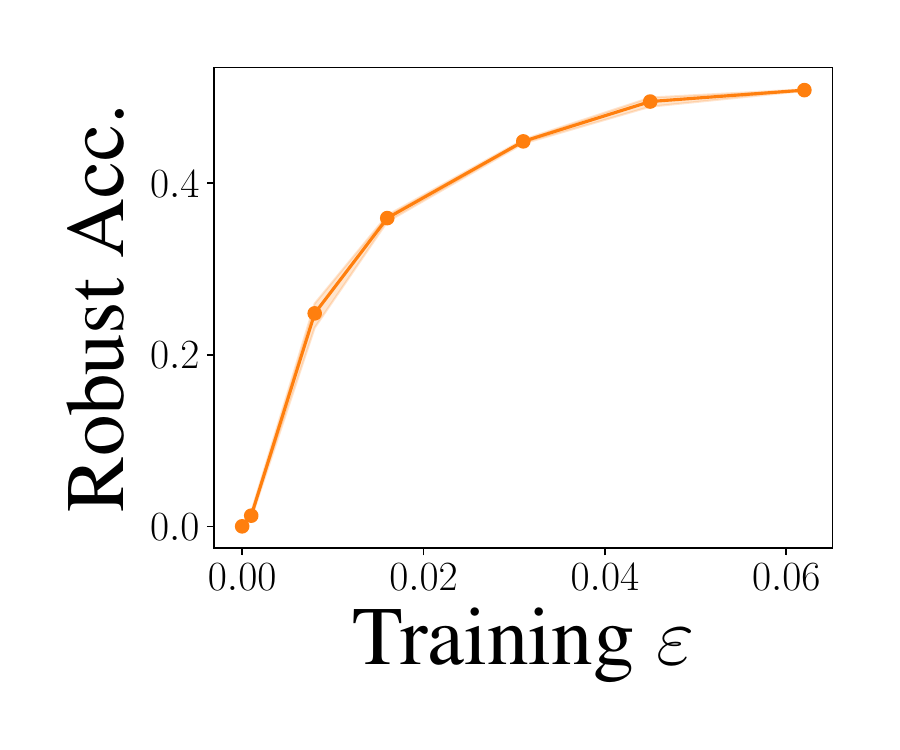}
    \hfill
    \includegraphics[width=0.24\linewidth,trim=1.1cm 1.1cm 1.1cm 1.1cm,clip]{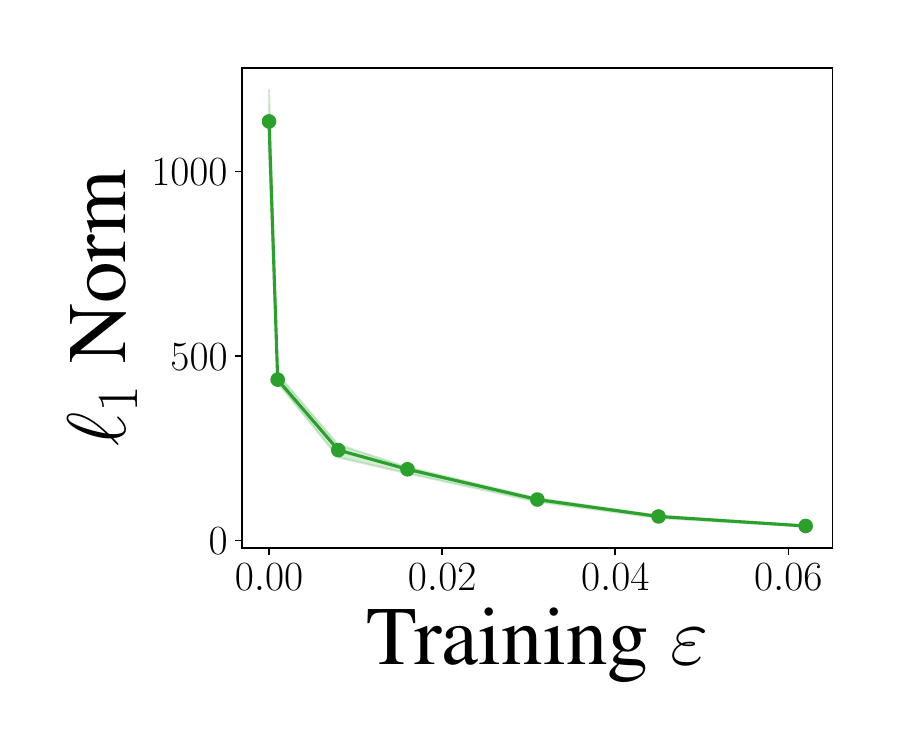}
    \hfill
    \includegraphics[width=0.24\linewidth,trim=1.1cm 1.1cm 1.1cm 1.1cm,clip]{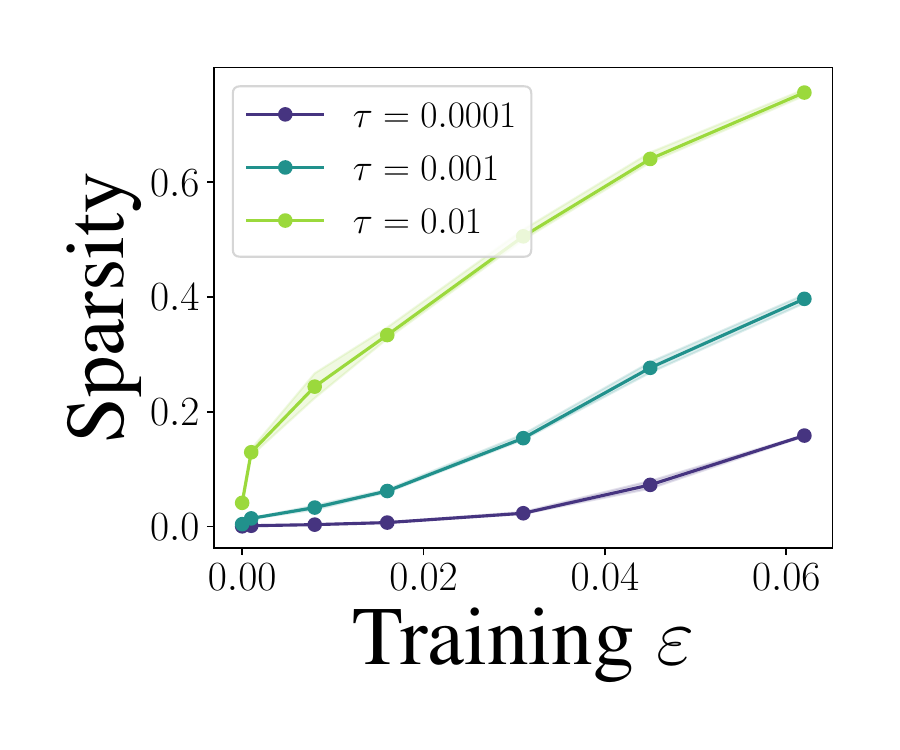}

    \includegraphics[width=0.24\linewidth,trim=1.1cm 1.1cm 1.1cm 1.1cm,clip]{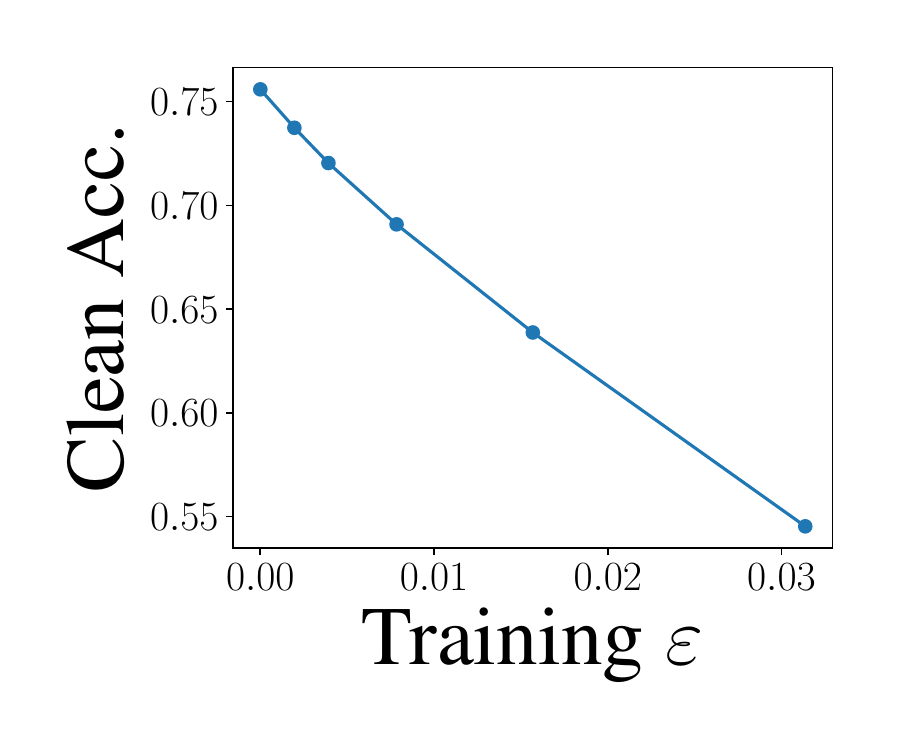}
    \hfill
    \includegraphics[width=0.24\linewidth,trim=1.1cm 1.1cm 1.1cm 1.1cm,clip]{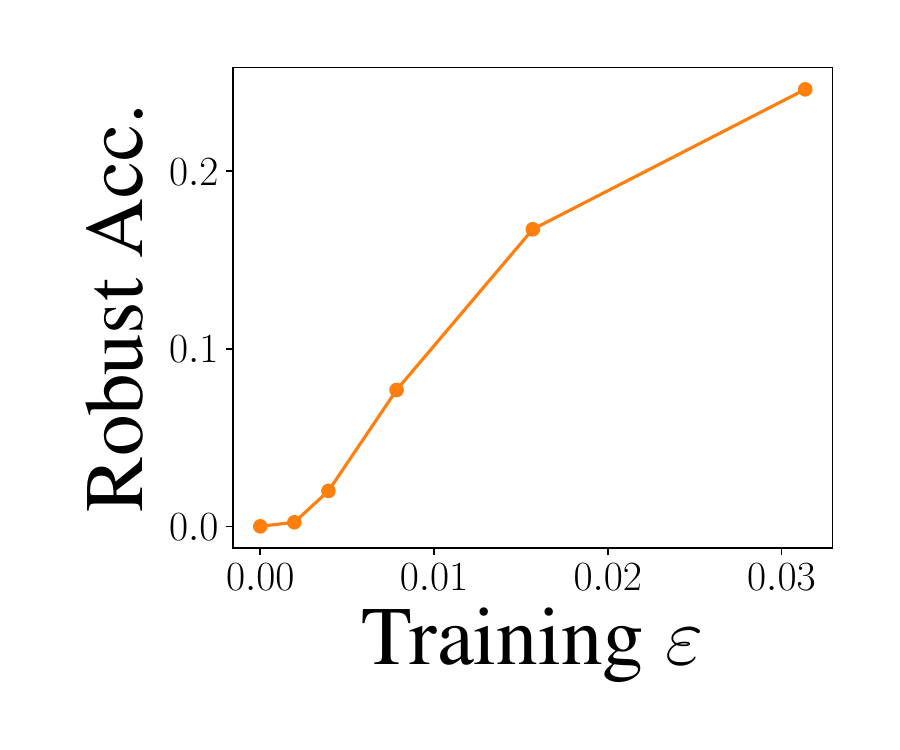}
    \hfill
    \includegraphics[width=0.24\linewidth,trim=1.1cm 1.1cm 1.1cm 1.1cm,clip]{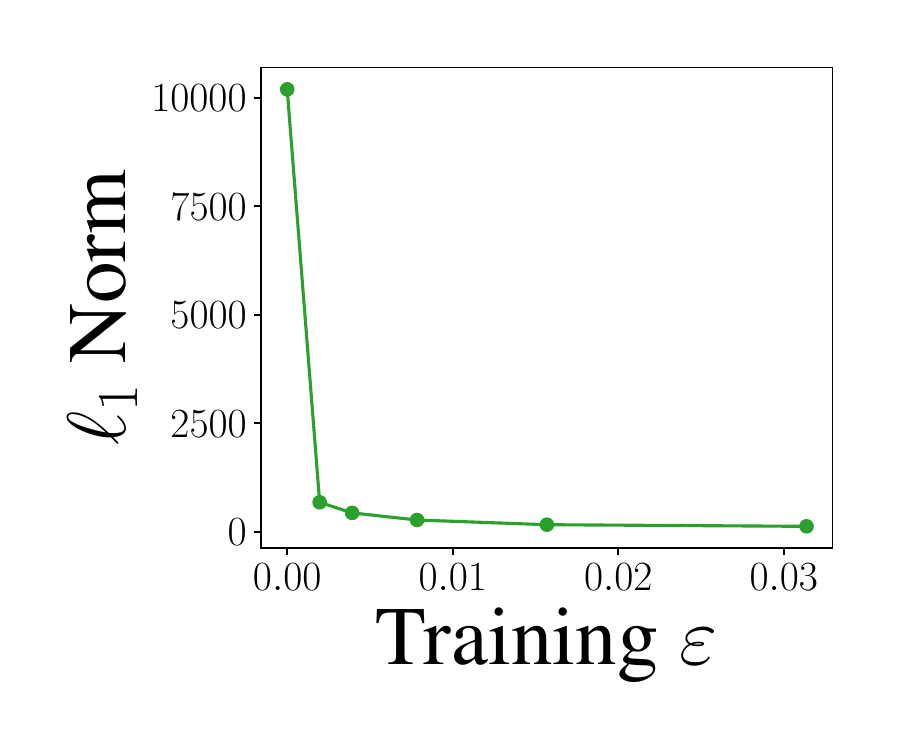}
    \hfill
    \includegraphics[width=0.24\linewidth,trim=0.8cm 1.1cm 1.1cm 1.1cm,clip]{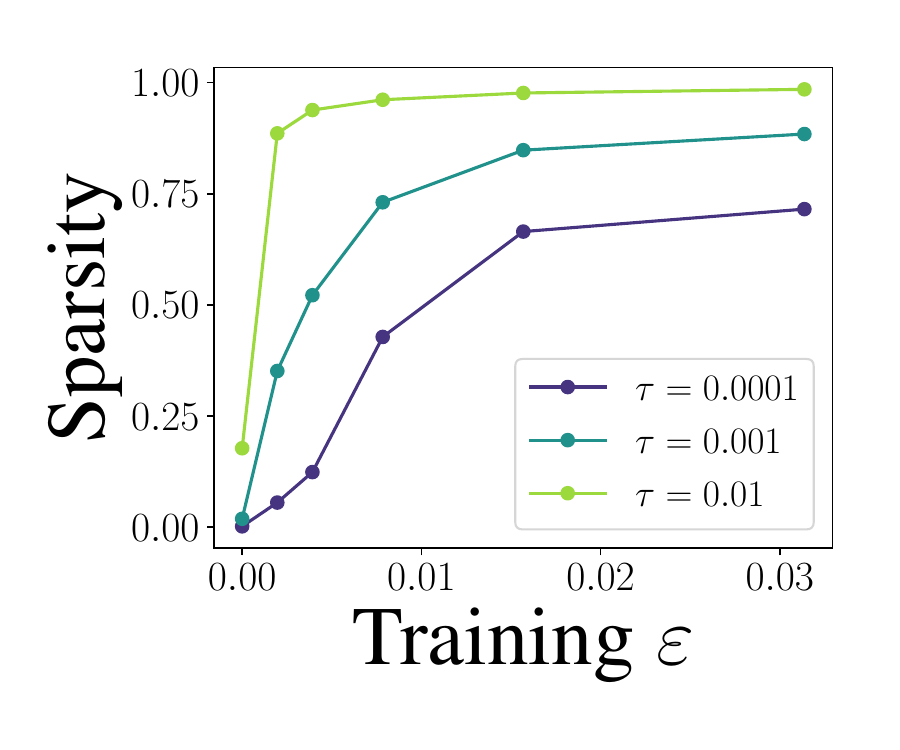}
    \caption{Accuracy and gradient metrics for ResNet18 on CIFAR-10 (top) and ResNet50 on ImageNet (bottom). From left to right, we display clean accuracy, robust accuracy, the average $\ell_1$-norm, and the average sparsity (thresholded at different values) of the input gradients.}
    \label{fig:sweep}
\end{figure}

%%%%%%%%%%%%%%%%%%%%%%%%%%%%%%%%%%%%%%%%%%%%%%

\section{Conclusion}

We provided a theoretical explanation for the empirical observation that gradient saliency maps of adversarially-trained neural networks tend to be sparser and more perceptually aligned than those of naturally-trained ones.
For this we analyzed a total variation and weight decay regularized empirical risk minimization problem over the class of two-layer neural networks with ReLU activation, and proved that it selects a subclass of Bayes classifiers, consisting of those which minimize a balance of their average gradient $\ell_1$-norm and their Barron norm.  
Hence, classifiers with axis-aligned gradients are favored, which explains the sparsity of their gradient maps.

Our proofs involve generalization bounds, Monte Carlo approximation arguments, and Gamma-convergence techniques, and our numerical experiments confirm the theoretical results and indicate their generalizability.
Our current analysis requires the weight decay regularization to be equally strong as the adversarial one and does not cover the original form of adversarial training, proposed by \cite{madry2018toward}.
Future work includes investigating the connection between TV regularization and adversarial training theoretically and empirically as well as studying their influence on other saliency maps like integrated or smoothed gradients.

% DGA: [ideas for future work, here for safe-keeping]
% - can we say something about other saliency maps (eg integrated gradient)? Yannick has preliminary results, weird spike for smoothgrad near eps = 0
% - other limit regimes (e.g. NTK)
% - saliency maps for LLMs + adversarial training -> same empirical observation? 

%%%%%%%%%%%%%%%%%%%%%%%%%%%%%%%%%%%%%%%%%%%%%%%%%%%%%%%%%%%

\subsection*{AI use statement}

In this work we used generative AI tools for polishing the manuscript, discussing parts of the mathematical proofs and assisting with code to generate the numerical experiments.
We have not used AI tools for generating the core research hypothesis or conducting literature reviews.
We have reviewed all AI assisted work.
Specifically, we verified all AI-discussed proof steps and tested the generated code for correctness.
We take responsibility for the final content of this work,
including text, claims or artifacts produced with the aid of generative AI.

\subsubsection*{Acknowledgments}
YL and LB acknowledge funding by the Federal Ministry of Research, Technology and Space of Germany (BMFTR) under grant agreement No. 16IS24072 (COMFORT).
LB acknowledges funding by the Deutsche Forschungsgemeinschaft (DFG, German Research Foundation) – project number 544579844 (GeoMAR).

\bibliography{./bibliography}
\bibliographystyle{iclr2027_conference}

\appendix

\section*{Supplementary material}

The supplementary material for our paper is organized into the following appendices. 
In particular, for readers familiar with Gamma-convergence we would like to point our \cref{thm: gamma} in \cref{sec: proof of propositions} which, together with \cref{prop: compactness}, implies our main result \cref{thm: main}. 
However, for convenience, the proof of \cref{thm: main} given in \cref{sec: proof details} of the main paper is a direct one. 

\startcontents[main]
\printcontents[main]{}{1}{}

%%%%%%%%%%%%%%%%%%%%%%%%%%%%%%%%%%%%%%%%%%%%%%%%%%%%%%%%%%

% In this appendix, $C$ denotes a numerical constant whose value can change from one result to the next. 

\section{The Homogeneous Barron Space}
\label{sec: barron space}

Recall our definition of the homogeneous Barron space $\B(\Omega)$ as all functions of the form
\begin{align}
    f_{\pi,y}(x) \coloneq y + \int_{\R\times\R^d\times\R}a\big(\sigma(w\cdot x+b)-\sigma(b)\big)\de\pi(a,w,b),\qquad x\in\R^d,
\end{align}
such that $[f]_{\B(\Omega)}<\infty$, where $[f]_{\B(\Omega)} \coloneq \inf_{f_{\pi,y} = f \text{ on }\Omega} R_{WD}(\pi)$ and 
\begin{align}
    R_{WD}(\pi)\coloneq \frac{1}{2}\int_{\R\times\R^d\times\R}\big(|a|^2+\|w\|_2^2\big)\de\pi(a,w,b). 
\end{align}
Furthermore, we recall that in our setting $\Omega \subset \R^d$ is assumed to be a bounded domain with Lipschitz boundary and that $\mu$ is a probability measure on $\Omega\times\{0,1\}$ describing the distribution of the training data.
Note that the slightly non-standard representation of $f_{\pi,y}$ (see also \eqref{eq: homogeneous barron}) is used to make sure that $R_{WD}(\pi)<\infty$ implies that the integral in \eqref{eq: homogeneous barron} exists, see \citet{park2023minimum} for details.

One can easily see that every function in this space is Lipschitz continuous and the Lipschitz constant is bounded by the Barron semi-norm. 

\begin{lemma}\label{lem:lipschitz_barron}
   Let $f\in\B(\Omega)$. Then, 
   \begin{align}
       |f(x)-f(\tilde{x})| \leq [f]_\B \|x-\tilde{x}\|_2
   \end{align}
   for $x,\tilde{x}\in\Omega$, i.e. $f$ is Lipschitz continuous. 
\end{lemma}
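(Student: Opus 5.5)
Fix an arbitrary representation $f = f_{\pi,y}$ on $\Omega$ with $R_{WD}(\pi)<\infty$. We estimate the difference $f(x)-f(\tilde x)$ directly in this representation and then take the infimum over all representations.

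For $x,\tilde x\in\Omega$ the constants $y$ and $\sigma(b)$ cancel, so
\begin{align}
    f(x)-f(\tilde x) = \int a\big(\sigma(w\cdot x+b)-\sigma(w\cdot\tilde x+b)\big)\de\pi(a,w,b).
\end{align}
Splitting the integral of $f(x)-f(\tilde x)$ like this is legitimate because each of $f(x)$ and $f(\tilde x)$ is given by an absolutely convergent integral. Indeed, $|\sigma(w\cdot x+b)-\sigma(b)|\le |w\cdot x|\le \|w\|_2\|x\|_2$, and $\int|a|\|w\|_2\de\pi<\infty$ by the inequality below.

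Since $\sigma$ is $1$-Lipschitz, Cauchy--Schwarz in $\R^d$ gives
\[
    |\sigma(w\cdot x+b)-\sigma(w\cdot\tilde x+b)|\le |w\cdot(x-\tilde x)|\le \|w\|_2\|x-\tilde x\|_2 .
\]
Therefore
\begin{align}
    |f(x)-f(\tilde x)|\le \|x-\tilde x\|_2\int |a|\,\|w\|_2\de\pi(a,w,b).
\end{align}
By the AM--GM inequality $|a|\|w\|_2\le\frac12(a^2+\|w\|_2^2)$, the integral is at most $R_{WD}(\pi)$.

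Taking the infimum over all admissible $(\pi,y)$ with $f_{\pi,y}=f$ on $\Omega$ yields $|f(x)-f(\tilde x)|\le [f]_\B\|x-\tilde x\|_2$.

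There is no real obstacle here. The only point needing care is the integrability justification above, which ensures that splitting the integral is valid; it follows from the same AM--GM bound.
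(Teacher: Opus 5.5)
Your proposal is correct and follows the paper's proof step for step: cancel $y$ and $\sigma(b)$, use that ReLU is $1$-Lipschitz together with Cauchy--Schwarz, bound $\int|a|\|w\|_2\de\pi$ by $R_{WD}(\pi)$, and take the infimum over representations. Your explicit check that each integral converges absolutely, so the difference may be taken under a single integral, is a small point of rigor that the paper leaves implicit.
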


\begin{proof}
    Let $x,\tilde{x}\in\Omega$ and let $\pi, y$ such that $f=f_{\pi,y}$. Then, since the ReLU activation function is $1$-Lipschitz, 
    \begin{align}
        |f(x)-f(\tilde{x})| &= \left| \int a(\sigma(w\cdot x+b) - \sigma(w\cdot\tilde{x}+b)) \de \pi(a,w,b)\right| \\
        &\leq \int |a||w\cdot x - w \cdot\tilde{x}| \de\pi(a,w,b) \\
        &\leq\int|a|\|w\|_2\de\pi(a,w,b) \|x-\tilde{x}\|_2 \\
        &\leq  \|x-\tilde{x}\|_2 R_{WD}(\pi).
    \end{align}
    Taking the infimum over all representations $f_{\pi,y}$ yields the desired estimate. 
\end{proof}
Since $\Omega$ is assumed to be bounded, the Lipschitz continuity of Barron functions implies the inclusion $\B(\Omega) \subset L^\infty(\Omega) \subset L^1(\Omega)$. 

We note that the above proof holds for any Lipschitz continuous activation function $\sigma$ with the only modification being that the Lipschitz constant becomes $\Lip(\sigma)[f]_{\B}$. 
In contrast, our proof of the following result--which established lower semi-continuity of the Barron semi-norm with respect to $L^1$ convergence--relies on the specific structure of the ReLU function. 

\begin{lemma}\label{lem: semi norm lower semicontinuous}
    Let $(f_n)\subset\B(\Omega)$ and $f\in L^1(\Omega)$ such that $f_n \to f$ in $L^1(\Omega)$. Then, 
    \begin{align}
        [f]_\B \leq \liminf_{n\to\infty} [f_n]_\B.
    \end{align}
\end{lemma}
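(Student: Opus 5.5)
Assume without loss of generality that $L \coloneq \liminf_{n\to\infty}[f_n]_\B<\infty$. Pass to a subsequence (not relabeled) along which $[f_n]_\B\to L$. Then, for each $n$, choose a representation $f_n=f_{\pi_n,y_n}$ on $\Omega$ with $R_{WD}(\pi_n)\le [f_n]_\B+1/n$.

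The plan is to extract a limiting representation $(\pi,y)$ of $f$ with $R_{WD}(\pi)\le L$. The difficulty is that the measures $\pi_n$ need not be tight: the bias $b$ is not controlled by $R_{WD}$, and mass can escape in $(a,w)$ in ways that still matter. I will remove this non-compactness using the structure of the ReLU, in three reductions.

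\textbf{Reductions.}
\begin{enumerate}
\item By positive $1$-homogeneity, $a\sigma(w\cdot x+b)=(ta)\,\sigma\bigl((w\cdot x+b)/t\bigr)$ for every $t>0$. Rescaling each atom so that $|a|=\|(w,b)\|$ only lowers $\frac12(a^2+\|w\|_2^2)$ to $|a|\|w\|_2$. Since this map is pointwise, we may assume it also acts on the support of $\pi_n$.
\item Push $\pi_n$ forward to a finite signed (or two-component) measure $\nu_n$ on the compact set $S=\{(w,b):\|w\|_2^2+b^2=1\}$ with weight $|a|\|w\|_2$ in the appropriate normalization. This gives total variation $|\nu_n|(S)\le R_{WD}(\pi_n)$.
\item Since $\Omega\subset B_R(0)$ is bounded, any neuron with $|b|>R\|w\|_2$ is affine on $\Omega$: either identically zero or equal to $w\cdot x+b$. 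Such neurons can be collected into an affine part $c_n\cdot x+d_n$. Their cost is at least $|c_n|$, and a single affine function on $\Omega$ costs $\|c\|_2$, as in the linear example of the main text. Hence I may work with $b\in[-R,R]$, $\|w\|_2=1$ plus an affine term, with the total cost $|\nu_n|+\|c_n\|_2$ still bounded by $R_{WD}(\pi_n)$ up to the $1/n$ error.
\end{enumerate}

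\textbf{Compactness.} Now the parameters live on a compact set and the masses are bounded. By Banach--Alaoglu, a further subsequence gives $\nu_n\rightharpoonup^\ast\nu$ weakly-$*$ and $c_n\to c$. Weak-$*$ lower semicontinuity of the total variation norm gives $|\nu|+\|c\|_2\le L$.

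\textbf{Identification of the limit.} For fixed $x\in\Omega$ the map $(w,b)\mapsto\sigma(w\cdot x+b)-\sigma(b)$ is continuous on the compact set, so $f_n(x)-y_n$ converges pointwise to $g(x)\coloneq\int(\sigma(w\cdot x+b)-\sigma(b))\de\nu+c\cdot x$. Convergence is in fact uniform, by equicontinuity from \cref{lem:lipschitz_barron}.
\begin{enumerate}
\item $f_n\to f$ in $L^1$ and the $f_n-y_n$ are uniformly bounded, so $y_n$ is bounded. Up to a subsequence $y_n\to y$, and hence $f=y+g$ almost everywhere.
\item Reconvert $(\nu,c)$ into a probability measure $\pi$ by the inverse of the rescaling: split $\nu$ into its positive and negative parts and choose the atom amplitudes $a$ and the scaling so that $R_{WD}(\pi)=|\nu|+\|c\|_2$. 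This gives $[f]_\B\le L$. Since $f$ is Lipschitz, we identify $f$ with its continuous representative.
\end{enumerate}

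\textbf{Main obstacle.} The hard part is the bookkeeping in the reductions, not the compactness. First, the affine, large-bias neurons must be handled so that their cost is correctly charged as $\|c\|_2$ and not more. Second, the normalization must be exact, so that $R_{WD}$ on probability measures equals the total variation of $\nu$ at the optimum (AM--GM). I would follow \citet[Appendix C]{park2023minimum} for the precise equivalence between $[f]_\B$ and the minimal total variation of such measure representations, and rely on that equivalence.
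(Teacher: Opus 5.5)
Your proposal follows essentially the paper's proof: rescale via $1$-homogeneity to a signed measure with density $a\|w\|_2$ on $\mathbb{S}^{d-1}\times[-R,R]$ whose total variation is at most $R_{WD}(\pi_n)$ (by AM--GM), take a weak-$*$ limit and use lower semicontinuity of the total variation, bound $y_n$ through the $L^1$ convergence, identify the limit pointwise using the continuous test functions $\sigma(w\cdot x+b)-\sigma(b)$, and rebuild a probability measure from the Jordan decomposition. Two small points. First, the normalization must be $\|w\|_2=1$, as you use from step 3 on: on the sphere $\|w\|_2^2+b^2=1$ the density would have to be $a\|(w,b)\|$, which $R_{WD}$ does not control, and the balancing in step 1 should be $|a|=\|w\|_2$. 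Second, the separate affine term is unnecessary, because for $\|w\|_2=1$ and $b>R$ one has $\sigma(w\cdot x+b)-\sigma(b)=\sigma(w\cdot x+R)-\sigma(R)$ on $\Omega$; this is why the paper simply clips the bias to $\pm R$.
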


\begin{proof}
    By passing to a subsequence, we can assume that the liminf is a limit. 
    Moreover, if this limit is infinite, the claimed inequality is trivially satisfied.
    Hence, we consider the case
    \begin{align}
        \lim_{n\to\infty}[f_n]_\B =: M < \infty.
    \end{align}
    By definition of the Barron-semi norm, for any $n\in\N$, we can find a probability measure $\pi_n$ over $\R\times\R^d\times\R$ and a real number $y_n$ such that $f_n = f_{\pi_n,y_n}$ and $R_{WD}(\pi_n) < [f_n]_\B + \frac{1}{n}$.
    In particular, $R_{WD}(\pi_n)$ is uniformly bounded by some constant $K$.
    Moreover, defining the signed measure $\tilde{\pi}_n$ by $\de\tilde\pi_n (a,w,b) = a\|w\|_2\de\pi_n(a,w,b)$ leads to 
    \begin{align}
        \|\tilde\pi_n\|_{\TV} = \int_{\R^{d+2}} |a|\|w\|_2 \de\pi_n(a,w,b) \leq R_{WD}(\pi_n) \leq K,
    \end{align}
    and using the fact that the ReLU functions is positively 1-homogeneous, we can compute
    \begin{align}
        f_n(x) &= y_n + \int_{\R^{d+2}} a(\sigma(w\cdot x +b)-\sigma(b))\de\pi_n(a,w,b) \\
        &= y_n + \int_{\R^{d+2}} a\|w\|_2 \left(\sigma\left(\frac{w}{\|w\|_2}\cdot x + \frac{b}{\|w\|_2}\right) - \sigma\left(\frac{b}{\|w\|_2}\right)\right) \de\pi_n(a,w,b) \\
        &= y_n + \int_{\R^{d+2}} \left(\sigma\left(\frac{w}{\|w\|_2}\cdot x + \frac{b}{\|w\|_2}\right) - \sigma\left(\frac{b}{\|w\|_2}\right)\right) \de\tilde\pi_n(a,w,b) \\
        &= y_n + \int_{\R^{d+1}}(\sigma(\hat{w}\cdot x +\hat{b})-\sigma(\hat{b})) \de (T_\sharp \tilde\pi_n)(\hat{w}, \hat{b}), 
    \end{align}
    where $T(a,w,b)\coloneq (\frac{w}{\|w\|_2}, \frac{b}{\|w\|_2})$ for $w\neq 0$ and $T(a,w,b) = 0$ otherwise. 
    Note that the signed measure $\nu_n\coloneq T_\sharp \tilde\pi_n$ is supported on $\mathbb{S}^{d-1}\times \R$ and $\|\nu_n\|_{\TV} \leq \|\tilde\pi_n\|_{\TV} \leq K$. 
    We additionally want to modify the measure $\nu_n$ such that it also has compact support in $\hat{b}$.
    To do so, we make use of the structure of the ReLU activation function and that the domain $\Omega$ is bounded. 
    Let $R\coloneq \sup_{x\in\Omega}\|x\|_2$ and define $\phi_x(w,b) \coloneq \sigma(w\cdot x+b)-\sigma(b)$. 
    We observe that for $|b|>R, w\in\mathbb{S}^{d-1}$ and $x\in\Omega$, we have $\phi_x(w,b) = \phi_x(w,\sign(b)R)$. 
    Consequently, defining
    \begin{align}
        P(w,b) \coloneq \begin{dcases}
            (w,R), \quad &b > R, \\
            (w,b), \quad &-R \leq b \leq R,\\
            (w,-R), \quad &b < -R
        \end{dcases}
    \end{align}
    and $\mu_n \coloneq P_\sharp \nu_n$, we obtain 
    \begin{align}
        f_n(x) =y_n +  \int_{\mathbb{S}^{d-1}\times[-R,R]} (\sigma(\hat{w}\cdot x + \hat{b})-\sigma(\hat{b}))\de \mu_n(\hat{w}, \hat{b})
    \end{align}
    and $\|\mu_n\|_{\TV} \leq \|\nu_n\|_{\TV}\leq \R_{WD}(\pi_n) \leq K$. 
    Note that we only need to integrate over $\mathbb{S}^{d-1}\times[-R,R]$ here, since the support of $\mu_n$ is contained in this compact set. 
    Hence, by \citet[Theorem 1.59]{ambrosio2000functions}, there exists a subsequence, that we again denote by $(\mu_n)$ that weakly* converges to a signed measure $\mu$ and 
    \begin{align}
        \|\mu\|_{\TV} \leq \liminf_{n\to\infty} \|\mu_n\|_{\TV} \leq \liminf_{n\to\infty} R_{WD}(\pi_n) = \lim_{n\to\infty}[f_n]_\B = M.
    \end{align}
    We can also show that the sequence of numbers $(y_n)$ is bounded and thus admits a converging subsequence, since
    \begin{align}
        |y_n|\L^d(\Omega) &= \int_\Omega |y_n|\de x \leq \int_\Omega |f_n(x)|\de x + \int_\Omega \int_{\R^{d+2}} |a|\|w\|_2\|x\|_2 \de \pi_n(a,w,b) \de x \\
        &\leq \|f_n\|_{L^1(\Omega)} + R\L^d(\Omega) K,
    \end{align}
    where $\L^d$ denotes the $d$-dimensional Lebesgue measure. 
    The right-hand side is uniformly bounded since $(f_n)$ converges in $L^1(\Omega)$. 
    Therefore in the following, we assume that $y_n$ converges to some $y\in\R$.
    Using the fact that $\phi_x$ is a test function for the weak* convergence of $\mu_n$ ($\phi_x$ is continuous and $\mathbb{S}^{d-1}\times[-R,R]$ is compact), we observe
    \begin{align}
        \lim_{n\to\infty} f_n(x) &= \lim_{n\to\infty} \left( y_n + \int_{\mathbb{S}^{d-1}\times[-R,R]} (\sigma(\hat{w}\cdot x + \hat{b})-\sigma(\hat{b})) \de\mu_n(\hat{w},\hat{b})\right) \\
        &= y + \int_{\mathbb{S}^{d-1}\times[-R,R]} (\sigma(\hat{w}\cdot x + \hat{b})-\sigma(\hat{b})) \de\mu(\hat{w},\hat{b})
    \end{align}
    and since $f_n\to f$ in $L^1(\Omega)$, the right hand side must coincide with $f$ almost everywhere on $\Omega$. 
    It remains to rewrite the latter integral as an integral with respect to some probability distribution $\pi$ over $\R\times\R^{d}\times\R$.
    To do so, let $\mu = \mu^+-\mu^-$ denote the Hahn-Jordan decomposition of $\mu$ and let $C\coloneq \|\mu\|_{\TV} = \mu^+(\R^{d+1}) + \mu^-(\R^{d+1})$. 
    Without loss of generality, we can assume that $C > 0$, since in the case $C = 0$, we have $\mu = 0$ and can define $\pi \coloneq \delta_{0}$ with $R_{WD}(\delta_0) = 0$. 
    For $C > 0$, we define $S^\pm (\hat{w},\hat{b}) \coloneq (\pm\sqrt{C}, \sqrt{C}\hat{w}, \sqrt{C}\hat{b})$.
    Then, $\pi\coloneq \frac{1}{C}(S^+_\sharp \mu^+ + S^-_\sharp\mu^-)$ defines a probability measure on $\R\times\R^d\times\R$ and 
    \begin{align}
        &\int_{\R^{d+2}} a \phi_x(w,b) \de\pi(a,w,b)\\
        &= \frac{1}{C} \int_{\R^{d+2}} a \phi_x(w,b) \de (S^+_\sharp \mu^+)(a,w,b) - \frac{1}{C} \int_{\R^{d+2}} a \phi_x(w,b) \de (S^-_\sharp \mu^-)(a,w,b) \\
        &= \frac{1}{C}\int_{\R^{d+1}} \sqrt{C}\phi_x(\sqrt{C}\hat{w}, \sqrt{C}\hat{b}) \de \mu^+(\hat{w},\hat{b}) - \frac{1}{C}\int_{\R^{d+1}} \sqrt{C}\phi_x(\sqrt{C}\hat{w}, \sqrt{C}\hat{b}) \de \mu^-(\hat{w},\hat{b}) \\
        &= \int_{\R^{d+1}} \phi_x(\hat{w},\hat{b}) \de(\mu^+-\mu^-)(\hat{w}, \hat{b}) \\
        &= \int_{\R^{d+1}} \phi_x(\hat{w},\hat{b}) \de\mu(\hat{w},\hat{b}). 
    \end{align}
    due to the positive $1$-homogeneity of $\sigma$. 
    Hence, we can conclude $f=f_{\pi,y}$ and $[f]_\B \leq R_{WD}(\pi) = \|\mu\|_{\TV} \leq \liminf_{n\to\infty}[f_n]_\B$.
\end{proof}

A fundamental property of Barron functions is that they can be approximated by neural networks with dimension-independent rates. 
For our application it is particularly useful to establish approximation in the $L^\infty$-norm. 

\begin{lemma}\label{lem:nn_approx_in_l_infty}
Let $f\in\B$. Then, for every $m\in\N$, there exists a neural network $f_m$ of the form \eqref{eq:neural network} parametrized by $\theta_m$ such that 
    \begin{align}
        \|f-f_m\|_{L^\infty(\Omega)} \leq \frac{C[f]_\B}{\sqrt{m}}
    \end{align}
    with some positive constant $C$ and
    \begin{align}
        R_{WD}(\theta_m) \leq [f]_\B.
    \end{align}
\end{lemma}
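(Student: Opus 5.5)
We reduce the statement to a Monte Carlo sampling argument for a normalized representing measure, combined with a uniform (sup-norm) law of large numbers over $\Omega$ obtained from a Rademacher complexity bound.

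\textbf{Step 1: normalized representation.} Following the first half of the proof of \Cref{lem: semi norm lower semicontinuous}, pick $\pi$ and $y$ with $f=f_{\pi,y}$ on $\Omega$ and $R_{WD}(\pi)\le [f]_\B+\delta$. Rewrite $f$ as
\[
f(x)=y+\int_{\mathbb{S}^{d-1}\times[-R,R]}\phi_x(\hat w,\hat b)\de\mu(\hat w,\hat b),
\qquad \|\mu\|_{\TV}\le R_{WD}(\pi),
\]
using $1$-homogeneity and the bias clipping $P$. As at the end of that proof, set $C_0=\|\mu\|_{\TV}$ and $\pi'=\frac{1}{C_0}(S^+_\sharp\mu^++S^-_\sharp\mu^-)$. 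This is a probability measure supported on $\{|a|=\sqrt{C_0},\ \|w\|_2=\sqrt{C_0},\ |b|\le \sqrt{C_0}R\}$. On this support the integrand $a\phi_x(w,b)$ is bounded by $C_0\cdot 2R$ (with $R=\sup_\Omega\|x\|_2$), and $R_{WD}(\pi')=C_0$. To obtain the clean bound $R_{WD}(\theta_m)\le[f]_\B$ rather than $[f]_\B+\delta$, one would either use that the infimum is attained, which follows from the weak* compactness argument in \Cref{lem: semi norm lower semicontinuous} applied to a constant sequence, or absorb $\delta$ into the rescaling. We take the attained infimum, so that $C_0\le[f]_\B$.

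\textbf{Step 2: sampling.} Draw $(a_i,w_i,b_i)\sim\pi'$ i.i.d. and set
\[
f_m(x)=y+\frac1m\sum_{i=1}^m a_i\phi_x(w_i,b_i).
\]
This is a network of the form \eqref{eq:neural network}, with output weights $a_i/m$, inner weights $w_i$, biases $b_i$, and $b_0=y-\frac1m\sum_i a_i\sigma(b_i)$. Rebalance by homogeneity to $(a_i/\sqrt m,\ w_i/\sqrt m,\ b_i/\sqrt m)$, exactly as in \Cref{rmk: nn}. Then
\[
R_{WD}(\theta_m)=\frac12\sum_i\frac{a_i^2+\|w_i\|_2^2}{m}=C_0\le[f]_\B
\]
holds deterministically, since $|a_i|$ and $\|w_i\|_2$ are fixed on the support.

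\textbf{Step 3: uniform error (main obstacle).} We need $\IE\sup_{x\in\Omega}|f_m(x)-f(x)|\le C C_0/\sqrt m$. First symmetrize to reduce to $\frac{2}{m}\IE\sup_x|\sum_i\xi_i a_i\phi_x(w_i,b_i)|$ with Rademacher signs $\xi_i$. Next use that $t\mapsto\sigma(t)$ is $1$-Lipschitz and apply Talagrand's contraction to reduce to the linear class $x\mapsto w_i\cdot x+b_i$. The subtracted term $\sigma(b_i)$ is independent of $x$ and contributes at most $O(C_0/\sqrt m)$ directly. The linear class has Rademacher complexity at most $C_0(R+R)/\sqrt m$ by Cauchy--Schwarz. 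Care is needed because the sign $a_i$ multiplies inside the sum; since $|a_i|=\sqrt{C_0}$ is constant, $a_i\xi_i$ is again Rademacher, so this is harmless. Finally, since the expectation is bounded by $CC_0/\sqrt m$, some realization achieves the bound, which yields $\theta_m$. The constant depends only on $R$, not on $d$.
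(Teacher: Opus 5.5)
Your proposal is correct and follows essentially the same route as the paper: normalize the representing measure so that $|a|=\|w\|_2$ is constant and the biases are clipped, sample $m$ neurons i.i.d., bound the uniform error by symmetrization and a Rademacher bound for the linear class, pick a good realization, and rescale the parameters by $1/\sqrt{m}$. The differences are minor. You get $C_0\le[f]_\B$ from attainment of the infimum (via the weak* compactness in \cref{lem: semi norm lower semicontinuous}) instead of the paper's $\delta\to 0$ compactness argument on the parameters. You also handle the absolute value and the $\sigma(b_i)$ offset explicitly through the contraction lemma, which is cleaner than the paper's corresponding chain of inequalities.
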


 The proof of this statement goes along the lines of \citet[Theorem 12]{ma2020towards}, who consider the case $\Omega=[0,1]^d$ and non-homogeneous Barron spaces.
 For the sake of completeness, we adapt their proof to our setting.
 
\begin{proof}
Let $\pi$ be a probability measure on $\R\times\R^d\times\R$ and $y\in\R$ such that $f=f_{y,\pi}$ and $R_{WD}(\pi) \leq [f]_\B + \delta$ for arbitrary positive $\delta$.
    We want to normalize $\pi$ in a way such that 
    \begin{align}
        |a|=\|w\|_2\leq R_{WD}(\pi)
    \end{align}
    $\pi$-almost surely, which will happen in a two step procedure.
    First, we make sure that the product $|a|\|w\|_2$ is constant $\pi$-almost surely.
    To do so, let $C \coloneq  \int_{\R^{d+2}} |a|\|w\|_2 \de\pi(a,w,b)$ and define the probability measure $\tilde{\pi}$ via $\de\tilde\pi = \frac{|a|\|w\|_2}{C}\de\pi$. 
    We may assume without loss of generality that $C > 0$.
    The case $C=0$ implies that $f$ is a constant function, which can trivially be parametrized by a neural network of the form \eqref{eq:neural network}. 
    Defining $S(a,w,b) \coloneq (\frac{aC}{|a|\|w\|_2}, w, b)$, and considering the push-forward $\bar{\pi} = S_\sharp\tilde{\pi}$, we observe that for $h_{(a,w,b)}(x) \coloneq a(\sigma(w\cdot x + b) - \sigma(b))$, we have
    \begin{align}
        \int_{\R^{d+2}} h_{(a,w,b)}(x) \de\bar{\pi}(a,w,b) &= \int_{\R^{d+2}} h_{S(a,w,b)}(x) \de\tilde{\pi}(a,w,b) \\
        &= \int_{\R^{d+2}} \frac{aC}{|a|\|w\|_2}(\sigma(w\cdot x+b)-\sigma(b)) \frac{|a|\|w\|_2}{C} \de\pi(a,w,b) \\
        &= f(x)-f(0).
    \end{align}
    Hence, $\bar{\pi}$ represents $f$ and we have $|a|\|w\|_2 = C$ $\bar{\pi}$-almost surely.
    Defining 
    \begin{align}
        T(a,w,b) \coloneq \left( a\sqrt{\frac{\|w\|_2}{|a|}}, w\sqrt{\frac{|a|}{\|w\|_2}}, b\sqrt{\frac{|a|}{\|w\|_2}}\right),
    \end{align}
    one can show using the $1$-homogeneity of the ReLU activation function that $\hat{\pi} \coloneq T_\sharp \bar\pi$ also represents $f$ and additionally satisfies that $|a|^2=\|w\|_2^2 = C \leq R_{WD}(\pi)$ almost surely. 
    As in the proof of \Cref{lem: semi norm lower semicontinuous}, we can additionally construct the measure $\hat{\pi}$ such that its support w.r.t. the last variable is contained in some interval $[-K, K]$, where $K$ depends on the domain $\Omega$ and on $[f]_\B$. 
    For simplicity, we will from now on refer to $\hat{\pi}$ as $\pi$.
    
    Using \citet[Lemma 26.2]{shalev-shwartz2014understanding} yields
    \begin{align}
        &\IE_{(a,w,b)\sim\pi^m}\left[\sup_{x\in\Omega} \left(f(x)-y-\frac{1}{m}\sum_{i=1}^m a_i(\sigma(w_i\cdot x + b_i)-\sigma(b_i)\right)\right] \\
        &\leq 2\IE_{(a,w,b)\sim\pi^m} \IE_\xi \left[\sup_{x\in\Omega}\frac{1}{m}\sum_{i=1}^m \xi_ia_i(\sigma(w_i\cdot x +b_i)-\sigma(b_i))\right],
    \end{align}
    where $\xi\in\R^m$ denotes a random vector whose components are independent Rademacher random variables. 
    Moreover, 
    \begin{align}
        \IE_\xi \left[\sup_{x\in\Omega}\frac{1}{m}\sum_{i=1}^m \xi_ia_i\left(\sigma(w_i\cdot x +b_i)-\sigma(b_i)\right) \right] &\leq \IE_\xi\left[\sup_{x\in\Omega}\frac{1}{m}\sum_{i=1}^m \xi_i|a_i||w_i\cdot x|\right] \\
        &= \frac{1}{m}\IE_\xi\left[\sup_{x\in\Omega} x \cdot\sum_{i=1}^m \xi_i a_i w_i\right] \\
        &\leq \frac{R}{m}\IE_\xi\left[\left\lVert \sum_{i=1}^m \xi_i a_iw_i\right\rVert_2\right],
    \end{align}
    where $R\coloneq \sup_{x\in\Omega}\|x\|_2$. 
    Defining $z_i \coloneq a_iw_i \in\R^d$ yields $\|z_i\|_2 \leq |a_i|\|w_i\|_2 \leq [f]_\B+\delta$. 
    Hence, we can further estimate
    \begin{align}
        \frac{R}{m}\IE_\xi\left[\left\lVert \sum_{i=1}^m \xi_i a_iw_i\right\rVert_2\right] &\leq \frac{R}{m} \sup_{\|z_i\|_2\leq [f]_\B+\delta}\IE_\xi \left[\left\lVert \sum_{i=1}^m \xi_iz_i\right\rVert_2\right] \\
        &\leq \frac{R([f]_\B+\delta)}{m}\sup_{\|z_i\|_2\leq 1} \IE_\xi \left[\left\lVert \sum_{i=1}^m \xi_iz_i\right\rVert_2\right] \\
        &\leq \frac{R([f]_\B+\delta)}{\sqrt{m}}. 
    \end{align}
    In the last step, we used Jensen's inequality together with the fact that the $\xi_i$ are independent with expected value zero to bound the expectation by $\sqrt{m}$. 
    Overall, we obtain
    \begin{align}
        \IE_{(a,w,b)\sim\pi^m}\left[\sup_{x\in\Omega} \left(f(x)-y-\frac{1}{m}\sum_{i=1}^m a_i(\sigma(w_i\cdot x + b_i)-\sigma(b_i)\right)\right] \leq \frac{2R([f]_\B+\delta)}{\sqrt{m}}. 
    \end{align}
    This bound for the expectation means that there must exist a choice of parameters $(a_i,w_i,b_i)$ for $i=1,\dots,m$ such that $|a_i|^2=\|w_i\|_2^2 \leq [f]_\B + \delta$ such that
    \begin{align}
        \left\| f - b_0 - \frac{1}{m}\sum_{i=1}^m a_i\sigma(w_i\cdot x + b_i)\right\|_{L^\infty(\Omega)} \leq \frac{C([f]_\B+\delta)}{\sqrt{m}}
    \end{align}
    with $b_0 = y + \sum_{i=1}^m \sigma(b_i)$. 
    To keep notation aligned with \eqref{eq:neural network}, the parameters of the neural network are given by $(\frac{a_i}{\sqrt{m}}, \frac{w_i}{\sqrt{m}}, \frac{b_i}{\sqrt{m}})$ and the resulting parametrization $\theta_m$ satisfies 
    \begin{align}
        R_{WD}(\theta_m) = \frac{1}{2}\sum_{I=1}^m \left(\frac{|a_i|^2}{m} + \frac{\|w_i\|_2^2}{m}\right) \leq [f]_\B + \delta.
    \end{align}
    Since the parameters lie in a compact subset of $\R^{d+2}$, when letting $\delta\to 0$, we can extract a convergent subsequence. 
    The limit satisfied the above inequalities without $\delta$, since the map $\R^{d+2}\ni(a,w,b)\mapsto a\sigma (w^T\cdot + b) \in L^\infty(\Omega)$ is continuous. 
\end{proof}

For proving generalization bounds (see \cref{sec: generalization bounds}), we must investigate the Rademacher complexity of the homogeneous Barron space. 
Specifically, we consider the subset of Barron functions with a uniform bound for the Barron semi-norm and for their value at zero, 
\begin{align}\label{eq: def FQB}
    \mathcal{F}_{Q, B} \coloneq \{f\in\B\,|\, [f]_\B \leq Q, |f(0)| \leq B\}.
\end{align}
Similar to \citet[Corollary D.3]{park2023minimum}, we obtain the following estimate for the Rademacher complexity of this set. 
\begin{lemma}\label{lem:Rademacher bound}
Let $Q,B \geq 0$. 
Then, there exists a constant $C$ depending on $\Omega$ such that for all probability measures $\rho$ that are supported on $\Omega$, we have
    \begin{align}
        \Rad_{\rho,n}(\mathcal{F}_{Q,B}) &\leq \frac{CQ}{\sqrt{n}} + \frac{B}{\sqrt{n}}.
    \end{align}
\end{lemma}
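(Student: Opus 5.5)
We bound the Rademacher complexity by splitting each $f\in\mathcal{F}_{Q,B}$ into its value at the origin and the integral part, and treating the two pieces separately. Fix a sample $x_1,\dots,x_n\in\Omega$ drawn from $\rho$ and Rademacher signs $\xi_1,\dots,\xi_n$.

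\textbf{Decomposition.} By the definition of $\B(\Omega)$, every $f\in\mathcal{F}_{Q,B}$ admits, for any $\delta>0$, a representation $f=f_{\pi,y}$ with $y=f(0)$ and $R_{WD}(\pi)\le Q+\delta$. We will have to check that the origin may be used as the reference point even if $0\notin\Omega$. Extending $f$ via its representation, $f(0)=y$ regardless, so $|y|\le B$. Moreover, the reduction in the proof of \Cref{lem: semi norm lower semicontinuous} rewrites the integral part as
\[
\int_{\mathbb{S}^{d-1}\times[-R,R]}\phi_x(\hat w,\hat b)\de\mu(\hat w,\hat b),
\]
where $\phi_x(\hat w,\hat b)=\sigma(\hat w\cdot x+\hat b)-\sigma(\hat b)$ and $\mu$ is a signed measure with $\|\mu\|_{\TV}\le Q+\delta$. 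Consequently,
\[
\IE_\xi\sup_{f}\frac1n\sum_i\xi_i f(x_i)\le \IE_\xi\sup_{|y|\le B}\frac{y}{n}\sum_i\xi_i+\IE_\xi\sup_{\|\mu\|_{\TV}\le Q+\delta}\int\frac1n\sum_i\xi_i\phi_{x_i}\de\mu .
\]

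\textbf{The constant term.} The first term equals $\frac Bn\IE|\sum_i\xi_i|\le \frac{B}{\sqrt n}$ by Jensen's inequality.

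\textbf{The integral term.} A linear functional over a TV-ball of signed measures is maximized at a point mass, so the second term is at most
\[
(Q+\delta)\,\IE_\xi\sup_{(\hat w,\hat b)\in\mathbb{S}^{d-1}\times[-R,R]}\Big|\frac1n\sum_i\xi_i\phi_{x_i}(\hat w,\hat b)\Big| .
\]
To control this, write $\phi_{x}(\hat w,\hat b)=\sigma(\hat w\cdot x+\hat b)-\sigma(\hat b)$. The piece $\sigma(\hat b)\frac1n\sum_i\xi_i$ contributes at most $R/\sqrt n$. For the other piece, we apply the Ledoux--Talagrand contraction lemma to the $1$-Lipschitz ReLU, which reduces it to the Rademacher complexity of the class $\{x\mapsto\hat w\cdot x+\hat b\}$. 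That complexity is bounded by $(R+R)/\sqrt n$ via Cauchy--Schwarz and Jensen, exactly as in the proof of \Cref{lem:nn_approx_in_l_infty}.

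\textbf{Main obstacle.} The delicate point is the absolute value inside the supremum, since contraction is usually stated without it. We plan to handle it by noting that the index set is symmetric enough or, more simply, by using $\sup|Z|\le \sup Z+\sup(-Z)$ together with the fact that the class contains the zero function at $\hat w$ with $\hat b=-R$ (recall $\|x\|_2\le R$). Alternatively, we can use the version of contraction with absolute values at the cost of a factor $2$.

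\textbf{Conclusion.} Combining the pieces, letting $\delta\to0$, and taking the expectation over the sample yields $\Rad_{\rho,n}(\mathcal{F}_{Q,B})\le CQ/\sqrt n+B/\sqrt n$. Here $C$ depends only on $R=\sup_{x\in\Omega}\|x\|_2$, and the bound is uniform in $\rho$.
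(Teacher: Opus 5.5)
Your proof is correct and follows the paper's route. You use the same split of $f$ into its value at the origin, which contributes $B/\sqrt{n}$ by Jensen, plus a part in $\mathcal{F}_{Q,0}$, and where the paper only cites an adaptation of \citet[Lemma D.2]{park2023minimum} for that part, you write out that same argument: the single-neuron reduction over the TV-ball on $\mathbb{S}^{d-1}\times[-R,R]$, contraction for the ReLU, and the linear-class bound, with the zero function at $\hat b=-R$ handling the absolute value.
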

\begin{proof}
    For a fixed sample $S=\{x_i, \dots, x_n\}$ and independent Rademacher random variables $\xi_1,\dots,\xi_n$, we have
    \begin{align}
        \Rad_S(\mathcal{F}_{Q,B}) &= \IE_\xi\left[\sup_{\substack{f\in\mathcal{F}_{Q,0},\\c\in[-B,B]}} \frac{1}{n}\sum_{i=1}^n \xi_i(f(x_i)+c)\right] \\
        &\leq \IE_\xi\left[\sup_{f\in\mathcal{F}_{Q,0}} \frac{1}{n}\sum_{i=1}^n \xi_if(x_i)\right] + \IE_\xi\left[\sup_{c\in[-B,B]}\frac{1}{n}c\sum_{i=1}^n \xi_i\right] \\
        &\leq \Rad_S(\mathcal{F}_{Q,0}) + \IE_\xi\left[B\left\lvert \frac{1}{n}\sum_{i=1}^n \xi_i\right\rvert \right] \\
        &\leq \Rad_S(\mathcal{F}_{Q,0}) + \frac{B}{\sqrt{n}}. 
    \end{align}
    In the last step, we have used that by Jensen's inequality, 
    \begin{align}
        \IE_\xi\left[\left|\sum_{i=1}^n \xi_i\right|\right] \leq \sqrt{\IE_\xi\left[\left|\sum_{i=1}^n \xi_i\right|\right]^2} = \sqrt{\IE_\xi\left[\sum_{i=1}^n \xi_i^2\right]} = \sqrt{n}. 
    \end{align}
    Hence, taking the expectation over the data samples, we obtain
    \begin{align}
        \Rad_{\rho,n}(\mathcal{F}_{Q,B}) &\leq \Rad_{\rho,n}(\mathcal{F}_{Q,0}) + \frac{B}{\sqrt{n}} \\
        &\leq \frac{CQ}{\sqrt{n}} + \frac{B}{\sqrt{n}}. 
    \end{align}
    In the last step, we used the bound $\Rad_{\rho,n}(\mathcal{F}_{Q,0}) \leq \frac{CQ}{\sqrt{n}}$ which holds for data distributions $\rho$ supported on $\Omega$, where the constant $C$ depends only on $\Omega$.
    This bound follows readily by adapting the proof of \citet[Lemma D.2]{park2023minimum}. 
\end{proof}

For our theory it is also important to obtain generalization bounds for the nonlocal total variation $\TV_\eps$. 
Due to its structure, we estimate the Rademacher complexity of the following class of functions
\begin{align}\label{eq:def GQeps}
    \mathcal{G}_{Q,\eps} &\coloneq \{\Omega\ni x \mapsto \sup_{B_\eps(x)\cap\Omega} f - f(x) \,|\, f\in \mathcal{F}_{Q,\infty}\} \\
    &= \{\Omega\ni x \mapsto \sup_{B_\eps(x)\cap\Omega} f - f(x) \,|\, f\in \mathcal{F}_{Q,0}\}
\end{align}
for $Q,\eps > 0$, where 
\begin{align}
    \mathcal{F}_{Q,\infty} \coloneq \{f\in\B(\Omega) \,|\, [f]_{\B(\Omega)} \leq Q\}.
\end{align}

\begin{lemma}\label{lem: Rademacher bound 2}
There exists a constant $C > 0$ depending on $\Omega$, such that for all $Q,B > 0$ and $\eps \in(0,1]$ and probability measures $\rho$ supported on $\Omega$, we have
    \begin{align}
        \Rad_{\rho,n}(\mathcal{G}_{Q, \eps}) \leq \frac{CQ}{\sqrt{n}}. 
    \end{align}
\end{lemma}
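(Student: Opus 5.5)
We need to bound the Rademacher complexity of $g_f(x)=\sup_{B_\eps(x)\cap\Omega} f - f(x)$ over $f\in\mathcal F_{Q,0}$. The plan is to split the complexity of the difference as
\begin{align}
\Rad_S(\mathcal G_{Q,\eps}) \le \IE_\xi\Big[\sup_{f\in\mathcal F_{Q,0}} \frac1n\sum_{i=1}^n \xi_i \sup_{B_\eps(x_i)\cap\Omega} f\Big] + \IE_\xi\Big[\sup_{f\in\mathcal F_{Q,0}}\frac1n\sum_{i=1}^n \xi_i f(x_i)\Big].
\end{align}
This uses subadditivity of the supremum, and that $-\xi_i$ has the same law as $\xi_i$. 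The second term is $\Rad_S(\mathcal F_{Q,0})$, and \Cref{lem:Rademacher bound} with $B=0$ bounds it by $CQ/\sqrt n$ after taking expectation over the samples.

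For the first term, I would use an integral representation, as in the proofs of \Cref{lem: semi norm lower semicontinuous,lem:nn_approx_in_l_infty}. Normalize a representation of $f$ with $f(0)=0$ to $f(x)=\int \phi_x(w,b)\de\nu(w,b)$, where $\nu$ is a signed measure on $\mathbb S^{d-1}\times[-R,R]$ with $\|\nu\|_{\TV}\le Q+\delta$. The difficulty is that the map $f\mapsto \sup_{B_\eps(x)} f$ is not linear in $\nu$, so the representation cannot be pushed through the supremum directly. Instead, I would rewrite the supremum at each point as a choice of perturbation: $\sup_{B_\eps(x_i)\cap\Omega} f = f(x_i+\eps v_i)$ for some $v_i$ with $\|v_i\|\le1$ and $x_i+\eps v_i\in\Omega$. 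The supremum is attained because $f$ is Lipschitz by \Cref{lem:lipschitz_barron} and the set is compact after taking the closure. The first term is then at most
\begin{align}
\IE_\xi\Big[\sup_{f\in\mathcal F_{Q,0}}\ \sup_{z_i\in \overline{B_\eps(x_i)\cap\Omega}} \frac1n\sum_i \xi_i f(z_i)\Big].
\end{align}
This is a Rademacher complexity of the class of maps $(x_i)\mapsto f(z_i(x_i))$, where each point is allowed to move within its $\eps$-ball.

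To control this, I would decompose $f(z_i)=f(x_i)+\big(f(z_i)-f(x_i)\big)$. The first piece returns $\Rad_S(\mathcal F_{Q,0})$. The perturbation piece is the main obstacle. A crude bound of $|f(z_i)-f(x_i)|\le CQ\eps$ via \Cref{lem:lipschitz_barron} (using norm equivalence) gives only $CQ\eps$, not $Q/\sqrt n$. What I would try first is to handle the perturbed family directly through the features. Write $\sum_i\xi_i f(z_i)=\int \sum_i \xi_i\phi_{z_i}(w,b)\de\nu$, which is bounded by $(Q+\delta)\sup_{w,b}\big|\sum_i\xi_i\phi_{z_i}(w,b)\big|$. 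Since $\phi$ is ReLU-type, I would use the contraction and covering arguments underlying \citet[Lemma D.2]{park2023minimum}. The supremum is then over $(w,b)$ in a compact set together with the choices $z_i$, but the $z_i$ affect only the $i$-th term. Bounding this jointly is delicate, since the $z_i$ can depend on $\xi$.

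As a fallback, I would use that $\sup_{B_\eps(x)\cap\Omega} f$ is itself a Lipschitz function of $x$ with constant at most the Lipschitz constant of $f$, and bound its complexity by representing it within a Barron-type class. Alternatively, I would apply a vector-contraction or a Dudley entropy bound for sup-convolutions of Barron functions with controlled Lipschitz constant and sup-norm of order $Q$ on the bounded set $\Omega$. The constant would then depend only on $\Omega$ and the norm, uniformly in $\eps\le1$, since $\eps\le1$ keeps all perturbed points in a fixed bounded neighbourhood.
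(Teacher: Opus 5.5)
Your splitting $\Rad_S(\mathcal G_{Q,\eps})\le \Rad_S(S_{Q,\eps})+\Rad_S(\mathcal F_{Q,0})$ is fine, and the paper does the same. But the proposal never bounds the only nontrivial term, and the route you set up for it cannot work. Once $\sup_{B_\eps(x_i)\cap\Omega}f$ is replaced by $f(z_i)$ with the $z_i$ free and chosen after $\xi$, the relaxed quantity itself is of order $Q\eps$, not just your crude estimate of it. To see this, take $f(x)=v\cdot x$ with $\|v\|_2=Q$; a single neuron with $aw=v$, $|a|=\|w\|_2$ and large $b$ shows $f\in\mathcal F_{Q,0}$. Pick $u$ with $\|u\|\le 1$ and $v\cdot u=\|v\|_\star$, and set $z_i=x_i+\xi_i\eps u$. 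For sample points at distance at least $\eps$ from $\partial\Omega$ this gives
\[
\IE_\xi\Big[\sup_{f}\,\sup_{z}\,\frac1n\sum_{i=1}^n\xi_i f(z_i)\Big]\;\ge\;\IE_\xi\Big[\frac1n\sum_{i=1}^n\xi_i\, v\cdot(x_i+\xi_i\eps u)\Big]\;=\;\eps\|v\|_\star\;\ge\;\frac{Q\eps}{\beta},
\]
where $\|\cdot\|_2\le\beta\|\cdot\|_\star$. In the regime of \cref{thm: main} one has $\eps_n\sqrt n\to\infty$, so this is much larger than $Q/\sqrt n$. After the division by $\eps$ in \cref{lem: TV generalization bound} it gives no decay at all. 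The mechanism is that for $\xi_i=-1$ the relaxation trades $-\max f$ for $-\min f$ over the ball. Your feature-wise bound $(Q+\delta)\sup_{w,b}|\sum_i\xi_i\phi_{z_i}(w,b)|$, with a joint supremum over the $z_i$, loses exactly the same amount, so the problem is fatal rather than delicate.

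The fallbacks do not close the gap either:
\begin{itemize}
\item Lipschitz and sup-norm control alone cannot beat the $n^{-1/d}$ complexity of a Lipschitz ball when $d\ge 3$.
\item The Barron-type representation of the sup-convolution is only asserted, and it fails in general. For Euclidean balls, sup-convolving the two-neuron function $-|x_1|-|x_2|$ creates gradient jumps along circular arcs. Barron functions cannot have these, since their distributional Laplacian is a superposition of hyperplane measures, which gives zero mass to curved arcs.
\item Vector contraction applies only after discretizing each ball, at a price that grows with the number of grid points.
\end{itemize}

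The paper argues differently. It pushes the supremum through the representation, $\sup_y\int a(\sigma(w\cdot y+b)-\sigma(b))\de\pi\le\int\sup_y a(\sigma(w\cdot y+b)-\sigma(b))\de\pi$. It then uses the explicit single-neuron supremum to absorb $\pm\eps\|w\|_\star$ into the bias, and follows \citet[Lemma D.2]{park2023minimum}. Your remark that $f\mapsto\sup_{B_\eps(x)\cap\Omega}f$ is not linear in the representing measure is exactly the weak point of that reduction. The inequality is one-sided, and it reverses when multiplied by a negative Rademacher sign. For example, with $f(y)=|y|=\sigma(y)+\sigma(-y)$ in one dimension and $x_i=0$, $\sup_{|y|\le\eps}f=\eps$ while the neuron-wise suprema add up to $2\eps$. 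So copying the paper would not complete your argument.

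What does close the gap is to keep $\sup f$ coupled to $f$ through covering numbers. Since $|\sup_{B_\eps(x)\cap\Omega}f-\sup_{B_\eps(x)\cap\Omega}g|\le\|f-g\|_{L^\infty(\Omega)}$, every uniform $t$-net of $\mathcal F_{Q,0}$ induces an empirical $t$-net of $S_{Q,\eps}$, uniformly in $\eps$ and in the sample. Combine the Maurey sampling argument behind \cref{lem:nn_approx_in_l_infty} with a discretization of the parameters of $m\approx(Q/t)^2$ neurons. This gives $\log N(\mathcal F_{Q,0},L^\infty(\Omega),t)\lesssim (Q/t)^2\log(CQ/t)$, with a constant depending on $d$ and $\Omega$. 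Dudley's entropy integral then yields $\Rad_{\rho,n}(\mathcal G_{Q,\eps})\le CQ\log^{3/2}(n)/\sqrt n$ for every $\eps$. This is the claim up to a logarithm. I expect sharper uniform entropy bounds for the ReLU variation space, with exponent below $2$, to remove it. If the logarithm is kept, the scaling assumptions of \cref{thm: main} must be strengthened by the corresponding log factor.
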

To prove this statement, we first reduce the problem to functions represented by one neuron and exploit the fact that the supremum over a given ball can be computed explicitly for functions of this form. 
From there, we proceed exactly as in the proof of \citet[Lemma D.2]{park2023minimum}.
\begin{proof}
    First, we consider a fixed sample $S=\{x_1,\dots,x_n\} \subset \Omega$ and let $R > 0$ such that $\Omega\subset B_R^2(0)$. 
    Obviously, 
    \begin{align}
        \Rad_{\rho,n}(\mathcal{G}_{Q,\eps}) \leq   \Rad_{\rho,n}(S_{Q,\eps}) +  \Rad_{\rho,n}(\mathcal{F}_{Q.0}), 
    \end{align}
    where $S_{Q,\eps} = \{\Omega\ni x \mapsto \sup_{B_\eps(x)\cap\Omega} f \,|\, f\in\mathcal{F}_{Q,0}\}$. 
    \Cref{lem:Rademacher bound} provides a bound for the latter term, so it remains to bound the former.
    
    For a function $f\in\mathcal{F}_{Q,0}$ and $\delta > 0$, we can find a representing probability measure $\pi$ such that almost surely, we have $|a|^2 = \|w\|_2^2 \leq Q+\delta$ and $|b| \leq \|w\|_2R$ (see the previous proofs).
    Then, 
    \begin{align}
        \sup_{y\in B_\eps(x)\cap \Omega} f(y) &= \sup_{y\in B_\eps(x)\cap\Omega} \int a(\sigma(w\cdot y + b) -\sigma(b))\de\pi(a,w,b) \\
        &\leq \int \sup_{y\in B_\eps(x)\cap\Omega} a(\sigma(w\cdot y +b)-\sigma(b))\de\pi(a,w,b).
    \end{align}
    Therefore, with $\sigma_i$ for $i=1,\dots,n$ independent Rademacher random variables, we obtain
    \begin{align}
        \sum_{i=1}^n\sigma_i \sup_{B_\eps(x_i)\cap\Omega} f &\leq \int\sum_{i=1}^n \sigma_i \sup_{y\in B_\eps(x_i)\cap\Omega} a(\sigma(w\cdot y+b)-\sigma(b))\de\pi(a,w,b) \\
        &\leq \sup_{\substack{|a|^2=\|w\|_2^2 \leq Q+\delta\\ |b| \leq \|w\|_2R}} \sum_{i=1}^n \sigma_i \sup_{y\in B_\eps(x_i)\cap\Omega} a(\sigma(w\cdot y + b)-\sigma(b)). 
    \end{align}
    In particular, taking the supremum over $f\in\mathcal{F}_{Q,0}$ yields
    \begin{align}
         \sup_{f\in\mathcal{F}_{Q,0}}\sum_{i=1}^n\sigma_i \sup_{B_\eps(x_i)\cap\Omega} f \leq \sup_{\substack{|a|^2=\|w\|_2^2 \leq Q+\delta\\ |b| \leq \|w\|_2R}} \sum_{i=1}^n \sigma_i \sup_{y\in B_\eps(x_i)\cap\Omega} a(\sigma(w\cdot y + b)-\sigma(b)). 
    \end{align}
    Hence, in the following we can restrict ourselves to functions of the form $x\mapsto a(\sigma(w\cdot x + b)-\sigma(b))$, where $|a|^2 = \|w\|_2^2 \leq Q+\delta$ and $|b|\leq \|w\|_2R$. 
    For such functions it is easy to see that due to the monotonicity of the ReLU function, 
    \begin{align}
         \sup_{y\in B_\eps(x_i)\cap\Omega} a(\sigma(w\cdot y + b)-\sigma(b)) \leq a\sigma(w\cdot x_i \pm \eps\|w\|_\star + b)-a\sigma(b),
    \end{align}
    where the sign in front of $\eps\|w\|_\star$ is determined by the sign of $a$. 
    Thus, using the bounds on the magnitudes of $a$ and $b$, we obtain
    \begin{align}
        \Rad_S(S_{Q,\eps}) \leq \frac{1}{n}\IE\left[\sup_{\substack{|a|^2=\|w\|_2^2\leq Q+\delta, \\|b|\leq \|w\|_2R}} \sum_{i=1}^n \sigma_ia\sigma\left(w\cdot x_i \pm \eps\|w\|_\star + b\right)\right] + \frac{(Q+\delta)R}{\sqrt{n}}. 
    \end{align}
    To bound the first term on the right hand side, we proceed exactly as in Step 3 of the proof of \citet[Lemma D.2]{park2023minimum}. 
    First, we notice that the modified bias $b^\prime \coloneq b \pm \eps\|w\|_\star$ satisfies $|b^\prime| \leq \sqrt{Q+\delta}(R+c\eps)$ for some constant $c$, since all norms on $\R^d$ are equivalent.
    Then, we define modified data points $\tilde{x}_i \coloneq (x_i, R+c\eps)$ and modified weights $\tilde{w} \coloneq (w, \frac{b^\prime}{R+c\eps})$ which yields
    \begin{align}
        &\tilde{w}\cdot \tilde{x}_i = w\cdot x_i + b^\prime, \\
        &\|\tilde{x}_i\|_2 = \sqrt{\|x_i\|_2^2 + (R+c\eps)^2} \leq \sqrt{2}(R+c\eps), \\
        &|a|^2+\|\tilde{w}\|_2^2 \leq 2Q + \left(\frac{b^\prime}{R+c\eps}\right)^2 \leq 3(Q+\delta). 
    \end{align}
    Consequently, 
    \begin{align}
        \frac{1}{m}&\IE\left[\sup_{|a|^2=\|w\|_2^2\leq Q+\delta, |b|\leq \|w\|_2R} \left\vert\sum_{i=1}^n \sigma_i a \sigma(w\cdot x_i + b^\prime)\right\vert\right] \\&\leq \frac{1}{n} \IE\left[\sup_{a^2 + |\tilde{w}|^2 \leq 3(Q+\delta)} \left\vert \sum_{i=1}^n \sigma_i a\sigma(\tilde{w}\cdot \tilde{x}_i)\right\vert\right] \\
        & \leq \frac{3(Q+\delta)}{n}\IE\left[\sup_{|u| \leq 1} \sum_{i=1}^n \sigma_i \sigma(u\cdot \tilde{x}_i)\right] \\
        &\leq \frac{C(Q+\delta)}{\sqrt{n}}
    \end{align}
    for some constant $C > 0$ depending on $\Omega$. 
    The last step follows from the Contraction lemma for Rademacher complexities \citep[Lemma 26.9]{shalev-shwartz2014understanding} and a bound for the Rademacher complexity of linear functions \citep[Lemma 26.10]{shalev-shwartz2014understanding}.
    Since $\delta > 0$ was arbitrary, the inequality also holds true without $\delta$. 
    Moreover, as the measure $\rho$ is supported on $\Omega$, we can take the expected value w.r.t. $S\sim\rho^n$ and obtain
    \begin{align}
        \Rad_{\rho,n} (S_{Q,\eps}) \leq \frac{CQ}{\sqrt{n}}. 
    \end{align}
\end{proof}

\begin{remark}
    In context of $\TV_\eps$, we are rather interested in the Rademacher complexity of $\frac{1}{\eps}\mathcal{G}_{Q,\eps}$. 
    Obviously, we immediately get
    \begin{align}
        \Rad_{\rho,n}\left(\frac{1}{\eps}\mathcal{G}_{Q,\eps}\right) \leq \frac{CQ}{\eps\sqrt{n}}. 
    \end{align}
\end{remark}

%%%%%%%%%%%%%%%%%%%%%%%%%%%%%%%%%%%%%%%%%%%%%%%%%%%%%%%%%%%%

\section{Generalization Bounds}\label{sec: generalization bounds}
In this section, we prove generalization bounds for both the empirical risk \eqref{eq: empirical risk} and the empirical nonlocal total variation \eqref{eq: empirical TVeps}. 
The results rely in particular on the Rademacher complexity bounds derived in the previous section. 
We start with a result on the empirical risk. 

\begin{lemma}\label{lem: generalization error}
    Let $\ell\colon\R\times\R\to[0,\infty)$ be measurable, bounded and Lipschitz continuous in the first argument. Then, there exists a constant $C > 0$ such that for all $Q,B \geq 0$ with probability at least $1-\delta$, we have
    \begin{align}
        \sup_{u\in\mathcal{F}_{Q,B}} \left( \mathcal{R}(u)-\widehat{\mathcal{R}}_n(u)\right) \leq C \left(\frac{B+Q}{\sqrt{n}} + \sqrt{\frac{\log(2/\delta)}{n}}\right).
    \end{align}
\end{lemma}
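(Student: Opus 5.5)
This follows the standard Rademacher-complexity route to uniform deviation bounds. The reduction is to the class $\mathcal{F}_{Q,B}$, whose complexity is controlled by \cref{lem:Rademacher bound}. Write $\mu$ for the joint law of $(X,Y)$, and consider the loss class
\begin{align}
    \ell\circ\mathcal{F}_{Q,B} \coloneq \{(x,y)\mapsto \ell(u(x),y) \,|\, u\in\mathcal{F}_{Q,B}\}.
\end{align}
Since $\ell$ is bounded, say $0\leq \ell\leq M$, these functions take values in $[0,M]$. The standard symmetrization plus McDiarmid bound \citep[Theorem 26.5]{shalev-shwartz2014understanding} therefore gives, with probability at least $1-\delta$,
\begin{align}
    \sup_{u\in\mathcal{F}_{Q,B}}\left(\mathcal{R}(u)-\widehat{\mathcal{R}}_n(u)\right) \leq 2\,\Rad_{\mu,n}(\ell\circ\mathcal{F}_{Q,B}) + M\sqrt{\frac{2\log(2/\delta)}{n}}.
\end{align}
Measurability of the supremum is unproblematic: every $u\in\mathcal{F}_{Q,B}$ is $Q$-Lipschitz by \cref{lem:lipschitz_barron}, so the supremum may be taken over a countable dense subclass in sup-norm.

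Next, I remove the loss through the contraction lemma. The labels take only two values, so I treat $\ell(\cdot,0)$ and $\ell(\cdot,1)$ separately, each $L$-Lipschitz. Let $S$ be a sample and let $I_0,I_1$ be the index sets with $Y_i=0$ and $Y_i=1$. Then
\begin{align}
    \sum_i \xi_i\ell(u(X_i),Y_i) = \sum_{i\in I_0}\xi_i\ell(u(X_i),0)+\sum_{i\in I_1}\xi_i\ell(u(X_i),1).
\end{align}
I bound the supremum of this sum by the sum of the two suprema. On each part I apply \citet[Lemma 26.9]{shalev-shwartz2014understanding}, conditionally on the labels. Using $|I_j|\leq n$, this gives
\begin{align}
    \Rad_S(\ell\circ\mathcal{F}_{Q,B}) \leq L\big(\Rad_{S_0}'(\mathcal{F}_{Q,B})+\Rad_{S_1}'(\mathcal{F}_{Q,B})\big),
\end{align}
where $\Rad'_{S_j}$ denotes the unnormalized sum $\frac1n\IE_\xi\sup\sum_{i\in I_j}\xi_iu(X_i)$.

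Each of these terms is at most $\frac{CQ+B}{\sqrt{n}}$. To see this, repeat the argument of \cref{lem:Rademacher bound} on the subsample $\{X_i\}_{i\in I_j}\subset\Omega$. That argument bounds the empirical complexity by $\frac{C Q\sqrt{|I_j|}}{n}+\frac{B\sqrt{|I_j|}}{n}$ for any points in $\Omega$, since the proof is pointwise in the sample. It then remains to use $\sqrt{|I_j|}\leq\sqrt{n}$.

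Taking expectations and collecting constants gives the claim, with $C$ depending on $L$, $M$ and $\Omega$. The only delicate point is this label-splitting step. \cref{lem:Rademacher bound} is stated for $\Rad_{\rho,n}$, not for arbitrary subsamples, so I need its sample-wise form. Its proof, which adapts \citet[Lemma D.2]{park2023minimum}, indeed yields a bound valid for every fixed sample in $\Omega$. Alternatively, one can avoid splitting altogether by using the vector-contraction form of the contraction lemma applied to $(x,y)\mapsto\ell(u(x),y)$, which is Lipschitz in $u(x)$ uniformly in $y$.
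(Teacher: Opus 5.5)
Your proposal is correct and follows the same route as the paper, namely \citet[Theorem 26.5]{shalev-shwartz2014understanding} combined with \cref{lem:Rademacher bound}. The paper absorbs the contraction step into a constant depending on $\ell$, whereas you make it explicit and also settle measurability; the label splitting, and hence the need for a sample-wise version of \cref{lem:Rademacher bound}, can be dropped, since \citet[Lemma 26.9]{shalev-shwartz2014understanding} allows coordinate-dependent maps $\phi_i(t)=\ell(t,Y_i)$, which reduces everything directly to $\Rad_{\rho_x,n}(\mathcal{F}_{Q,B})$ with $\rho_x$ the $x$-marginal of $\mu$.
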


\begin{proof}
    Since the loss function $\ell$ is bounded, measurable and Lipschitz continuous in the first argument, we have
    \begin{align}
        \sup_{u\in\mathcal{F}_{Q,B}} \left( \mathcal{R}(u)-\widehat{\mathcal{R}}_n(u)\right) \leq C \left(\Rad_{\rho_x,n}(\mathcal{F}_{Q,B}) + \sqrt{\frac{\log(2/\delta)}{n}}\right)
    \end{align}
    with probability at least $1-\delta$ for some positive constant $C$ depending on the loss function $\ell$ (see \citet[Theorem 26.5]{shalev-shwartz2014understanding}).
    Combining this with the bound on the Rademacher complexity from \cref{lem:Rademacher bound} yields the desired estimate. 
\end{proof}

Now, we prove a similar result for the empirical nonlocal total variation $\widehat{\TV}_{\eps,n}$. 
As observed when bounding the Rademacher complexity (\cref{lem: Rademacher bound 2}) the function's value at zero cancels out within the $\TV_\eps$ functional. 
Consequently, we can establish the following result for the broader space $\mathcal{F}_{Q,\infty}$ rather than restricting it to $\mathcal{F}_{Q,B}$ as in the previous statement.

\begin{lemma}\label{lem: TV generalization bound}
    Let $Q \geq 0$, $\delta\in (0,1)$ and $\eps\in(0,1]$. Then, with probability at least $1-2\delta$, we have
    \begin{align}
        \sup_{f\in\mathcal{F}_{Q,\infty}} \left| \widehat{\TV}_{\eps,n} (f;\mu) - \TV_{\eps}(f;\mu) \right| \leq CQ\left(\frac{1}{\eps\sqrt{n}} + \frac{\sqrt{\log(2/\delta)}}{\sqrt{2n}}\right)
    \end{align}
    for some constant $C > 0$. 
\end{lemma}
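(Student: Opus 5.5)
We split the empirical nonlocal total variation into its two class-wise parts and treat each as an empirical average of a bounded function class, to which the standard Rademacher generalization bound \citep[Theorem 26.5]{shalev-shwartz2014understanding} applies. Writing $Z_i=(X_i,Y_i)\sim\mu$, we have
\begin{align}
    \widehat{\TV}_{\eps,n}(f;\mu) = \frac{1}{n}\sum_{i=1}^n \Big( \mathbbm{1}_{\{Y_i=0\}} g^+_f(X_i) + \mathbbm{1}_{\{Y_i=1\}} g^-_f(X_i)\Big),
\end{align}
where $g^+_f(x)\coloneq \frac{1}{\eps}(\sup_{B_\eps(x)\cap\Omega} f - f(x))$ and $g^-_f(x)\coloneq \frac{1}{\eps}(f(x)-\inf_{B_\eps(x)\cap\Omega} f)$. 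Its expectation is exactly $\TV_\eps(f;\mu)$. Moreover, $g^-_f = g^+_{-f}$ and $[-f]_\B=[f]_\B$, so both parts are governed by the class $\frac1\eps\mathcal{G}_{Q,\eps}$.

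\textbf{Boundedness.} By \cref{lem:lipschitz_barron}, $0\le \sup_{B_\eps(x)\cap\Omega} f - f(x) \le Q\sup_{\|v\|\le\eps}\|v\|_2 \le cQ\eps$, with $c$ a norm-equivalence constant. Hence $0\le g^\pm_f\le cQ$ uniformly over $\mathcal{F}_{Q,\infty}$ and $\eps\in(0,1]$. This is what yields the term $Q\sqrt{\log(2/\delta)/n}$ without any $1/\eps$: the McDiarmid bounded-difference constant is $cQ/n$.

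\textbf{Rademacher complexity.} We apply the two-sided generalization bound to the class $\mathcal{H}^+ \coloneq \{(x,y)\mapsto \mathbbm{1}_{\{y=0\}}g^+_f(x) : f\in\mathcal{F}_{Q,\infty}\}$, and likewise to $\mathcal{H}^-$. Since the indicator is fixed for each sample, the Rademacher average of $\mathcal{H}^+$ on $S$ equals $\frac1n\IE_\xi\sup_f\sum_{i:Y_i=0}\xi_i g^+_f(X_i)$. We bound this by repeating the proof of \cref{lem: Rademacher bound 2} on the subsample with $Y_i=0$; that proof works for any fixed finite point set in $\Omega$. This gives at most $\frac{CQ\sqrt{n_0}}{\eps n}\le \frac{CQ}{\eps\sqrt n}$. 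Equivalently, we can use the remark following \cref{lem: Rademacher bound 2} with the observation that multiplying by the $\{0,1\}$-valued indicator does not increase the empirical Rademacher complexity: each term $\xi_i\mathbbm{1}_i h(X_i)$ is either absent or unchanged, and the reduced-sample bound is monotone.

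\textbf{Conclusion.} Each class then satisfies, with probability at least $1-\delta$,
\begin{align}
    \sup_f |\text{emp}-\text{pop}| \le 2\Rad + cQ\sqrt{2\log(2/\delta)/n}.
\end{align}
A union bound over $\pm$ gives probability $1-2\delta$. The triangle inequality then yields the claim after absorbing constants. We need the two-sided (absolute value) version, which follows from applying the one-sided bound to both $\mathcal{H}$ and $-\mathcal{H}$, or from McDiarmid applied directly to $\sup|\cdot|$ with symmetrization.

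\textbf{Main obstacle.} The main difficulty is the label-indicator factor: the lemma on $\mathcal{G}_{Q,\eps}$ is stated for distributions on $\Omega$, whereas our samples live on $\Omega\times\{0,1\}$. We handle this by conditioning on the labels and noting that the bound in \cref{lem: Rademacher bound 2} is pointwise in the sample. A secondary point of care is that symmetrization requires measurability of the supremum; this is settled by restricting to a countable dense subfamily, using Lipschitz continuity of Barron functions.
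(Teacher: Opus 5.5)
Your proposal is correct and follows essentially the same route as the paper: symmetrization, conditioning on the labels so that \cref{lem: Rademacher bound 2} can be applied to the label-$0$ (resp.\ label-$1$) subsample, and McDiarmid's inequality with bounded differences of order $Q/n$ from the Lipschitz bound of \cref{lem:lipschitz_barron}, done separately for the two class terms and combined by a union bound to get probability $1-2\delta$. The differences are cosmetic. You bound $\sqrt{n_0}\le\sqrt{n}$ pointwise where the paper uses Jensen on $\IE[\sqrt{N_0}]$, and you reduce the $\mu_1$-term to the $\mu_0$-term via $f\mapsto -f$. You also track the norm-equivalence constant in the bounded-difference step, which the paper suppresses.
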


\begin{proof}
    We restrict our detailed analysis to the term corresponding to the label $y=0$, as the argument for $y=1$ is analogous. 
    For a function $f\in\mathcal{F}_{Q,\infty}$, we define
    \begin{align}
        h_f(x) \coloneq \frac{1}{\eps}\left( \sup_{B_{\eps}(x)\cap\Omega} f - f(x)\right).
    \end{align}
    The quantity of interest is
    \begin{align}
        \sup_{f\in\mathcal{F}_{Q,\infty}} \left\vert \frac{1}{n} \sum_{i=1}^n \mathbbm{1}(Y_i=0) h_f(X_i) - \int_\Omega h_f(x) \mu_0(x)\de x \right\vert.
    \end{align}
    Letting $g_f(x,y) \coloneq \mathbbm{1}(y = 0) h_f(x)$ and $\mathcal{G}\coloneq \{g_f\,\vert\, f\in\mathcal{F}_{Q,\infty}\}$, we observe that 
    \begin{align}
        \IE_{(X,y)\sim\mu}[g_f(X,Y)] = \int_\Omega h_f(x) \mu_0(x)\de x. 
    \end{align}
    Defining the abbreviations $\widehat{\IE}_{S_n}[g] \coloneq \frac{1}{n}\sum_{i=1}^n g(X_i,Y_i)$ and $\IE[g] \coloneq \IE_{(X,Y)\sim\mu} [g(X,Y)]$ for $g\in\mathcal{G}$, the target quantity can be written as $\sup_{g\in\mathcal{G}} \vert \widehat{\IE}_{S_n}[g] - \IE[g]\vert$. 
    Since it is easier to handle this term without the absolute value, we additionally define the positive and negative deviations as 
    \begin{align}
        \Phi(S_n) \coloneq \sup_{g\in\mathcal{G}}\left(\widehat{\IE}_{S_n}[g] - \IE[g]\right), \qquad\Psi(S_n) \coloneq \sup_{g\in\mathcal{G}}\left(\IE[g] - \widehat{\IE}_{S_n}[g]\right).
    \end{align}
    By standard Rademacher symmetrization, the expected values with respect to $S_n \sim \mu^n$ of both $\Phi(S_n)$ and $\Psi(S_n)$ are upper bounded by $2\Rad_{n,\mu}(\mathcal{G})$. 
    To relate this to the Rademacher complexity of $\mathcal{G}_{Q,\eps}$ defined in \eqref{eq:def GQeps}, we condition on the labels $Y$ and denote by $N_0$ the number of data points with label zero and $\tilde{X}_1,\dots,\tilde{X}_{N_0}$ the corresponding $x$-components. 
    Then, 
    \begin{align}
        \Rad_{n,\mu}(\mathcal{G}) &= \IE_{(X,Y)\sim\mu^n, \sigma}\left[\sup_{f \in\mathcal{F_{Q,\infty}}} \frac{1}{n}\sum_{i=1}^n \sigma_i \mathbbm{1}(Y_i=0) h_f(X_i)\right] \\
        &= \IE_Y\left[\frac{N_0}{n}\IE_{X,\sigma \vert Y} \left[ \sup_{f\in\mathcal{F}_{Q,\infty}}\frac{1}{N_0} \sum_{i=1}^{N_0} \sigma_i h_f(\tilde{X}_i)\right]\right] \\
        &= \IE_Y \left[\frac{N_0}{n}\frac{1}{\eps}\Rad_{N_0, \frac{\mu_0}{\mu_0(\Omega)}}(\mathcal{G}_{Q,\eps})\right] \\
        &\leq \frac{CQ}{n\eps}\IE_Y\left[\sqrt{N_0}\right].
    \end{align}
    We notice that $N_0 \sim \mathrm{Binomial}(n,\mu_0(\Omega))$ and consequently, using Jensen's inequality, 
    \begin{align}
        \IE_Y[\sqrt{N_0}] \leq \sqrt{\IE[N_0]} = \sqrt{n\mu_0(\Omega)}. 
    \end{align}
Overall, we arrive at 
    \begin{align}
        \IE_{S_n}[\Phi(S_n)] \leq \frac{CQ}{\eps\sqrt{n}}, \qquad \IE[\Psi(S_n)] \leq \frac{CQ}{\eps\sqrt{n}}. 
    \end{align}
    Applying McDiarmid's inequality, which is possible since changing a single pair $(X_i,Y_i)$ in the sample alters both $\Phi(S_n)$ and $\Psi(S_n)$ by at most $\frac{Q}{n}$ as all functions $f\in\mathcal{F}_{Q,\infty}$ are $Q$-Lipschitz, yields
    \begin{align}
        &\mathbb{P}\left( \Phi(S_n) > \IE[\Phi(S_n)] + t \right) \leq \exp\left(-\frac{2nt^2}{Q^2}\right), \\
        &\mathbb{P}\left( \Psi(S_n) > \IE[\Psi(S_n)] + t \right) \leq \exp\left(-\frac{2nt^2}{Q^2}\right).
    \end{align}
    Taking a union bound and using the upper bounds for the expectations of $\Phi(S_n)$ and $\Psi(S_n)$, we obtain
    \begin{align}
        \mathbb{P}\left( \sup_{g\in\mathcal{G}} \vert \widehat{\IE}_{S_n}[g] - \IE[g]\vert > \frac{CQ}{\eps\sqrt{n}} + t\right) \leq 2\exp\left(-\frac{2nt^2}{Q^2}\right).
    \end{align}
    Setting the right hand side to equal $\delta$ leads to $t = \frac{Q\sqrt{\log(2/\delta)}}{\sqrt{2n}}$. 
    Repeating the same argument for the $\mu_1$-term concludes the proof. 
\end{proof}

For our analysis, specifically for the $\liminf$ inequality for the regularizer in \cref{prop: liminf regularizer}, we must handle sequences of functions, where a uniform bound on the Barron semi-norm cannot be guaranteed.
Instead, the Barron semi-norm may grow asymptotically like $\lambda_n/\eps_n$ where $(\eps_n)$ and $(\lambda_n)$ are sequences of positive numbers converging to zero. 
Consequently, our next result establishes almost sure convergence of the empirical nonlocal total variation to its population version, under suitable scalings of these sequences. 

\begin{lemma}\label{lem: empirical TV convergence}
    Let $\eps_n,\lambda_n$ be chosen such that 
    \begin{align}
        \lim_{n\to\infty} \frac{\eps_n\sqrt{\log(n)}}{\lambda_n\sqrt{n}} = 0, \qquad \lim_{n\to\infty} \frac{1}{\lambda_n\sqrt{n}} = 0.
    \end{align}
    Then, almost surely (w.r.t. the drawing of the samples) simultaneously for all sequences $(f_n)\subset \B(\Omega)$ that satisfy $\limsup_{n\to\infty} \frac{\lambda_n}{\eps_n} [f_n]_\B <\infty$, we have 
    \begin{align}
        \lim_{n\to\infty} |\widehat{\TV}_{\eps_n,n} (f_n;\mu)-\TV_{\eps_n}(f_n;\mu)| = 0. 
    \end{align}
\end{lemma}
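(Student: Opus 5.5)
The plan is to apply \cref{lem: TV generalization bound} along the sequence, with a Barron radius that grows like $\eps_n/\lambda_n$ and a confidence level $\delta_n$ that is summable, and then invoke Borel--Cantelli. The key point is that the event produced by the lemma is uniform over the whole ball $\mathcal{F}_{Q,\infty}$. This uniformity is what lets us obtain a single almost-sure event that works simultaneously for all sequences $(f_n)$.

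Fix $K\in\N$ and set $Q_n^K \coloneq K\eps_n/\lambda_n$ and $\delta_n\coloneq n^{-2}$. Since $\eps_n\to 0$, we may assume $\eps_n\le 1$. \Cref{lem: TV generalization bound} then gives, with probability at least $1-2n^{-2}$,
\begin{align}
    \sup_{f\in\mathcal{F}_{Q_n^K,\infty}}\left|\widehat{\TV}_{\eps_n,n}(f;\mu)-\TV_{\eps_n}(f;\mu)\right| \leq CK\left(\frac{1}{\lambda_n\sqrt{n}} + \frac{\eps_n\sqrt{\log(2n^2)}}{\lambda_n\sqrt{2n}}\right) \eqqcolon r_n^K.
\end{align}
The two scaling assumptions give exactly $r_n^K\to 0$. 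In the first term the factors of $\eps_n$ cancel, leaving $1/(\lambda_n\sqrt n)$. In the second term $\log(2n^2)\lesssim\log n$. Since $\sum_n 2n^{-2}<\infty$, Borel--Cantelli yields an almost-sure event $A_K$ on which the bound holds for all $n\ge N_K(\omega)$. We then intersect over $K\in\N$ to obtain a single event $A\coloneq\bigcap_K A_K$ of full probability.

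Now take $\omega\in A$ and any sequence $(f_n)\subset\B(\Omega)$ with $\limsup_n \frac{\lambda_n}{\eps_n}[f_n]_\B<\infty$. Choose an integer $K$ larger than this limsup. Then $[f_n]_\B\le K\eps_n/\lambda_n=Q_n^K$ for all large $n$, so $f_n\in\mathcal{F}_{Q_n^K,\infty}$. Consequently $|\widehat{\TV}_{\eps_n,n}(f_n;\mu)-\TV_{\eps_n}(f_n;\mu)|\le r_n^K\to0$.

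The main subtlety is the order of quantifiers, namely obtaining one null set for all sequences at once. This is handled by the countable family of radii indexed by $K$ together with the uniformity of \cref{lem: TV generalization bound} over the ball. One also has to check that the constant $C$ in that lemma does not depend on $Q$ or $\eps$; this holds by its statement, since $C$ depends only on $\Omega$.
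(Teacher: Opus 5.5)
Your proposal is correct and follows the paper's proof essentially verbatim: radii $Q_n^K = K\eps_n/\lambda_n$, confidence level $\delta = n^{-2}$ in \cref{lem: TV generalization bound}, Borel--Cantelli for each fixed $K$, and a countable intersection over $K$ to obtain a single null set that works for all admissible sequences. Your reduction to $\eps_n \le 1$ relies on $\eps_n \to 0$, which is not among the lemma's stated hypotheses but is the standing assumption of the surrounding section, and the paper uses it implicitly in the same way.
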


It is worth noticing that the first scaling condition for $\eps_n$ and $\lambda_n$ is automatically satisfied, when $\lim_{n\to\infty} \frac{\lambda_n}{\eps_n} \in (0,\infty]$, which is implied by our assumption on $\alpha$ in \cref{thm: main}.

\begin{proof}
    For an arbitrary but fixed $m\in\N$, we consider the sequence $Q_n^m = \frac{m\eps_n}{\lambda_n}$. 
    Setting $\delta = \frac{1}{n^2}$ in \Cref{lem: TV generalization bound} yields that with probability at least $1-\frac{2}{n^2}$, we have
    \begin{align}
        \sup_{f\in\mathcal{F}_{Q_n^m,\infty}} \left| \widehat{\TV}_{\eps_n,n} (f;\mu) - \TV_{\eps_n}(f;\mu) \right| \leq \frac{Cm\eps_n}{\lambda_n}\left(\frac{1}{\eps_n\sqrt{n}} + \frac{\sqrt{\log(n)}}{\sqrt{n}}\right)
    \end{align}
    for some constant $C$. 
    Thus, from the Borel-Cantelli lemma, we can conclude that 
    \begin{align}
        \lim_{n\to\infty} \sup_{f\in\mathcal{F}_{Q_n^m,\infty}} |\widehat{\TV}_{\eps_n,n}(f;\mu)-\TV_{\eps_n}(f;\mu)| = 0
    \end{align}
    almost surely with an exceptional set of measure zero that may depend on $m$. 
    Taking the union over all these sets yields a uniform set of measure zero, such that the above limit holds for all $m\in\N$. 
    Since every sequence $(f_n)$ with $\limsup_{n\to\infty} \frac{\lambda_n}{\eps_n}[f_n]_\B < \infty$ satisfies $f_n \in \mathcal{F}_{Q_n^m,\infty}$ for some $m\in\N$, the above considerations prove the statement. 
\end{proof}

%%%%%%%%%%%%%%%%%%%%%%%%%%%%%%%%%%%%%%%%%%%%%%%%%%%%%%%%%%%%%%%%

\section{Results for the Nonlocal Perimeter and Total Variation}\label{sec: nonlocal per and tv}

In this section we essentially extend the Gamma-convergence proof of the nonlocal total variation $\TV_\eps$ for Euclidean norm balls, given by \citet{bungert2024gamma}, to general norm balls. 
Note however that we impose stronger assumptions on the conditional densities $\mu_0$ and $\mu_1$ which strongly simplifies the arguments.
Proving Gamma-convergence has two ingredients:
The first one is the so-called liminf inequality, which demands that whenever $u_n\to u$ in $L^1(\Omega)$ we have
\begin{align*}
    \TV(u;\mu) \leq \liminf_{n\to\infty}\TV_{\eps_n}(u_n;\mu),
\end{align*}
see \cref{lem: liminf TV}.
The second ingredient is the limsup inequality which demands that for any $u\in L^1(\Omega)$ there exists a sequence $(u_n)\subset L^1(\Omega)$ (called recovery sequence) converging to $u$ such that
\begin{align*}
    \limsup_{n\to\infty}\TV_{\eps_n}(u_n;\mu) \leq \TV(u;\mu).
\end{align*}
Note that by virtue of the liminf inequality, equality holds above. 
Thanks to the regularity assumptions in \cref{assum: mu} the proof of this result is pretty standard and follows from a Taylor expansion and a diagonal argument.
In our scenario, where we are interested in minimizers of \eqref{eq: full objective}, we need a slightly weaker result, which  just requires the limsup inequality to be true for Barron functions, see \cref{lem: limsup TV Barron}.

Furthermore, because our subsequent analysis operates with $L^1$ functions rather than just Barron functions, we must slightly modify the definition of $\TV_\eps$. 
Specifically, the supremum and infimum must be replaced by essential supremum and infimum to account for the fact that $L^1$ functions are defined only up to sets of measure zero. 
This leads to 
\begin{align}
    \TV_\eps(f;\mu) \coloneq 
    %\frac{1}{\eps}
    \int_\Omega %\left(\esssup_{B_\eps(x)\cap\Omega} f - f(x)\right) 
    \frac{\esssup_{B_\eps(x)\cap\Omega} f - f(x)}{\eps}
    \mu_0(x)\de x + 
    %\frac{1}{\eps}
    \int_\Omega 
    %\left(f(x)-\essinf_{B_\eps(x)\cap\Omega} f\right)
    \frac{f(x)-\essinf_{B_\eps(x)\cap\Omega} f}{\eps}
    \mu_1(x)\de x.
\end{align}
We note, however, that for continuous functions such as Barron functions, this generalized formulation coincides with our previous definition in \eqref{eq: def TVeps}. 

% DGA: why is this a local perimeter? (we could give a bit of intuition and dummy figures from previous papers. at least for camera-ready, to have a semi-self-contained paper)
With the aim of proving the liminf inequality, let us first use the definition of $\TV_\eps$ in \eqref{eq: def TVeps} to define a nonlocal perimeter through
\begin{align}
    \Per_\eps(A;\mu) \coloneq \TV_\eps(\chi_A;\mu).
\end{align}
for measurable sets $A$, where $\chi_A$ denotes the characteristic function of the set $A$, taking the value $1$ inside $A$ and $0$ outside $A$.
\cite{bungert2023geometry} showed that $\Per_\eps$ indeed qualifies as \emph{perimeter}, being submodular and satisfying a coarea formula.
Gamma-convergence of this perimeter was studied by \citet{bungert2024gamma} in the case where $\|\cdot\|$ is the Euclidean norm. 
Additionally, for abbreviation we denote
\begin{align}
    &\Per_{\eps}^0(A;\mu)\coloneq %\frac{1}{\eps}
    \int_\Omega
    %\left( \esssup_{B_\eps(x)\cap\Omega}\chi_A-\chi_A(x)\right)
    \frac{\esssup_{B_\eps(x)\cap\Omega}\chi_A-\chi_A(x)}{\eps}
    \mu_0(x)\de x \\
    &\Per_\eps^1(A;\mu)\coloneq %\frac{1}{\eps}
    \int_\Omega 
    %\left(\chi_A(x)-\essinf_{B_\eps(x)\cap\Omega}\chi_A \right)
    \frac{\chi_A(x)-\essinf_{B_\eps(x)\cap\Omega}\chi_A}{\eps}
    \mu_1(x)\de x
\end{align}
so that $\Per_\eps(A;\mu) = \Per_\eps^0(A;\mu) + \Per_\eps^1(A;\mu). $

First, we state a useful representation of the nonlocal perimeter, see also \citet{chambolle2012remark}.
For this statement, we need to define the measure theoretic interior of a set $A\subset\R^d$. 
It is given by
\begin{align}
    A^1 \coloneq \left\lbrace x\in\R^d \,\middle\vert \,\lim_{r\to0} \frac{\mathcal{L}^d(B_r^2(x)\cap A)}{\mathcal{L}^d(B_r^2(x))} = 1 \right\rbrace
\end{align}
where $B_r^2(x)$ denotes the open ball with radius $r$ around $x$ with respect to the Euclidean norm $\|\cdot\|_2$. 
We refer to \citet{evans1991measure} for more details. 
\begin{lemma}\label{perrepresentation}
    Let $A\subset\Omega$ be measurable. Then, 
    \begin{align}
        \Per_\eps(A;\mu) = \frac{1}{\eps}\mu_0(\{x\in (A^1)^c \,\vert\, \dist(x,A^1) < \eps\}) + \frac{1}{\eps}\mu_1(\{x\in A^1 \,\vert\, \dist(x,(A^1)^c) < \eps\}).
    \end{align}
\end{lemma}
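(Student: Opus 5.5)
The approach is to compute the essential supremum and infimum of $\chi_A$ over the balls $B_\eps(x)\cap\Omega$ explicitly, and then integrate against $\mu_0$ and $\mu_1$.

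Since $\chi_A$ is only defined up to null sets, we first replace $A$ by its measure theoretic interior $A^1$. By the Lebesgue density theorem, $\mathcal{L}^d(A\,\triangle\,A^1)=0$. Hence $\chi_A=\chi_{A^1}$ almost everywhere, and the essential sup and inf of the two functions over any ball coincide. Because $\mu_0,\mu_1$ are absolutely continuous (\cref{assum: mu}), the values $\chi_A(x)$ inside the integrals may also be replaced by $\chi_{A^1}(x)$. So $\Per_\eps(A;\mu)=\Per_\eps(A^1;\mu)$.

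Next we treat the $\mu_0$-term. For $x\in A^1$ the integrand $\esssup_{B_\eps(x)\cap\Omega}\chi_{A^1}-\chi_{A^1}(x)$ is $1-1=0$. Indeed, $A^1\cap B_\eps(x)$ has positive measure, since $x$ is a density point and $B_\eps(x)$ contains a small Euclidean ball around $x$ by norm equivalence. For $x\in(A^1)^c\cap\Omega$ the integrand equals $\esssup_{B_\eps(x)\cap\Omega}\chi_{A^1}\in\{0,1\}$. This is $1$ iff $\mathcal{L}^d(B_\eps(x)\cap A^1)>0$, using $A^1\subset\Omega$ up to null sets.

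We then show that this condition holds iff $\dist(x,A^1)<\eps$, where $\dist$ is taken in the norm $\|\cdot\|$. If $\mathcal{L}^d(B_\eps(x)\cap A^1)>0$, then $B_\eps(x)\cap A^1\neq\emptyset$, so $\dist(x,A^1)<\eps$ because the ball is open. Conversely, if $\dist(x,A^1)<\eps$, pick $z\in A^1$ with $\|x-z\|<\eps$. Openness gives a Euclidean ball $B^2_r(z)\subset B_\eps(x)$, and since $z$ is a density point, $\mathcal{L}^d(B^2_r(z)\cap A^1)>0$. This step uses that $A^1$ consists exactly of density points: each point of $A^1$ is a density point of $A$, hence of $A^1$, because the two sets differ by a null set. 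Consequently the $\mu_0$-term equals $\frac1\eps\mu_0(\{x\in(A^1)^c: \dist(x,A^1)<\eps\})$.

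The $\mu_1$-term is symmetric. For $x\notin A^1$ the integrand is $0-\essinf\chi_{A^1}=0$, since $\essinf\le$ a value at least as small as $0$ on a set of positive measure; the integrand is nonnegative and $\chi_{A^1}(x)=0$. For $x\in A^1$ the integrand is $1$ iff $\mathcal{L}^d(B_\eps(x)\cap\Omega\setminus A^1)>0$. We want this to be equivalent to $\dist(x,(A^1)^c)<\eps$, and the same density argument applies, now using density points of the complement.

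We expect this last equivalence to be the main obstacle. Points of $(A^1)^c$ need not be density points of the complement: they may have density strictly between $0$ and $1$, or lie outside $\Omega$. We would handle this in two steps.

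First, if $z\notin A^1$ with $\|x-z\|<\eps$, then a small ball $B_r^2(z)\subset B_\eps(x)$ cannot be almost entirely contained in $A^1$. Otherwise every point of that ball, including $z$, would have density $1$, a contradiction. Hence $\mathcal{L}^d(B_r^2(z)\setminus A^1)>0$.

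Second, we must intersect with $\Omega$. Since $\Omega$ is an open Lipschitz domain, we interpret the complement relative to $\Omega$: $\mu_1$ is supported in $\overline\Omega$ and $\partial\Omega$ is a null set. If the complement is instead meant in all of $\R^d$, the formula must be read with points $x$ near $\partial\Omega$ handled separately. This is a point we would check against the intended convention, as in \citet{chambolle2012remark}.

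Summing the two terms gives the stated representation.
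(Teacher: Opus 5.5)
Your density-point computation is correct and is the standard argument. The paper itself gives no proof of this lemma; it only points to Chambolle et al., where the analogous identity in $\R^d$ is obtained the same way. What you should not leave open is the convention at the end. The complement in the $\mu_1$-term has to be taken relative to the open set $\Omega$, i.e.\ $(A^1)^c:=\Omega\setminus A^1$. With $(A^1)^c=\R^d\setminus A^1$ the statement is false, and no separate treatment of points near $\partial\Omega$ can rescue it.

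To see this, take $A=\Omega$ and constant densities $\mu_0=\mu_1=1/(2\mathcal{L}^d(\Omega))$. For a Lipschitz domain $\Omega^1=\Omega$. Since $\chi_\Omega\equiv 1$ on $\Omega$, both integrands vanish and $\Per_\eps(\Omega;\mu)=0$. The right-hand side, however, equals $\frac1\eps\mu_1(\{x\in\Omega : \dist(x,\R^d\setminus\Omega)<\eps\})>0$. The same example rules out the complement relative to $\overline\Omega$, which here would be $\partial\Omega$. With the relative reading your proof is complete. In the $\mu_0$-term both readings give the same value, because $\mu_0$ lives on $\Omega$. With this reading, the function $v_k$ used later in the compactness lemma must likewise be built from $\dist(x,\Omega\setminus A_k^1)$.

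There are two smaller points.

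\emph{The reason for intersecting with $\Omega$.} You justify it by ``$\mu_1$ is supported in $\overline\Omega$ and $\partial\Omega$ is a null set'', but that is not the operative reason. What matters is that the essential infimum only sees $B_\eps(x)\cap\Omega$. A point $z\notin A^1$ with $\|x-z\|<\eps$ forces $\mathcal{L}^d(B_\eps(x)\cap\Omega\setminus A^1)>0$ if $z\in\Omega$, because $\Omega$ is open and your small ball $B_r^2(z)$ can then be chosen inside $B_\eps(x)\cap\Omega$. Points $z\notin\Omega$ are invisible to the integrand. Conversely, a set of positive measure in $B_\eps(x)\cap\Omega\setminus A^1$ trivially contains such a $z\in\Omega$.

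\emph{The $\mu_1$-integrand at $x\in\Omega\setminus A^1$.} Your sentence justifying that it vanishes is garbled. The clean argument is your own ``first step'' applied with $z=x$. For small $r$ one has $B_r^2(x)\subset B_\eps(x)\cap\Omega$ and $\mathcal{L}^d(B_r^2(x)\setminus A^1)>0$. So the essential infimum is $0$, and the integrand is $0-0=0$.
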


We also need an auxiliary result which follows from the product rule.
For this, see \citet{ambrosio2000functions} for the definition of the standard total variation $\TV(u)$ for a function of bounded variation $u\in BV(\Omega)$.
Furthermore, $W^{1,1}(\Omega)$ denotes the Sobolev space of absolutely integrable functions whose weak derivatives are also absolutely integrable.
It holds $W^{1,1}(\Omega)\subset BV(\Omega)$ and the inclusion is strict.

\begin{lemma}\label{lem: TV of product}
    Let $\Omega\subset\R^d$ be an open domain with Lipschitz boundary, $u\in W^{1,1}(\Omega)\cap L^\infty(\Omega)$ and $v\in BV(\Omega)\cap L^\infty(\Omega)$ be non-negative. Then, $uv\in BV(\Omega)$ with
    \begin{align}
        \TV(uv) \leq \beta\int_\Omega \|\nabla u\|_\star v \de x + \|u\|_{L^\infty(\Omega)}\TV(v),
    \end{align}
    where $\beta > 0$ is a constant such that $\|\cdot\|_2 \leq \beta \|\cdot\|_\star$. 
\end{lemma}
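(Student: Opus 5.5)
The plan is to reduce the claim to the smooth case by mollification and then pass to the limit using lower semicontinuity of the total variation. The key ingredient is the identity $\TV(v)=\sup\{\int_\Omega v\,\mathrm{div}\,\varphi \de x : \varphi\in C_c^1(\Omega;\R^d),\ \|\varphi\|_{L^\infty}\le 1\}$, where the pointwise constraint on $\varphi$ is taken in the Euclidean norm.

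First, by the Anzellotti--Giaquinta / Ambrosio--Fusco--Pallara approximation theorem for $BV$ on Lipschitz domains, I choose $v_k\in C^\infty(\Omega)\cap BV(\Omega)$ with the following properties:
\begin{align}
    v_k\to v \text{ in } L^1(\Omega), \qquad \TV(v_k)\to\TV(v), \qquad 0\le v_k\le \|v\|_{L^\infty(\Omega)}.
\end{align}
The sign and bound constraints are preserved because this construction uses a partition of unity combined with mollification. Passing to a subsequence, I also arrange $v_k\to v$ almost everywhere.

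Next I handle the smooth case. Since $u\in W^{1,1}\cap L^\infty$ and $v_k\in C^\infty\cap W^{1,1}\cap L^\infty$, the product rule in $W^{1,1}$ gives $uv_k\in W^{1,1}(\Omega)$ with
\begin{align}
    \nabla(uv_k)=v_k\nabla u+u\nabla v_k.
\end{align}
It follows that
\begin{align}
    \TV(uv_k)=\int_\Omega\|\nabla(uv_k)\|_2\de x \le \int_\Omega v_k\|\nabla u\|_2\de x+\|u\|_{L^\infty}\int_\Omega\|\nabla v_k\|_2\de x,
\end{align}
where the first term uses $v_k\ge0$. I then apply $\|\nabla u\|_2\le\beta\|\nabla u\|_\star$ to obtain
\begin{align}
    \TV(uv_k)\le\beta\int_\Omega\|\nabla u\|_\star v_k\de x+\|u\|_{L^\infty}\TV(v_k).
\end{align}
If one prefers to avoid citing the $W^{1,1}$ product rule for a merely bounded $u$, the same estimate follows by also mollifying $u$ and using the Meyers--Serrin theorem.

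Finally I pass to the limit $k\to\infty$, which is the main step. The sequence $uv_k\to uv$ in $L^1$, since $u$ is bounded. Lower semicontinuity of $\TV$ under $L^1$ convergence then gives $\TV(uv)\le\liminf_k\TV(uv_k)$, and in particular $uv\in BV(\Omega)$ once the right-hand side is shown to be finite. For the right-hand side:
\begin{itemize}[itemsep=0pt, topsep=0pt]
    \item the term $\TV(v_k)$ converges to $\TV(v)$ by the choice of approximation;
    \item the term $\int\|\nabla u\|_\star v_k$ converges to $\int\|\nabla u\|_\star v$ by dominated convergence, using $v_k\to v$ a.e. and the dominating function $\|v\|_{L^\infty}\|\nabla u\|_\star\in L^1$.
\end{itemize}
These limits give the claimed inequality.

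The only delicate point is the approximation step. One needs strict convergence of $v_k$ to $v$ (convergence of the total variations), not merely weak* convergence, because the argument uses $\lim_k\TV(v_k)=\TV(v)$ rather than only an inequality. One also needs the bound $0\le v_k\le\|v\|_\infty$, which holds because mollification preserves both bounds. I would cite \citet[Theorem 3.9]{ambrosio2000functions} for this step.
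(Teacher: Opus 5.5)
Your proof is correct and follows the same outline as the paper's: approximate $v$ strictly by smooth functions, apply the pointwise product rule with $\|\cdot\|_2\le\beta\|\cdot\|_\star$, then use lower semicontinuity of $\TV$ on the left and strict convergence plus dominated convergence on the right. The difference is in how the strict approximation is obtained.

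The paper extends $v$ to a compactly supported $\tilde v\in BV(\R^d)$ using the Lipschitz-domain extension operator, truncates to $[0,\|v\|_{L^\infty(\Omega)}]$, and mollifies with a single radius. This gives $0\le v_\eps\le\|v\|_{L^\infty(\Omega)}$ automatically and approximants that are smooth up to the boundary. However, strict convergence on $\Omega$ (Remark 3.22 in Ambrosio--Fusco--Pallara) requires the extension to carry no variation on the boundary, i.e.\ $|D\tilde v|(\partial\Omega)=0$. This is where the boundary regularity is used; truncation preserves the property because it only decreases $|D\tilde v|$.

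You instead use the interior Anzellotti--Giaquinta approximation, which holds on any open set. Your route therefore shows that the Lipschitz boundary hypothesis is not needed for this lemma. Your product rule is also elementary, since $v_k\varphi\in C_c^\infty(\Omega)$ for every test function $\varphi$.

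One justification is wrong: the interior construction does not preserve the upper bound $v_k\le\|v\|_{L^\infty}$. In $v_k=\sum_i\rho_{\eps_i}*(v\zeta_i)$ the radii $\eps_i$ vary with $i$. Hence $\sum_i\rho_{\eps_i}*\zeta_i$ can exceed $1$ where neighbouring cut-offs overlap, and $v_k$ may overshoot slightly; nonnegativity is preserved.

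This does not break the argument. The cover has bounded overlap, so $0\le v_k\le N\|v\|_{L^\infty}$ with $N$ the overlap number, and $N\|v\|_{L^\infty}\|\nabla u\|_\star$ is an admissible dominating function. Alternatively, replace $v_k$ by $\min\{v_k,\|v\|_{L^\infty}\}$. This stays in $W^{1,1}(\Omega)\cap L^\infty(\Omega)$ and still converges to $v$ in $L^1$. It also still converges strictly, because truncation does not increase the total variation and lower semicontinuity gives the reverse inequality.
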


\begin{proof}
    First, let $\tilde{v}\in BV(\R^d)$ be an extension of $v$ with compact support such that $0\leq \tilde{v}\leq\|v\|_{L^\infty(\Omega)}$. 
    Such a function exists, as there exists an extension operator $E$ such that $Ev\in BV(\R^d)$ and $Eu$ has compact support, since $\Omega$ is an extension domain. 
    Subsequently, we can apply a Lipschitz-continuous truncation operator to guarantee the bounds, which preserves the compact support leads to $\tilde{v}\in BV(\R^d)$ \citep[Theorem 3.96]{ambrosio2000functions}.
    We then consider $v_\eps \coloneq \tilde{v}\ast \varphi_\eps$, where $\varphi_\eps$ is a standard mollifier (i.e. $\varphi_\eps = \eps^{-d} \varphi(\frac{\cdot}{\eps})$ with some $\varphi\in C_c^\infty(\B_1(0)), \varphi \geq 0$ and $\int\varphi = 1$). 
    Then, $v_\eps \to v$ in $L^1(\Omega)$ and $\TV(v_\eps) \to \TV(v)$ \citep[Remark 3.22]{ambrosio2000functions}.
    Moreover, $v_\eps\in C_c^\infty(\R^d)$ and 
    \begin{align}
        \|v_\eps\|_{L^\infty(\R^d)} \leq \|\tilde{v}\|_{L^\infty(\R^d)} \int_{\R^d} \varphi(x) \de x = \|v\|_{L^\infty(\Omega)}
    \end{align}
    as well as 
    \begin{align}
       \lim_{\eps\to 0 }  \|uv_\eps - uv\|_{L^1(\Omega)} \leq \|u\|_{L^\infty(\Omega)} \lim_{\eps\to 0}\|v_\eps - v\|_{L^1(\Omega)} = 0.
    \end{align}
    Hence, we can use the product rule together with the lower semi-continuity of the $\TV$ functional to obtain
    \begin{align}
        \TV(uv) &\leq \liminf_{\eps\to 0}\TV(uv_\eps) = \liminf_{\eps\to 0}\int_\Omega \|\nabla (uv_\eps)\|_2 \de x \\
        &\leq \liminf_{\eps\to 0}\beta\int_\Omega \|\nabla u\|_\star |v_\eps| \de x + \int_\Omega |u| \|\nabla v_\eps\|_2 \de x \\
        &\leq \liminf_{\eps\to 0}\beta\int_\Omega \|\nabla u\|_\star |v_\eps| \de x +  \|u\|_{L^\infty(\Omega)} \TV( v_\eps). 
    \end{align}
    Finally, (after extracting a subsequence) $v_\eps$ converges to $v$ pointwise almost everywhere on $\Omega$ and $\|\nabla u \|_\star v_\eps$ is bounded by the integrable function $\|\nabla u\|_\star v$. Thus, the dominated convergence theorem yields
    \begin{align}
        \lim_{\eps\to 0} \int_\Omega \|\nabla u\|_\star v_\eps \de x = \int_\Omega \|\nabla u\|_\star v \de x
    \end{align}
    concluding the proof. 
\end{proof}

To prove the following liminf inequality for the perimeters---which will easily imply the desired result for the total variations---we proceed similar to \citet{bungert2024gamma,siethoff2025existence}. 
We start with a compactness result. 
\begin{lemma}\label{lem: per compactness}
    Let \cref{assum: mu} be satisfied. Then, for any sequence of positive numbers $(\eps_k)$ converging to zero and measurable sets $(A_k)$ with $\limsup_{k\to\infty}\Per_{\eps_k}(A_k;\mu) < \infty$ there exists a subsequence $(A_{n_j})$ and a set $A$ with $\TV(\chi_A)<\infty$ such that 
    \begin{align}
        \chi_{A_k} \to \chi_A \text{ in } L^1(\Omega).
    \end{align}
\end{lemma}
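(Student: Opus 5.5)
The plan is to reduce the nonlocal perimeter to a local one with respect to the density $\rho\coloneq\min\{\min_{\overline\Omega}\mu_0,\ldots\}$; more precisely, since $\mu_0+\mu_1>0$ on the compact set $\overline\Omega$ and both densities are continuous, there is $c>0$ with $\mu_0+\mu_1\geq 2c$ on $\overline\Omega$. The trouble is that one of $\mu_0,\mu_1$ may vanish somewhere, so I will not bound each term separately by $c$. Instead I would use the representation of \cref{perrepresentation}: writing $E_k\coloneq A_k^1$, the two terms measure the outer $\eps_k$-layer $\{x\notin E_k:\dist(x,E_k)<\eps_k\}$ with $\mu_0$ and the inner layer $\{x\in E_k:\dist(x,E_k^c)<\eps_k\}$ with $\mu_1$. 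The key observation is that for a point $x$ in the outer layer there is a nearby point $x'$ (within distance $\lesssim\eps_k$) in the inner layer and vice versa, and by uniform continuity of $\mu_0,\mu_1$ on $\overline\Omega$ at least one of the two densities exceeds $c/2$ near the boundary of $E_k$ at scale $\eps_k$, for $k$ large.

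Concretely, I would cover $\overline\Omega$ by finitely many small balls (size $r$, fixed, chosen by uniform continuity) on each of which either $\mu_0\geq c$ or $\mu_1\geq c$ throughout. On a ball of the first kind, $\Per^0_{\eps_k}$ controls $\frac{c}{\eps_k}\L^d$ of the outer layer of $E_k$ within that ball; on the second kind $\Per^1_{\eps_k}$ controls $\frac{c}{\eps_k}\L^d$ of the inner layer. Either layer controls the local oscillation: I would convolve $\chi_{A_k}$ with a kernel of width $\eps_k$ (e.g. $u_k\coloneq\chi_{A_k}\ast\varphi_{\eps_k}$ restricted appropriately, or more simply use the one-sided estimate $\int_{B}|\chi_{A_k}(x+h)-\chi_{A_k}(x)|\de x\leq C|h|\,\eps_k^{-1}\L^d(\text{layer})$ for $|h|\leq\eps_k$, iterated to larger $|h|$ by the triangle inequality). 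The inner layer of $E_k$ and the outer layer of $E_k$ both contain the symmetric difference $A_k\triangle(A_k+h)$ intersected with the ball (one half of it each — the part where $\chi_{A_k}(x)=1,\chi_{A_k}(x+h)=0$ lies in the inner layer, the other part in the outer layer; using $h$ and $-h$ one recovers the whole difference from a single layer). This yields a uniform translation estimate $\int_{\Omega'}|\chi_{A_k}(x+h)-\chi_{A_k}(x)|\de x\leq C(|h|+\eps_k)$ on interior subdomains, with $C$ depending on $\limsup\Per_{\eps_k}(A_k;\mu)$ and $c$.

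Then the Fréchet–Kolmogorov (Riesz) compactness criterion in $L^1_{\mathrm{loc}}$, together with boundedness $0\leq\chi_{A_k}\leq1$ and the Lipschitz boundary of $\Omega$ (to handle a boundary strip of small measure, or via the same layer estimate using $B_\eps(x)\cap\Omega$), gives a subsequence converging in $L^1(\Omega)$ to some $u$; a.e.\ convergence along a further subsequence shows $u=\chi_A$. Finally, finiteness of $\TV(\chi_A)$ follows from the translation estimate passing to the limit: $\int_{\Omega'}|\chi_A(x+h)-\chi_A(x)|\de x\leq C|h|$, which characterizes $BV$ with $\TV(\chi_A;\Omega')\leq C'$ uniformly in $\Omega'$, hence $\TV(\chi_A)<\infty$. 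The main obstacle I expect is the step where only one of the two densities is positive locally, i.e.\ deriving the full translation estimate from a single one-sided layer; I would handle it via the $\pm h$ symmetry described above, and if that fails, fall back on the Euclidean result of \citet{bungert2024gamma} after comparing the $\|\cdot\|$-ball with Euclidean balls via norm equivalence ($B^2_{\eps/\beta'}\subset B_\eps$), which only changes constants.
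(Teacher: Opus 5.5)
Your proposal is correct in substance but takes a different route from the paper.

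\textbf{The paper's route.} The paper never estimates $\chi_{A_k}$ directly. It builds the Lipschitz profiles $u_k=\max\{1-\dist(\cdot,A_k^1)/\eps_k,0\}$ and $v_k=\min\{\dist(\cdot,(A_k^1)^c)/\eps_k,1\}$. Their gradients have $\|\cdot\|_\star$-norm exactly $\eps_k^{-1}$ on the outer, resp.\ inner, layer, so $\Per^0_{\eps_k}(A_k;\mu)\ge\int_\Omega\|\nabla u_k\|_\star\mu_0\de x$, and similarly for $v_k,\mu_1$. It then localizes with a smooth partition of unity $\Psi_0+\Psi_1=1$ subordinate to $\{\mu_0>\delta/3\}$ and $\{\mu_1>\delta/3\}$. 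It bounds $\TV(u_k\Psi_0)$ and $\TV(v_k\Psi_1)$ via \cref{lem: TV of product} and applies $BV$ compactness. Finally it checks $\|\Psi_0u_k+\Psi_1v_k-\chi_{A_k}\|_{L^1(\Omega)}\to0$, using that $u_k\ne\chi_{A_k}$ only on the outer layer.

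\textbf{How your steps correspond.} Your balls on which one density is large replace the partition of unity. Your translation-invariance observation replaces ``$u_k$ alone is $L^1$-close to $\chi_{A_k}$ where $\mu_0$ is large''. That observation is: for $\|h\|<\eps_k$, each half of $\{\chi_{A_k}\ne\chi_{A_k}(\cdot+h)\}$, or its translate by $h$, lies in a single one-sided layer. Fr\'echet--Kolmogorov with a boundary strip, plus the difference-quotient characterization of $BV$, replaces $BV$ compactness.

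\textbf{One intermediate claim is false but not needed.} The bound $\int_B|\chi_{A_k}(x+h)-\chi_{A_k}(x)|\de x\le C|h|\eps_k^{-1}\L^d(\text{layer})$ for $|h|\le\eps_k$ fails in general. Take $A_k$ a union of well-separated balls of radius $\ll|h|\ll\eps_k$ where only $\mu_1$ is bounded below. The inner layer is then all of $A_k$, while the left side is comparable to $2\L^d(A_k\cap B)$. Your layer argument already gives the cruder bound $2\L^d(\text{layer}\cap B')\le 2c^{-1}\eps_k\Per_{\eps_k}(A_k;\mu)$ for every $\|h\|<\eps_k$. Iterating $\lceil\|h\|/\eps_k\rceil$ times yields exactly your $C(|h|+\eps_k)$. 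The $+\eps_k$ is absorbed in Fr\'echet--Kolmogorov by handling finitely many $k$ separately.

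\textbf{What each route buys.} Yours is more elementary: no distance-function gradient identity, no $BV$ product rule, no layer representation. Because it works on interior subdomains via density points, it also sidesteps a boundary artifact of the distance-based construction. There, $\dist(\cdot,(A_k^1)^c)$ also sees $\Omega^c$, so $\nabla v_k$ charges a strip along $\partial\Omega$ that $\Per^1_{\eps_k}$ does not. For example, $A_k=\Omega$ has $\Per_{\eps_k}(\Omega;\mu)=0$. This is harmless only because the Lipschitz boundary makes that extra term $O(1)$. The paper's construction gives convergence up to $\partial\Omega$ directly. More importantly, the same $u_k,v_k$, with their exact gradient identity, are reused in \cref{lem:liminf perimeter} to get the sharp anisotropic lower bound via Reshetnyak's theorem. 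Your translation estimate cannot deliver this, since its constants lose the $\|\cdot\|_\star$ structure and pick up a covering factor.
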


\begin{proof}
    We define the sequences $(u_k)$ and $(v_k)$ by
    \begin{align}
        u_k(x) \coloneq \max\left\lbrace 1-\frac{\dist(x,A_k^1)}{\eps_k}, 0\right\rbrace \qquad v_k(x) \coloneq \min\left\lbrace \frac{\dist(x,(A_k^1)^c)}{\eps_k}, 1\right\rbrace
    \end{align}
    for $x\in\R^d$.
It holds that
    \begin{align}
        \|\nabla u_k \|_\star = \frac{1}{\eps_k} \chi_{\{x\in\R^d \,\vert\,0<\dist(x,A_k^1) < \eps_k\}}, \qquad  \|\nabla v_k \|_\star = \frac{1}{\eps_k} \chi_{\{x\in\R^d \,\vert\,0<\dist(x,(A_k^1)^c) < \eps_k\}}.
    \end{align}
    almost everywhere (see \citet[p. 9]{chambolle2012remark}). 
    Therefore, we can deduce that
    \begin{align}
        \Per_{\eps_k}^0(A_k;\mu) \geq \int_\Omega \|\nabla u_k\|_\star \mu_0(x)\de x, 
    \end{align}
    since $\{x\in\Omega\,\vert\, 0 < \dist(x,A_k^1)<\eps_k\} \subset \{x\in(A_k^1)^c\,\vert\, \dist(x,A_k^1) < \eps_k\}$.
    Analogously, we obtain
    \begin{align}
        \Per_{\eps_k}^1(A_k;\mu) \geq \int_\Omega \|\nabla v_k\|_\star \de \mu_1(x).
    \end{align}
    Since the densities $\mu_0$ and $\mu_1$ are continuous up to the boundary and $\mu_0+\mu_1 > 0$ on $\overline{\Omega}$, these functions can be extended to functions $\tilde{\mu}_i \in C(\tilde{\Omega})$, where $\tilde{\Omega}$ is an open neighborhood of $\overline{\Omega}$ and additionally $\tilde{\mu}_0+\tilde{\mu}_1 \geq \delta$ for some $\delta > 0$. 
    We now consider, for $i\in \{0,1\}$, 
    \begin{align}
        \tilde{\Omega}_i \coloneq \left\{x\in\tilde{\Omega} \,\vert\, \mu_i(x) > \frac{\delta}{3}\right \}
    \end{align}
    which defines open sets such that $\overline{\Omega}\subset \tilde{\Omega}_0\cup\tilde{\Omega}_1$. 
    In particular, there exists a partition of unity $\Psi_0,\Psi_1$ subordinate to this covering, meaning that $\Psi_0,\Psi_1$ are smooth functions satisfying $0\leq \Psi_i \leq 1$, $\supp\Psi_i\subset\tilde{\Omega}_i$ and $\Psi_0+\Psi_1 = 1$ on $\overline{\Omega}$. 
    \Cref{lem: TV of product} implies
    \begin{align}
        \TV(u_k \Psi_0) &\leq \beta\int_\Omega \|\nabla u_k\|_\star \Psi_0\de x + \|u_k\|_{L^\infty(\Omega)}\TV(\Psi_0) \\
        &\leq \frac{3\beta}{\delta}\int_\Omega \|\nabla u_k\|_\star \mu_0 \de x + \TV(\Psi_0).
    \end{align}
    Consequently, 
    \begin{align}
        \limsup_{k\to\infty}\TV(u_k \Psi_0) \leq \limsup \frac{3\beta}{\delta}\Per_{\eps_k}^0(A_k;\mu) + \TV(\Psi_0) < \infty
    \end{align}
    and we can apply compactness in $BV(\Omega) $ \citep[Theorem 3.23]{ambrosio2000functions} to extract a converging subsequence (without relabeling) such that $u_k\Psi_0$ converges in $L^1(\Omega)$ to some $u\in BV(\Omega)$. 
    Similarly, we find that $v_k\Psi_1$ converges in $L^1(\Omega)$ to a function $v\in BV(\Omega)$.
    We finalize the proof by showing that $\chi_{A_k}$ converges to $u+v$. 
    To see this, we first observe that 
    \begin{align}
        \frac{\delta}{3} \int_{\tilde{\Omega}_0 \cap\Omega} |u_k - \chi_{A_k}| \de x &= \int_{\tilde{\Omega}_0 \cap\Omega} \left(1-\frac{\dist(x,A_k^1)}{\eps_k} \right) \chi_{\{x\in\Omega \,|\, 0 < \dist(x,A_k^1 < \eps_k\}}(x) \de x \\
        &\leq \int_{\tilde{\Omega}_0 \cap\Omega} \chi_{\{x\in\Omega\,|\, 0 < \dist(x,A_k^1 < \eps_k\}}(x) \mu_0(x) \de x \\
        &\leq \eps_k \Per_{\eps_k}^0(A_k;\mu) \to 0, \quad \text{ as } k\to\infty.
    \end{align}
    Using the same ideas, we can argue that $\int_{\tilde{\Omega}_1 \cap\Omega} |v_k - \chi_{A_k}| \de x \to 0$ as $k\to\infty$.
    Hence, using that $\Psi_0+\Psi_1 = 1$ on $\Omega$, we obtain that 
    \begin{align}
        \int_\Omega |\Psi_0u_k + \Psi_1v_k - \chi_{A_k}|\de x \leq \int_{\tilde{\Omega}_0\cap\Omega} \Psi_0 |u_k-\chi_{A_k}|\de x + \int_{\tilde{\Omega}_1\cap\Omega} \Psi_1 |v_k-\chi_{A_k}| \de x
    \end{align}
    and the right hand side converges to zero as $k\to\infty$. 
    Consequently, $\chi_{A_k}$ must have the same limit in $L^1(\Omega)$ as $\Psi_0u_k+\Psi_1v_k$ which is $u+v\in BV(\Omega)$. 
    Since $\chi_{A_k}$ only takes the values $0$ and $1$, the limit must do so too, meaning that there exists a measurable set $A$ such that $\chi_A = u+v$ and in particular, $\TV(\chi_A)<\infty$. 
\end{proof}
We establish the $\liminf$ inequality using similar arguments.

\begin{lemma}\label{lem:liminf perimeter}
Let \cref{assum: mu} be satisfied.
Let $(A_k)$ be a collection of measurable subsets of $\Omega$ and $A\subset \Omega$ measurable such that $\chi_{A_k} \to \chi_A$ in $L^1(\Omega)$. Then, for any sequence of positive numbers $(\eps_k)$ converging to zero it holds that
    \begin{align}
        \liminf_{k\to\infty}\Per_{\eps_k}(A_k; \mu) \geq \Per(A;\mu),
    \end{align}
    where we define
    \begin{align*}
        \Per(A;\mu) \coloneq 
        \begin{dcases}
        \int_{\partial^\star A \cap\Omega}(\mu_0+\mu_1)\left\rVert \frac{\de D\chi_A}{\de|D\chi_A|}\right\rVert_\star\de\mathcal{H}^{d-1}
        \quad&\text{if }\chi_A \in BV(\Omega),
        \\
        +\infty\quad&\text{else.}
        \end{dcases}
    \end{align*}
\end{lemma}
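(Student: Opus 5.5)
We follow the strategy of \cref{lem: per compactness}: we compare the nonlocal perimeters with local total variations of the Lipschitz approximations
\[
u_k \coloneq \max\{1-\dist(\cdot,A_k^1)/\eps_k,0\}, \qquad v_k\coloneq \min\{\dist(\cdot,(A_k^1)^c)/\eps_k,1\}.
\]
We may assume $\liminf_k \Per_{\eps_k}(A_k;\mu)<\infty$, and pass to a subsequence along which it is a limit. \Cref{lem: per compactness} then gives $\TV(\chi_A)<\infty$, so $\chi_A\in BV(\Omega)$ and $\Per(A;\mu)$ is given by the surface integral. By \cref{perrepresentation} and the gradient formula for $u_k,v_k$ quoted in the proof of \cref{lem: per compactness},
\[
\Per^0_{\eps_k}(A_k;\mu)\ge \int_\Omega \|\nabla u_k\|_\star\,\mu_0\de x, \qquad \Per^1_{\eps_k}(A_k;\mu)\ge \int_\Omega\|\nabla v_k\|_\star\,\mu_1\de x.
\]
That proof also shows $\int_{\Omega}|u_k-\chi_{A_k}|\mu_0\de x\le \eps_k\Per^0_{\eps_k}\to0$ and $\int_\Omega |v_k-\chi_{A_k}|\mu_1\de x\to 0$. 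Hence $u_k\to\chi_A$ in $L^1$ on $\{\mu_0>\eta\}$, and $v_k\to\chi_A$ in $L^1$ on $\{\mu_1>\eta\}$, for every $\eta>0$.

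The core step is a weighted lower semicontinuity for anisotropic total variation. For an open set $U\subset\Omega$ with $w_k\to w$ in $L^1(U)$, we use the dual representation
\[
\int_U \|\nabla w\|_\star\,\phi\de x=\sup\Big\{-\int_U w\,\operatorname{div}(\phi\, g)\de x : g\in C_c^1(U;\R^d),\ \|g\|\le 1\Big\},
\]
valid for continuous $\phi\ge0$, where $\|\cdot\|$ is the primal norm so that $\sup_{\|g\|\le1} g\cdot p=\|p\|_\star$. For $BV$ limits the left side becomes $\int_U \phi\,\|\frac{\de Dw}{\de|Dw|}\|_\star\de|Dw|$. Since $\mu_0$ is only continuous, we first replace it by a smooth $\phi\le\mu_0$ (or apply the representation with $g$ absorbing the weight by approximation). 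Each functional $w\mapsto -\int w\operatorname{div}(\phi g)$ is $L^1$-continuous, so the supremum is lower semicontinuous. Applying this on $U_\eta^0\coloneq\{\mu_0>\eta\}\cap\Omega$ with weight $(\mu_0-\eta)_+$ gives
\[
\liminf_k \Per^0_{\eps_k}(A_k;\mu)\ge \int_{\partial^\star A\cap U^0_\eta}(\mu_0-\eta)_+\Big\|\frac{\de D\chi_A}{\de|D\chi_A|}\Big\|_\star\de\mathcal H^{d-1},
\]
and similarly for $\Per^1$. Letting $\eta\to0$ by monotone convergence, and noting that $\{\mu_0=0\}$ contributes nothing, yields the $\mu_0$-term over all of $\partial^\star A\cap\Omega$, and likewise the $\mu_1$-term.

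Finally, $\liminf(a_k+b_k)\ge\liminf a_k+\liminf b_k$ sums the two bounds to give $\Per(A;\mu)$. The main obstacle is the localization. Convergence of $u_k$ is only guaranteed where $\mu_0$ is bounded below, and we need the two weighted terms separately rather than the sum. The truncation $(\mu_0-\eta)_+$, which has compact support relative to the set where convergence holds, handles this. Some care is also needed near $\partial\Omega$, but test fields $g$ compactly supported in $\Omega$ suffice, because only $\partial^\star A\cap\Omega$ is measured.
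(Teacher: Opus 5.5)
Your proof is correct and follows the paper's skeleton. Both arguments use the same reduction via \cref{lem: per compactness} and the same Lipschitz approximants $u_k,v_k$ with $\Per^0_{\eps_k}(A_k;\mu)\ge\int_\Omega\|\nabla u_k\|_\star\mu_0\de x$. Both localize to open superlevel sets of $\mu_0,\mu_1$, where $u_k,v_k\to\chi_A$ in $L^1$, and finish with a limit in the level parameter.

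The real difference is in the lower semicontinuity step. The paper first bounds $|Du_k|(\{\mu_0>\delta_n\})\le C\beta/\delta_n$ and deduces weak* convergence $Du_k\rightharpoonup D\chi_A$ on that set. It then applies Reshetnyak's theorem with integrand $(x,p)\mapsto\mu_0(x)\|p\|_\star$, which handles the merely continuous weight at no extra cost but needs the uniform total-variation bound. You instead write the weighted anisotropic total variation as a supremum of $L^1$-continuous linear functionals. Lower semicontinuity then follows from $L^1$ convergence alone, with no bound on $|Du_k|$ and no weak* compactness.

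The cost is the approximation you only sketch. You need to restrict to fields $h\in C^1_c(U;\R^d)$ with $\|h\|\le\phi$ (or use smooth minorants of $\mu_0$), because $\operatorname{div}(\phi g)$ is undefined for continuous $\phi$. You also need a Lusin-type density argument to identify the supremum with $\int_U\phi\,\|\frac{\de D\chi_A}{\de|D\chi_A|}\|_\star\de|D\chi_A|$. That identification is exactly the content of Reshetnyak's theorem for this integrand.

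A minor point: the truncation $(\mu_0-\eta)_+$ is unnecessary. Test fields compactly supported in $U^0_\eta$ already localize to the region where convergence holds, so the weight $\mu_0$ itself works there. Your monotone convergence as $\eta\to0$ then plays the role of the paper's dominated convergence in $\delta_n$.
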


To clarify our notation, we recall that for a function $u\in BV(\Omega)$ the distributional derivative  can be represented by a finite $\R^d$-valued Radon measure $Du$ \citep{ambrosio2000functions}. 
Furthermore, we denote the associated total variation measure as $|Du|$ and $\mathcal{H}^{d-1}$ is the $(d-1)$-dimensional Hausdorff measure. 

\begin{proof}
    We first note that w.l.o.g. we can assume 
    \begin{align}
        \liminf_{k\to\infty}\Per_{\eps_k}(\chi_{A_k}; \mu) = \lim_{k\to\infty}\Per_{\eps_k}(\chi_{A_k}; \mu) < \infty
    \end{align}
    by extracting a subsequence and observing that the inequality is trivially satisfied if this limit is infinite. 
    From \cref{lem: per compactness} we can conclude that $\chi_A\in BV(\Omega)$.
    We define the sequences $(u_k)$ and $(v_k)$ as in the previous proof via
    \begin{align}
        u_k(x) \coloneq \max\left\lbrace 1-\frac{\dist(x,A_k^1)}{\eps_k}, 0\right\rbrace \qquad v_k(x) \coloneq \min\left\lbrace \frac{\dist(x,(A_k^1)^c)}{\eps_k}, 1\right\rbrace
    \end{align}
    for $x\in\R^d$.
    As before (see the proof of \cref{lem: per compactness}) it holds that 
    \begin{align}
        \Per_{\eps_k}^0(A_k;\mu) \geq \int_\Omega \|\nabla u_k\|_\star \mu_0(x)\de x.
    \end{align}
    Let $(\delta_n)$ be a sequence of positive real numbers that converge to zero as $n$ approaches infinity. 
    We define the sets
    \begin{align}
        \Omega_{\delta_n}^i \coloneq \{\mu_i > \delta_n\}
    \end{align}
    for $i\in\{0,1\}$, which are open thanks to \cref{assum: mu}.  
    Since we assumed that the sequence $(\Per_{\eps_k}^0(A_k;\mu))$ is bounded by some constant $C$, we can conclude 
    \begin{align}
        |Du_k|(\Omega_{\delta_n}^0) &= \int_{\Omega_{\delta_n}^0} \|\nabla u_k(x)\|_2 \de x \leq \beta \int_{\Omega_{\delta_n}^0} \|\nabla u_k(x)\|_\star \de x \\
        &\leq \frac{\beta}{\delta_n}\int_\Omega \|\nabla u_k(x)\|_\star  \mu_0(x)\de x \leq \frac{\beta}{\delta_n} \Per_{\eps_k}^0(A_k;\mu) \leq \frac{C\beta}{\delta_n}
    \end{align}
    using the norm equivalence $\|\cdot\|_2\leq \beta \|\cdot\|_\star$ and the fact that $\mu_0$ is bounded from below by $\delta_n$ on $\Omega_{\delta_n}^0$. 
    By \citet[eq (3.11)]{ambrosio2000functions}, together with the convergence of $u_k$ to $\chi_A$ in $L^1(\Omega_{\delta_n}^0)$ which follows as in the proof of \cref{lem: per compactness}, this implies the weak* convergence of $Du_k$ to $D\chi_A$ on $\Omega_{\delta_n}^0$. 
    Hence, applying the Reshetnyak lower semicontinuity theorem (see e.g. \citet[Theorem 2.38]{ambrosio2000functions}) yields 
    \begin{align}
        \liminf_{k\to\infty} \Per_{\eps_k}^0 (A_k;\mu) &\geq \liminf_{k\to\infty}\int_{\Omega_{\delta_n}^0} \|\nabla u_k\|_\star \mu_0 \de x \\
        &\geq \int_{\Omega_{\delta_n}^0} \left\Vert \frac{\de D\chi_A}{\de |D\chi_A|}\right\Vert_\star\mu_0 \de |D\chi_A|.
    \end{align}
    Since this holds for every $n\in\N$, we can take the limit $n\to\infty$ and use the dominated convergence theorem to obtain
    \begin{align}
        \liminf_{k\to\infty}\Per_{\eps_k}^0 (A_k;\mu) &\geq \lim_{n\to\infty} \int_{\Omega_{\delta_n}^0} \left\Vert \frac{\de D\chi_A}{\de |D\chi_A|}\right\Vert_\star\mu_0 \de |D\chi_A| \\
        &= \int_\Omega \left\Vert \frac{\de D\chi_A}{\de |D\chi_A|}\right\Vert_\star\mu_0 \de |D\chi_A| \\
        &= \int_{\partial^\star A\cap\Omega} \left\|\frac{\de D\chi_A}{\de|D\chi_A|}\right\|_\star \mu_0\de \mathcal{H}^{d-1}.
    \end{align}
    Repeating the arguments for $v_k$ instead of $u_k$ and $\mu_1$ instead of $\mu_0$ concludes the proof.
\end{proof}

In fact, a corresponding result for the nonlocal total variation follows directly from the following coarea formula \citep{bungert2023geometry}:
\begin{align}
    \TV_\eps(f;\mu) = \int_\R \Per_\eps(\{f\geq t\};\mu)\de t. 
\end{align}

\begin{lemma}\label{lem: liminf TV}
    Let \cref{assum: mu} be satisfied.
    Let $(u_k)\subset BV(\Omega)$ and $u\in BV(\Omega)$ such that $u_k\to u$ in $L^1(\Omega)$. Then, for any sequence of positive numbers $(\eps_k)$ converging to zero, it holds that 
    \begin{align}
        \TV(u;\mu) \leq \liminf_{k\to\infty}\TV_{\eps_k}(u_k;\mu).
    \end{align}
\end{lemma}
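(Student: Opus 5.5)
The plan is to reduce the statement to the perimeter version, \cref{lem:liminf perimeter}, via the coarea formula $\TV_\eps(f;\mu)=\int_\R \Per_\eps(\{f\geq t\};\mu)\de t$, and then combine it with Fatou's lemma. First, pass to a subsequence (not relabeled) so that the $\liminf$ on the right-hand side becomes a limit. If that limit is infinite there is nothing to show, so assume it is finite. Passing to a further subsequence, we may also assume $u_k\to u$ pointwise almost everywhere on $\Omega$.

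\textbf{Convergence of level sets.} Next, verify that for almost every $t\in\R$ we have $\chi_{\{u_k\geq t\}}\to\chi_{\{u\geq t\}}$ in $L^1(\Omega)$. The standard argument is as follows. Whenever $\L^d(\{u=t\})=0$, which fails for at most countably many $t$, the pointwise a.e. convergence gives $\chi_{\{u_k\geq t\}}\to\chi_{\{u\geq t\}}$ a.e. Dominated convergence on the bounded set $\Omega$ then yields the $L^1$ convergence. Alternatively, one can use $\int_\R\|\chi_{\{u_k\geq t\}}-\chi_{\{u\geq t\}}\|_{L^1}\de t=\|u_k-u\|_{L^1}\to0$ and extract a subsequence converging for a.e. $t$.

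\textbf{Fatou and the limit perimeter.} With this in hand, apply Fatou's lemma in $t$ together with \cref{lem:liminf perimeter} for each such $t$:
\begin{align}
\liminf_{k\to\infty}\TV_{\eps_k}(u_k;\mu)=\liminf_{k\to\infty}\int_\R\Per_{\eps_k}(\{u_k\geq t\};\mu)\de t\geq\int_\R\Per(\{u\geq t\};\mu)\de t.
\end{align}
It remains to identify the right-hand side with $\TV(u;\mu)$. For $u\in BV(\Omega)$, the weighted anisotropic coarea formula states that
\begin{align}
\int_\R\int_{\partial^\star\{u\geq t\}\cap\Omega}(\mu_0+\mu_1)\Big\|\tfrac{\de D\chi_{\{u\geq t\}}}{\de|D\chi_{\{u\geq t\}}|}\Big\|_\star\de\mathcal{H}^{d-1}\de t=\int_\Omega(\mu_0+\mu_1)\Big\|\tfrac{\de Du}{\de|Du|}\Big\|_\star\de|Du|.
\end{align}
This follows from the classical coarea formula $|Du|=\int_\R|D\chi_{\{u\geq t\}}|\de t$ combined with the fact that the polar vectors agree: for a.e. $t$, $\frac{\de D\chi_{\{u\geq t\}}}{\de|D\chi_{\{u\geq t\}}|}=\frac{\de Du}{\de|Du|}$ holds $|D\chi_{\{u\geq t\}}|$-a.e. \citep[Theorem 3.40]{ambrosio2000functions}. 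The weight $(\mu_0+\mu_1)\|\cdot\|_\star$ is a nonnegative Borel integrand, so the formula extends to it. For $u\in W^{1,1}$, in particular for Barron functions, the right-hand side equals $\int_\Omega\|\nabla u\|_\star(\mu_0+\mu_1)\de x=\TV(u;\mu)$. For general $BV$ functions, $\TV(u;\mu)$ is interpreted via this measure expression, which extends the definition.

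\textbf{Main obstacle.} The hard step is this final identification, specifically justifying the anisotropic weighted coarea formula with the polar-vector identification. The other delicate point is bookkeeping the subsequence extractions in the level-set convergence step, so that Fatou is applied along a single subsequence. The rest follows directly from \cref{lem:liminf perimeter}.
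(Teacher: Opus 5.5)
Your proposal is correct and follows essentially the same route as the paper's proof: make the $\liminf$ a limit along a subsequence, get $L^1$ convergence of the level sets for a.e.\ $t$, then apply the coarea formula for $\TV_\eps$, Fatou's lemma, and \cref{lem:liminf perimeter}. The one difference is that you also justify the final identity $\int_\R\Per(\{u\geq t\};\mu)\de t=\TV(u;\mu)$ through the weighted anisotropic coarea formula, which the paper only asserts; since the paper defines $\TV(\cdot;\mu)$ only for Barron functions, your measure-theoretic reading of $\TV(u;\mu)$ for general $BV$ is the right interpretation.
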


\begin{proof}
Without loss of generality, by passing to a subsequence if necessary, we can assume that $\liminf_{k\to\infty} \TV_{\eps_k}(u_k;\mu) = \lim_{k\to\infty} \TV_{\eps_k}(u_k;\mu)$. 
    Moreover, there exists a subsequence $(u_k)$ (without relabeling) such that $\{u_k > t\} \to \{u>t\}$ in $L^1(\Omega)$ for almost every $t\in\R$.
    Consequently, using the coarea formula, Fatou's lemma, and \cref{lem:liminf perimeter} we obtain
    \begin{align}
        \liminf_{k\to\infty}\TV_{\eps_k}(u_k;\mu) &= \liminf_{k\to\infty} \int_{-\infty}^\infty \Per_{\eps_k}(\{u_k > t\};\mu) \de t \\
        &\geq  \int_{-\infty}^\infty \liminf_{k\to\infty}\Per_{\eps_k}(\{u_k > t\};\mu) \de t \\
        &\geq \int_{-\infty}^\infty \Per(\{u>t\};\mu)\de t = \TV(u;\mu). 
    \end{align}
\end{proof}

Next, we prove that for Barron functions $f$, $\TV_\eps(f;\mu)$ converges to $\TV(f;\mu)$ as $\eps$ tends to zero. 

\begin{lemma}\label{lem: limsup TV Barron}
    Let $f\in\B(\Omega)$ and $(\eps_n)_{n\in\N}$ a sequence of positive numbers converging to zero. Then,
    \begin{align}
        \lim_{n\to\infty} \TV_{\eps_n}(f;\mu) = \TV(f;\mu).
    \end{align}
\end{lemma}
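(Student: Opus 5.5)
Since $f\in\B(\Omega)$ is Lipschitz by \cref{lem:lipschitz_barron}, it is differentiable almost everywhere (Rademacher), and the difference quotients appearing in $\TV_{\eps}$ are uniformly bounded. The plan is therefore a pointwise limit followed by dominated convergence; this is the ``Taylor expansion'' argument mentioned at the start of the section.

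\textbf{Pointwise limits.} Fix $x\in\Omega$ at which $f$ is differentiable. Since $\Omega$ is open, $B_{\eps_n}(x)\subset\Omega$ for $n$ large, so the intersection with $\Omega$ can be ignored. Write $f(x+h)=f(x)+\nabla f(x)\cdot h+o(\|h\|)$. Then
\[
\frac{\sup_{B_{\eps_n}(x)} f - f(x)}{\eps_n}=\sup_{\|z\|<1}\Big(\nabla f(x)\cdot z+\tfrac{o(\eps_n)}{\eps_n}\Big)\longrightarrow \sup_{\|z\|\le 1}\nabla f(x)\cdot z=\|\nabla f(x)\|_\star .
\]
The remainder is uniform in $z$ because $\|h\|\le\eps_n$. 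The open ball gives the same supremum as the closed one by continuity. Symmetrically, $\bigl(f(x)-\inf_{B_{\eps_n}(x)}f\bigr)/\eps_n\to\sup_{\|z\|\le1}(-\nabla f(x)\cdot z)=\|\nabla f(x)\|_\star$.

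\textbf{Domination.} For every $x\in\Omega$ and every $n$, both quotients lie in $[0,\,\gamma[f]_\B]$, where $\gamma$ is a norm-equivalence constant with $\|\cdot\|_2\le\gamma\|\cdot\|$. This follows from $|f(y)-f(x)|\le[f]_\B\|y-x\|_2\le\gamma[f]_\B\eps_n$ for $y\in B_{\eps_n}(x)$. The densities $\mu_0,\mu_1$ are continuous on the compact set $\overline\Omega$, hence bounded, and $\Omega$ is bounded. So the integrands are dominated by the integrable function $\gamma[f]_\B(\mu_0+\mu_1)$. Dominated convergence then gives
\[
\TV_{\eps_n}(f;\mu)\to\int_\Omega\|\nabla f\|_\star\mu_0\de x+\int_\Omega\|\nabla f\|_\star\mu_1\de x=\TV(f;\mu).
\]

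\textbf{Main obstacle.} The only delicate point is the pointwise limit, namely the uniformity of the first-order expansion over the ball and the boundary effect. The uniformity is standard once differentiability at $x$ is known. The boundary effect disappears because every $x\in\Omega$ eventually has its ball inside $\Omega$, and $\partial\Omega$ is Lebesgue-null anyway since it is Lipschitz. No use of the Barron structure beyond Lipschitz continuity is needed.
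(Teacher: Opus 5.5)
Your proposal is correct and follows essentially the same route as the paper. Rademacher's theorem gives a.e.\ differentiability, a first-order expansion on balls eventually contained in $\Omega$ gives the pointwise limit $\|\nabla f(x)\|_\star$ of both difference quotients, and dominated convergence with a Lipschitz-based constant bound finishes the argument. Your version is somewhat more explicit, since it states the uniformity of the remainder over the ball and names the dominating function $\gamma[f]_\B(\mu_0+\mu_1)$, but the argument is the same.
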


\begin{proof}
    Let $\eps > 0$ arbitrary. 
    By \cref{lem:lipschitz_barron} the function $f$ is Lipschitz continuous, hence it is differentiable almost everywhere by Rademacher's theorem. 
    Consequently, for almost every $x\in\Omega$ with $\dist(x,\partial\Omega) > \eps$, we have
    \begin{align}
        f(x+\eps u) - f(x) = \eps\nabla f(x)\cdot u + o_x(\eps)
    \end{align}
    for $u\in B_1(0)$. In particular, taking the supremum with respect to $u$, we obtain
    \begin{align}
        \esssup_{B_\eps(x)\cap\Omega} f - f(x) = \eps\|\nabla f(x)\|_\star + o_x(\eps).
    \end{align}
    Taking the limit $\eps\to 0$, we obtain that 
    \begin{align}
        \lim_{\eps \to 0} \frac{1}{\eps}\left( \esssup_{B_\eps(x)\cap\Omega}f-f(x)\right) = \|\nabla f(x)\|_\star
    \end{align}
    for almost every $x\in\Omega$. 
    Moreover, $\frac{1}{\eps}\left( \esssup_{B_\eps(x)\cap\Omega}f-f(x)\right)$ is dominated by a constant which is integrable over $\Omega$. 
    Thus, with Lebesgue's dominated convergence theorem, we can conclude 
    \begin{align}
        \lim_{n\to\infty}\int_\Omega \frac{1}{\eps_n}\left( \esssup_{B_{\eps_n}(x)\cap\Omega}f-f(x)\right)\mu_0(x)\de x = \int_\Omega\|\nabla f(x)\|_\star\mu_0(x)\de x.
    \end{align}
    Repeating the same arguments for the $\mu_1$-term yields the desired result. 
\end{proof}

%%%%%%%%%%%%%%%%%%%%%%%%%%%%%%%%%%%%%%%%%%%%%%%%%%%%

\section{Results for the Full Regularizer}\label{sec: regularizer}
In the previous section we studied the nonlocal total variation and proved the liminf inequality as well as a version of the limsup inequality.
Next, we need analogous results for the regularizer in \eqref{eq: full objective} which requires two key changes:
First, we need to consider the empirical version of the nonlocal total variation and second the weight decay regularizer needs to be added to the functional to ensure existence of minimizers and compactness of sequences of minimizers. 
As already mentioned in \cref{sec: proof details}, it is useful to additionally divide the entire functional by $\eps_n$, leading to
\begin{align}
    R_n(\theta) \coloneq \widehat{\TV}_{\eps_n,n}(f_\theta;\mu)
    +
    \frac{\lambda_n}{\eps_n} R_{WD}(\theta). 
\end{align}
The liminf inequality for this regularizer functional is proved in \cref{prop: liminf regularizer}, and we refer to \cref{prop: limsup regularizer} for the limsup inequality.

\begin{proposition}\label{prop: liminf regularizer}
    Let \cref{assum: mu} be satisfied.
    Let $(f_n)\subset \B(\Omega)$ such that $f_n\to f \in BV(\Omega)$
    %\todo{strictly speaking you cannot assume this but need to show first that the limit is in BV, using compactness} 
    in $L^1(\Omega)$. Moreover, assume that 
    \begin{align}
        \lim_{n\to\infty} \frac{\eps_n}{\lambda_n}\sqrt{\frac{\log(n)}{n}} = 0,\qquad \lim_{n\to\infty}\frac{1}{\lambda_n\sqrt{n}} = 0, \qquad \alpha \coloneq \lim_{n\to\infty} \frac{\lambda_n}{\eps_n} \in [0,\infty).
    \end{align}
    Then, 
    \begin{align}
        \TV(f;\mu) + \alpha[f]_\B \leq \liminf_{n\to\infty} \widehat{\TV}_{\eps_n,n}(f_n;\mu) + \frac{\lambda_n}{\eps_n}[f_n]_\B. 
    \end{align}
    almost surely (with respect to the samples). 
    In the case where $\alpha = 0$, we use the convention that $\alpha\cdot[f]_\B = 0$ even if $[f]_\B = \infty$. 
\end{proposition}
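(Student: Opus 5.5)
The plan is to split the liminf into the two terms and treat each separately, after a reduction. Pass to a subsequence along which the right-hand side converges to its liminf, which we may assume finite, say equal to $L<\infty$; otherwise there is nothing to show. Since both summands are nonnegative, along this subsequence $\frac{\lambda_n}{\eps_n}[f_n]_\B \leq L+1$ eventually. Hence $\limsup_n \frac{\lambda_n}{\eps_n}[f_n]_\B<\infty$, and \cref{lem: empirical TV convergence} applies to $(f_n)$. Its two scaling hypotheses are exactly the first two assumptions of the proposition. The lemma holds almost surely, simultaneously for all such sequences, so the exceptional null set does not depend on $(f_n)$. It gives $|\widehat{\TV}_{\eps_n,n}(f_n;\mu)-\TV_{\eps_n}(f_n;\mu)|\to 0$, so it suffices to bound $\liminf_n \TV_{\eps_n}(f_n;\mu) + \frac{\lambda_n}{\eps_n}[f_n]_\B$ from below.

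Next, use superadditivity of the liminf, $\liminf(a_n+b_n)\geq \liminf a_n+\liminf b_n$, and treat the two terms separately.

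For the total variation term, apply \cref{lem: liminf TV} to get $\liminf_n \TV_{\eps_n}(f_n;\mu)\geq \TV(f;\mu)$. That lemma requires $(f_n)\subset BV(\Omega)$ and $f\in BV(\Omega)$. The second holds by hypothesis. For the first, Barron functions are Lipschitz by \cref{lem:lipschitz_barron}, hence in $W^{1,\infty}(\Omega)\subset BV(\Omega)$ because $\Omega$ is bounded.

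One has to check that the limit functional $\TV(u;\mu)$ from \cref{lem: liminf TV} agrees with $\int_\Omega\|\nabla f\|_\star(\mu_0+\mu_1)\de x$ whenever $f$ is Barron. When $\alpha>0$ this is automatic by the next step, since then $f\in\B$. When $\alpha=0$ we should interpret $\TV(f;\mu)$ as the BV-version $\int_{\R}\Per(\{f>t\};\mu)\de t$. For Sobolev $f$ the coarea formula identifies the two, because the unit normal on level sets is $\nabla f/\|\nabla f\|_2$ and $\|\cdot\|_\star$ is $1$-homogeneous.

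For the Barron term with $\alpha>0$, write $\frac{\lambda_n}{\eps_n}[f_n]_\B$ as a product. The first factor converges to $\alpha>0$, and the second satisfies $\liminf_n [f_n]_\B\geq [f]_\B$ by \cref{lem: semi norm lower semicontinuous}. This yields $\liminf_n \frac{\lambda_n}{\eps_n}[f_n]_\B\geq \alpha[f]_\B$. In particular $[f]_\B\leq L/\alpha<\infty$, so $f\in\B$. When $\alpha=0$ the term is nonnegative, so its liminf is at least $0=\alpha[f]_\B$ by the stated convention.

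Adding the two bounds gives the claim. The main obstacle is already handled by the earlier lemmas: controlling the empirical-versus-population gap uniformly over a class whose Barron norm may grow like $\eps_n/\lambda_n$. What remains delicate is the bookkeeping. The almost-sure set must be chosen independently of the sequence, which \cref{lem: empirical TV convergence} provides. And the subsequence extraction must preserve the $L^1$ convergence to $f$, which it does.
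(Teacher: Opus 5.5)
Your proposal is correct and follows essentially the same route as the paper. You reduce to the case where $\frac{\lambda_n}{\eps_n}[f_n]_\B$ stays bounded, replace $\widehat{\TV}_{\eps_n,n}$ by $\TV_{\eps_n}$ via the almost-sure uniform convergence lemma, split the liminf, and conclude with the liminf inequality for $\TV_{\eps_n}$ and the lower semicontinuity of $[\cdot]_\B$. Your extra checks fill in details the paper leaves implicit: that $f_n\in BV(\Omega)$ via Lipschitz continuity, the coarea identification of the two meanings of $\TV(f;\mu)$, and the $\alpha=0$ convention.
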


\begin{proof}
    Without loss of generality, by passing to a subsequence, we can assume that 
    \begin{align}
        \liminf_{n\to\infty}  \widehat{\TV}_{\eps_n,n}(f_n;\mu) + \frac{\lambda_n}{\eps_n}[f_n]_\B &= \lim_{n\to\infty} \widehat{\TV}_{\eps_n,n}(f_n;\mu) + \frac{\lambda_n}{\eps_n}[f_n]_\B. 
    \end{align}
    Moreover, if $\limsup_{n\to\infty}\frac{\lambda_n}{\eps_n}[f_n]_\B = \infty$, the claim becomes trivial by extracting another subsequence, where the limsup becomes a limit. 
    Hence, in the following, we consider the case, where $\limsup_{n\to\infty}\frac{\lambda_n}{\eps_n}[f_n]_\B$ is finite. 
    By inserting zero, we can rewrite 
    \begin{align}
        \widehat{\TV}_{\eps_n,n}(f_n;\mu) + \frac{\lambda_n}{\eps_n}[f_n]_\B = \TV_{\eps_n}(f_n;\mu) + \widehat{\TV}_{\eps_n,n}(f_n;\mu) - \TV_{\eps_n}(f_n;\mu)
        +
        \frac{\lambda_n}{\eps_n}[f_n]_\B
    \end{align}
     and \Cref{lem: empirical TV convergence} implies that the difference of the two middle terms converges to zero almost surely as $n\to\infty$. 
     We can additionally use the lower semi-continuity of the Barron semi-norm and the liminf inequality from \Cref{lem: liminf TV} to conclude
     \begin{align}
         \liminf_{n\to\infty}  \widehat{\TV}_{\eps_n,n}(f_n;\mu)+\frac{\lambda_n}{\eps_n}[f_n]_\B  &= \liminf_{n\to\infty} \TV_{\eps_n}(f_n;\mu)+\frac{\lambda_n}{\eps_n}[f_n]_\B \\
         &\geq   \liminf_{n\to\infty}\TV_{\eps_n} (f_n;\mu) +
         \liminf_{n\to\infty}
         \frac{\lambda_n}{\eps_n}[f_n]_\B\\
         &\geq \TV(f;\mu) + \alpha [f]_\B.
     \end{align}
\end{proof}

\begin{remark}
    In particular if the functions $f_n$ are neural networks parametrized by $\theta_n\in\Theta_{m_n}$, then 
    \begin{align}
        \TV(f;\mu) + \alpha[f]_\B \leq \liminf_{n\to\infty}  \widehat{\TV}_{\eps_n,n}(f_n;\mu)+\frac{\lambda_n}{\eps_n}R_{WD}(\theta_n), 
    \end{align}
    since $R_{WD}(\theta_n) \geq [f_n]_\B$ (see \Cref{rmk: nn}). 
\end{remark}

From \Cref{lem: limsup TV Barron}, we can immediately deduce the following equality. 
\begin{lemma}\label{lem: limsup continuous functional Barron}
    For $f\in\B(\Omega)$ it holds that 
    \begin{align}
        \limsup_{n\to\infty}  \TV_{\eps_n}(f;\mu) + \frac{\lambda_n}{\eps_n}[f]_\B = \TV(f;\mu)+\alpha [f]_\B, 
    \end{align}
    where $\alpha \coloneq \limsup_{n\to\infty}\frac{\lambda_n}{\eps_n}$. 
\end{lemma}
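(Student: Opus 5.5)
The plan is to split the expression into its two summands and treat each separately, since $f$ is fixed and only the $n$-dependent coefficients vary. By \cref{lem: limsup TV Barron}, applied to the fixed Barron function $f\in\B(\Omega)$, the first term converges:
\begin{align}
    \lim_{n\to\infty}\TV_{\eps_n}(f;\mu) = \TV(f;\mu).
\end{align}
In particular this limit is finite. Indeed, $f$ is Lipschitz with constant $[f]_\B$ by \cref{lem:lipschitz_barron}, so $\TV_{\eps_n}(f;\mu)\le \beta'[f]_\B(\mu_0(\Omega)+\mu_1(\Omega))$ uniformly in $n$, where $\beta'$ compares $\|\cdot\|_2$ with $\|\cdot\|$.

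Next I use the elementary fact that if $a_n\to a\in\R$, then $\limsup_n (a_n+b_n) = a + \limsup_n b_n$ for any real sequence $(b_n)$, with the value $+\infty$ allowed. I apply it with $a_n = \TV_{\eps_n}(f;\mu)$ and $b_n = \frac{\lambda_n}{\eps_n}[f]_\B$. Since $[f]_\B$ is a fixed finite nonnegative number, $\limsup_n \frac{\lambda_n}{\eps_n}[f]_\B = [f]_\B\limsup_n\frac{\lambda_n}{\eps_n} = \alpha[f]_\B$. This is justified because multiplication by a nonnegative constant commutes with $\limsup$. If $[f]_\B=0$, both sides vanish under the convention $0\cdot\infty=0$; if $\alpha=\infty$ and $[f]_\B>0$, both sides are $+\infty$. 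Combining the two pieces gives the claimed equality.

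There is no real obstacle. The only points needing care are that the first limit is a genuine limit, so that it can be pulled out of the $\limsup$, and the degenerate conventions when $\alpha\in\{0,\infty\}$ or $[f]_\B=0$. The substantive analytic work, namely the pointwise convergence of difference quotients together with dominated convergence, is already contained in \cref{lem: limsup TV Barron}.
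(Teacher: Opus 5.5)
Your proposal is correct and takes the same route as the paper, which deduces the identity immediately from \cref{lem: limsup TV Barron}: the genuine limit of $\TV_{\eps_n}(f;\mu)$ is pulled out of the $\limsup$, and the fixed constant $[f]_\B$ is factored out of the remaining term. Your remarks on finiteness and on the degenerate conventions ($\alpha\in\{0,\infty\}$ or $[f]_\B=0$) are details the paper leaves implicit.
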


It remains to close the gap to obtaining a  sequence of neural networks and considering the empirical version of the non-local total variation.  

\begin{proposition}\label{prop: limsup regularizer}
    Let $f\in\B(\Omega)$ and let 
    \begin{align}
        \lim_{n\to\infty}\frac{1}{\eps_n \sqrt{m_n}} = 0, \qquad \lim_{n\to\infty} \frac{1}{\eps_n\sqrt{n}} = 0, \qquad \limsup_{n\to\infty}\frac{\lambda_n}{\eps_n} =: \alpha.
    \end{align}
    Then, there exists a sequence of neural networks $f_n$ with $m_n$ neurons parametrized by $\theta_n\in\Theta_{m_n}$ such that $f_n\to f$ in $L^1(\Omega)$ and 
    \begin{align}
        \limsup_{n\to\infty}  \widehat{\TV}_{\eps_n,n}(f_n; \mu)+\frac{\lambda_n}{\eps_n}R_{WD}(\theta_n) \leq  \TV(f;\mu)+\alpha [f]_\B. 
    \end{align}
    almost surely (with respect to the samples). 
\end{proposition}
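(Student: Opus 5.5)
The recovery sequence will come from \Cref{lem:nn_approx_in_l_infty}: for each $n$ take the network $f_n=f_{\theta_n}$ with $m_n$ neurons such that
\begin{align}
    \|f-f_n\|_{L^\infty(\Omega)} \leq \frac{C[f]_\B}{\sqrt{m_n}}, \qquad R_{WD}(\theta_n)\leq [f]_\B .
\end{align}
Since $\Omega$ is bounded, $f_n\to f$ in $L^\infty(\Omega)$ and hence in $L^1(\Omega)$. Moreover $\frac{\lambda_n}{\eps_n}R_{WD}(\theta_n)\leq \frac{\lambda_n}{\eps_n}[f]_\B$. The weight decay part therefore has $\limsup$ at most $\alpha[f]_\B$, and only the total variation part remains.

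For that part, split
\begin{align}
    \widehat{\TV}_{\eps_n,n}(f_n;\mu) = \big(\widehat{\TV}_{\eps_n,n}(f_n;\mu)-\TV_{\eps_n}(f_n;\mu)\big) + \big(\TV_{\eps_n}(f_n;\mu)-\TV_{\eps_n}(f;\mu)\big) + \TV_{\eps_n}(f;\mu).
\end{align}
The last term converges to $\TV(f;\mu)$ by \Cref{lem: limsup TV Barron}.

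For the middle term, use the elementary bound $|\sup_{B} g-\sup_B h|\leq \|g-h\|_\infty$, and similarly for the infimum and for point evaluation. Together with $\mu_0(\Omega)+\mu_1(\Omega)=1$ this gives
\begin{align}
    |\TV_{\eps_n}(f_n;\mu)-\TV_{\eps_n}(f;\mu)| \leq \frac{2\|f_n-f\|_{L^\infty(\Omega)}}{\eps_n} \leq \frac{2C[f]_\B}{\eps_n\sqrt{m_n}} \to 0,
\end{align}
using the assumption $1/(\eps_n\sqrt{m_n})\to0$. This is exactly where the sup-norm approximation of \Cref{lem:nn_approx_in_l_infty} is needed; an $L^1$ approximation would not control the nonlocal sup/inf terms.

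The first term is the generalization error. Here the Barron seminorms are uniformly bounded: $[f_n]_\B\leq R_{WD}(\theta_n)\leq [f]_\B=:Q$ by \Cref{rmk: nn}, so all $f_n$ lie in the fixed class $\mathcal{F}_{Q,\infty}$. Apply \Cref{lem: TV generalization bound} with $\delta=1/n^2$ and $\eps=\eps_n\le1$ (for large $n$). With probability at least $1-2/n^2$,
\begin{align}
    \sup_{g\in\mathcal{F}_{Q,\infty}}|\widehat{\TV}_{\eps_n,n}(g;\mu)-\TV_{\eps_n}(g;\mu)| \leq CQ\Big(\frac{1}{\eps_n\sqrt{n}}+\sqrt{\frac{\log(2n^2)}{2n}}\Big),
\end{align}
and the right-hand side tends to zero by the assumption $1/(\eps_n\sqrt n)\to 0$. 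Since $\sum 2/n^2<\infty$, Borel--Cantelli makes this bound hold for all but finitely many $n$ almost surely, so the first term vanishes almost surely. (Alternatively, \Cref{lem: empirical TV convergence} could be invoked, but the fixed-$Q$ version avoids its extra scaling hypotheses.) Summing the three limits with the weight decay bound gives the claim.

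The main subtlety is the almost-sure statement: the sequence $f_n$ is deterministic, being constructed from $f$ alone. The null set therefore comes only from the uniform bound over $\mathcal{F}_{Q,\infty}$ and does not depend on the choice of approximants, so no measurability issue arises. Beyond this, the only real care is the $1/\eps_n$ amplification of the approximation error, which the rate condition on $m_n$ handles.
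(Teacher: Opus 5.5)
Your proposal is correct and follows essentially the same route as the paper. It uses the same $L^\infty$ recovery sequence from \Cref{lem:nn_approx_in_l_infty}, the same three-term splitting controlled by \Cref{lem: limsup TV Barron} and the $\frac{1}{\eps_n}\|f_n-f\|_{L^\infty(\Omega)}$ estimate, and \Cref{lem: TV generalization bound} combined with Borel--Cantelli; you are only more explicit than the paper about the fixed class $\mathcal{F}_{Q,\infty}$ with $Q=[f]_\B$ and the choice $\delta=1/n^2$.
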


\begin{proof}
    From \Cref{lem: limsup continuous functional Barron}, we infer that
    \begin{align}
         \limsup_{n\to\infty}  \TV_{\eps_n} (f;\mu) + \frac{\lambda_n}{\eps_n} [f]_\B = \TV(f;\mu)+\alpha [f]_\B.
    \end{align}
    We will approximate the function $f$ by neural networks. 
    Precisely, for $n\in\N$, there exists a neural network with $m_n$ neurons parametrized by $\theta_n$ such that 
    \begin{align}
        \| f_n-f\|_{L^\infty(\Omega)} \leq \frac{C}{\sqrt{m_n}}, \qquad R_{WD}(\theta_n) \leq [f]_{\B(\Omega)}, 
    \end{align}
    where the constant $C$ depends on the Barron semi-norm of $f$ as well as the domain $\Omega$, see \Cref{lem:nn_approx_in_l_infty}.
    Hence, 
    \begin{align}
        |\TV_{\eps_n}(f;\mu)-\TV_{\eps_n}(f_n;\mu)| \leq \frac{C}{\eps_n}\|f_n-f\|_{L^\infty(\Omega)} \leq \frac{C}{\eps_n \sqrt{m_n}}. 
    \end{align} 
    Moreover, we can use \Cref{lem: TV generalization bound} to obtain 
    \begin{align}
        |\widehat{\TV}_{\eps_n,n}(f_n;\mu) - \TV_{\eps_n}(f_n;\mu)| \leq \frac{C\sqrt{\log(2/\delta)}}{\sqrt{n}} + \frac{C}{\eps_n\sqrt{n}}
    \end{align}
    with probability at least $1-2\delta$. 
    Hence, combining all the estimates with \Cref{lem: limsup continuous functional Barron} and using the Borel--Cantelli lemma, we obtain
    \begin{align}
        \limsup_{n\to\infty}  \widehat{\TV}_{\eps_n,n}(f_n;\mu)+\frac{\lambda_n}{\eps_n}R_{WD}(\theta_n) &= \limsup_{n\to\infty} \Big(\TV_{\eps_n}(f;\mu) + \TV_{\eps_n}(f_n;\mu) - \TV_{\eps_n}(f;\mu)
        \\
        &\quad 
        +\widehat{\TV}_{\eps_n,n}(f_n;\mu) - \TV_{\eps_n}(f_n;\mu)+\frac{\lambda_n}{\eps_n}R_{WD}(\theta_n)\Big) \\
        &\leq \TV(f;\mu)+\alpha [f]_\B
    \end{align}
    almost surely. 
\end{proof}

% DGA: is this accurate? naively, it seems that some constants depends on the diameter of \Omega, which scales as \sqrt{d} if \Omega=[0,1]^d if I take the example that appears earlier in the appendix
% LB: constants do depend on the dimension, rates don't, i.e., the relative scalings in terms of n
% DGA: I am not sure I understand what you mean, if I look at the next display and C depends on d, then the rate does depends on d?
% LB: I mean the conditions that the right hand side of the display imposes on the other parameters, most importantly \eps_n. 
% For us it suffices that \eps_n \gg \frac{1}{\sqrt{m_n}}.
% If the RHS of the display was of the usual form \frac{1}{m_n^\frac{1}{d}} we would get the condition \eps_n \gg \frac{1}{m_n^\frac{1}{d}} which requires one to choose \eps_n huge in high dimensions 
We note that our analysis benefits significantly from the homogeneous Barron space, which guarantees approximation rates independent of the ambient dimension $d$.

\begin{remark}\label{rmk: limsup rates}
    The sequence $(f_n)$ arising from the previous proof has the rate of convergence
    \begin{align}
        \|f_n-f\|_{L^\infty(\Omega)} \leq \frac{C}{\sqrt{m_n}}
    \end{align}
    for some constant $C$ depending on $f$ and $\Omega$ and satisfies $[f_n] \leq R_{WD}(\theta_n) \leq [f]_\B$. 
\end{remark}

%%%%%%%%%%%%%%%%%%%%%%%%%%%%%%%%%%%%%%%%%%%%%%%%%%%

\section{Proofs of \texorpdfstring{\Cref{prop: liminf AT,prop: limsup AT,prop: compactness}}{Propositions 4.1 to 4.3}}\label{sec: proof of propositions}
In this section, we carry out the proof of the main auxiliary results for proving \cref{thm: main}. 
In the language of Gamma-convergence, \cref{prop: limsup AT} is the limsup inequality, \cref{prop: liminf AT} is the liminf equality, and together they imply the following Gamma-convergence theorem.
\begin{theorem}\label{thm: gamma}
Under the conditions of \cref{thm: main} define the functionals
\begin{align*}
    G_n : L^1(\Omega)\to[0,\infty],
    \quad
    G_n(u)
    :=\inf\left\lbrace F_n(\theta)\,\vert\,\theta\in\Theta_{m_n}\text{ with } u = f_\theta\right\rbrace,
    \qquad
    n\in\N,
\end{align*}
where $f_\theta$ for $\theta\in\Theta_{m_n}$ denotes the neural network with parameters $\theta$.
As $n\to\infty$ the functionals~$G_n$ Gamma-converge to $F$, defined in \eqref{eq: def F}, in the $L^1(\Omega)$ topology.
\end{theorem}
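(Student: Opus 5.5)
The plan is to verify the two defining inequalities of Gamma-convergence for $G_n$ in the $L^1(\Omega)$ topology. Both should follow almost directly from \cref{prop: liminf AT} and \cref{prop: limsup AT}, once we translate between functionals defined on parameters $\theta\in\Theta_{m_n}$ and functionals defined on functions $u\in L^1(\Omega)$. A preliminary observation is that $G_n(u)<\infty$ exactly when $u=f_\theta$ on $\Omega$ for some $\theta\in\Theta_{m_n}$. The infimum defining $G_n(u)$ then runs over all such representations. Note that $F_n(\theta)$ depends on $\theta$ only through $f_\theta$ and through $R_{WD}(\theta)$, since both the empirical risk and $\widehat{\TV}_{\eps_n,n}$ depend only on $f_\theta$.

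\textbf{Liminf inequality.} Let $u_n\to u$ in $L^1(\Omega)$; we must show $F(u)\le\liminf_n G_n(u_n)$ almost surely. First pass to a subsequence realizing the liminf, and assume it is finite. Then $G_n(u_n)<\infty$ along it, so for each $n$ we can choose $\theta_n\in\Theta_{m_n}$ with $f_{\theta_n}=u_n$ and $F_n(\theta_n)\le G_n(u_n)+1/n$. The sequence $f_{\theta_n}=u_n$ converges to $u$ in $L^1(\Omega)$. The scaling assumptions of \cref{thm: main} imply those of \cref{prop: liminf AT}. Applying that proposition along the subsequence gives
\[
F(u)\le\liminf_n F_n(\theta_n)\le\liminf_n G_n(u_n).
\]
Along indices outside the subsequence one can set $\theta_n$ arbitrarily, since \cref{prop: liminf AT} is applied to a full sequence; alternatively one re-indexes the subsequence, with scalings inherited. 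This is harmless.

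\textbf{Limsup inequality.} For $u\in L^1(\Omega)$ we must find $u_n\to u$ with $\limsup_n G_n(u_n)\le F(u)$. If $F(u)=\infty$, take $u_n=f_{\theta_n}$ for any $\theta_n$ with $f_{\theta_n}\to u$ in $L^1$, or simply note the inequality is trivial. Otherwise, \cref{prop: limsup AT} gives $\theta_n$ with $f_{\theta_n}\to u$ in $L^1$ and $\limsup_n F_n(\theta_n)\le F(u)$ almost surely. Setting $u_n\coloneq f_{\theta_n}$, we have $G_n(u_n)\le F_n(\theta_n)$ by definition of the infimum, which concludes.

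\textbf{Main obstacle.} The expected obstacle is the quantifier ``almost surely'' versus ``for all sequences''. Gamma-convergence requires the liminf inequality simultaneously for every sequence $(u_n)$ on a single full-probability event, whereas \cref{prop: liminf AT} is phrased per sequence. The remedy is to check that the null sets arising in its proof are sequence-independent. \cref{lem: empirical TV convergence} is explicitly uniform over all sequences. The risk term is handled via \cref{lem: generalization error} on countably many classes $\mathcal{F}_{Q,B}$ with $Q,B\in\N$ and Borel--Cantelli, which again yields one exceptional set. For the limsup inequality the null set may depend on $u$. This is acceptable because Gamma-convergence only requires a recovery sequence for each fixed $u$, and for $F(u)<\infty$ the proof of \cref{prop: limsup AT} uses countably many deterministic estimates per $u$.
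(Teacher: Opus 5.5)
This is exactly the paper's proof, so your proposal is correct: the liminf inequality comes from feeding a $\tfrac1n$-near-optimal parametrization $\theta_n$ of $u_n$ into \cref{prop: liminf AT}, and the limsup inequality from the recovery sequence of \cref{prop: limsup AT} together with $G_n(f_{\theta_n})\le F_n(\theta_n)$. Your extra bookkeeping on subsequences and null sets goes beyond the paper, but needs two small fixes. First, off the subsequence the $\theta_n$ cannot be arbitrary: they must still satisfy $f_{\theta_n}\to u$, e.g.\ zero-padded copies of earlier $\theta_{n_k}$, which is possible since $m_n\to\infty$. Second, a $u$-dependent null set in the limsup does not by itself give Gamma-convergence on one full-probability event, because there are uncountably many $u$; this is avoidable, since the concentration bounds behind \cref{prop: limsup AT} (\cref{lem: generalization error,lem: TV generalization bound}) are uniform over $\mathcal{F}_{Q,B}$, so your countable $(Q,B)\in\N^2$ Borel--Cantelli device yields a single exceptional set there as well.
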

\begin{proof}
Notice that the limsup inequality for this functional follows directly from \Cref{prop: limsup AT} since $G_n(u) \leq F_n(\theta)$ if $\theta$ parametrizes $u$.
For the liminf inequality one uses \Cref{prop: liminf AT} with a parametrization $\theta_n$ of $f_n$ that satisfies
\begin{align}
    F_n(\theta_n) \leq G_n(f_n) + \frac{1}{n}
    \qquad
    \forall n\in\N.
\end{align}
\end{proof}

We proceed with proving the $\liminf$ inequality \Cref{prop: liminf AT}.

\begin{proof}[Proof of \Cref{prop: liminf AT}]
    We first consider the case that $f\in\B$ and $\mathcal{R}(f) = \inf_\B \mathcal{R}$.
    In this case, by \Cref{prop: liminf regularizer}, 
    \begin{align}
        \liminf_{n\to\infty} F_n(f_n) &\geq \liminf_{n\to\infty} \frac{\lambda_n}{\eps_n}R_{WD}(\theta_n) + \widehat{\TV}_{\eps_n,n} (f_n; \mu) \\
        &\geq \liminf_{n\to\infty}\frac{\lambda_n}{\eps_n}[f_n]_\B + \widehat{\TV}_{\eps_n,n}(f_n;\mu) \\
        &\geq \alpha [f]_\B + \TV(f;\mu)
    \end{align}
    almost surely.
    In every other case, we need to prove that $\liminf_{n\to\infty} F_n(f_n) = \infty$. 
    If $f\notin \B(\Omega)$, we know by \Cref{lem: semi norm lower semicontinuous} that $\liminf_{n\to\infty} [f_n]_\B = \infty$ and consequently, 
    \begin{align}
        \liminf_{n\to\infty} F_n(f_n) \geq \liminf_{n\to\infty}\frac{\lambda_n}{\eps_n}[f_n]_\B = \infty. 
    \end{align}
    The last remaining case is $f\in\B$ with $\mathcal{R}(f) > \inf_\B\mathcal{R}$.
    In particular, there exists $\delta > 0$ such that $\mathcal{R}(f_n) - \inf_\B\mathcal{R} > \delta$ for $n$ large enough. 
    In this case, for convenience, we extract a subsequence such that the liminf under consideration becomes a limit. 
    If then $\limsup_{n\to\infty}[f_n]_\B = \infty$, passing to the corresponding subsequence, the inequality follows trivially. 
    Hence, we can assume that $\limsup_{n\to\infty} [f_n]_\B < \infty$, in particular, the Barron semi-norms are bounded by some constant $Q > 0$. 
    Moreover, we can conclude that $f_n(0)$ needs to converge to $f(0)$ due to the convergence in $L^1$ and the fact that all the functions $f_n$ and $f$ are Lipschitz continuous with uniformly bounded Lipschitz constants. 
    To see this, we assume the contrary, i.e. $|f_n(0)-f(0)| \geq \gamma > 0$ and conclude that there exists an open set $U$ (independent of $n$) such that $|f_n - f| \geq \frac{\gamma}{2}$ on $U$, which contradicts the fact that $f_n$ converges to $f$ in $L^1(\Omega)$. 
    Overall, there exists a constant $B>0$ such that $f_n, f\in\mathcal{F}_{Q,B}$ as defined in \eqref{eq: def FQB}. 
    Therefore, invoking \Cref{lem: generalization error}, we obtain
    \begin{align}
        F_n(f_n) &\geq \frac{\widehat{\mathcal{R}}_n(f_n) - \mathcal{R}(f_n)}{\eps_n} + \frac{\mathcal{R}(f_n)-\inf_\B\mathcal{R}}{\eps_n} \\
        &\geq -\frac{C}{\eps_n}\sqrt{\frac{\log(n)}{n}} + \frac{\delta}{\eps_n},
    \end{align}
    with probability at least $1-\frac{1}{n^2}$.
    Since the first term  is bounded, while the latter is unbounded, using the Borel--Cantelli lemma, we can conclude that 
    \begin{align}
        \liminf_{n\to\infty} F_n(f_n) = \infty
    \end{align}
    almost surely. 
\end{proof}

The proof of the $\limsup$ inequality \Cref{prop: limsup AT} uses similar ideas, combining \Cref{prop: limsup regularizer} with the generalization bound \Cref{lem: generalization error}

\begin{proof}[Proof of \Cref{prop: limsup AT}]
    The only case of interest is $f\in\B(\Omega)$ such that $\mathcal{R}(f) = \inf_\B\mathcal{R}$. 
    In every other case, we only need to show that $f\in L^1(\Omega)$ can be approximated by neural networks, because the inequality becomes trivial. 
    The existence of an approximating sequence of neural networks follows directly from \citet{HORNIK1991251}. 
    Hence, in the following we assume that $f\in\B$ with $\mathcal{R}(f) = \inf_\B\mathcal{R}$. 
    We consider the recovery sequence of neural networks $(f_n)$ with corresponding parameters $\theta_n$ for the regularizer term given by \Cref{prop: limsup regularizer}.
    We note that by \Cref{rmk: limsup rates}, we can further assume that 
    \begin{align}
         \|f_n-f\|_{L^\infty(\Omega)} \leq \frac{C}{\sqrt{m_n}} \qquad \text{and} \qquad [f_n]_\B \leq [f]_\B
    \end{align}
    for a positive constant $C$ depending on $f$. 
    Moreover,
    \begin{align}
        F_n(f_n) &= \frac{\widehat{\mathcal{R}}_n(f_n) - \inf_\B \mathcal{R}}{\eps_n} + \frac{\lambda_n}{\eps_n}R_{WD}(\theta_n) + \widehat{\TV}_{\eps_n,n} (f_n;\mu) \\
        &= \frac{\widehat{\mathcal{R}}_n(f_n) - \mathcal{R}(f_n)}{\eps_n} + \frac{\mathcal{R}(f_n) - \inf_\B \mathcal{R}}{\eps_n} + \frac{\lambda_n}{\eps_n}R_{WD}(\theta_n) + \widehat{\TV}_{\eps_n,n} (f_n;\mu). 
    \end{align}
    The first term on the right hand side is bounded by $\frac{C}{\eps_n}\sqrt{\frac{\log(n)}{n}}$ with probability at least $1-\frac{1}{n^2}$ due to \Cref{lem: generalization error}.
    For justifying the application of this result, notice that the Barron semi-norms $[f_n]_\B$ are uniformly bounded and hence all $f_n$ and the limit $f$ are Lipschitz continuous with a uniform Lipschitz-constant.
    Moreover, for $n$ large enough, we must have $|f_n(0)| \leq |f(0)|+1$ as the contrary would contradict the convergence in $L^\infty(\Omega)$.  
    
    The second term can be bounded using the Lipschitz continuity of the loss function $\ell$ to obtain
    \begin{align}
        \frac{\mathcal{R}(f_n) - \inf_\B \mathcal{R}}{\eps_n} \leq \frac{C}{\eps_n} \|f_n-f\|_{L^1(\Omega)} \leq \frac{C}{\eps_n\sqrt{m_n}}
    \end{align}
    and the right hand side converges to zero due to our assumptions on the scaling of $m_n$ and $\eps_n$. 
    Hence, using the Borel--Cantelli lemma together with \Cref{prop: limsup regularizer} yields 
    \begin{align}
        \limsup_{n\to\infty} F_n(f_n) \leq \alpha [f]_\B + \TV(f;\mu) = F(f)
    \end{align}
    almost surely.
\end{proof}

Finally, we prove the compactness result \Cref{prop: compactness}. 

\begin{proof}[Proof of \Cref{prop: compactness}]
    For abbreviation, we denote $f_n \coloneq f_{\theta_n}$.
    Since $\alpha > 0$, from $\limsup_{n\to\infty} F_n(\theta_n)$ we can deduce that there exists a constant $C > 0$ such that $[f_n]_\B \leq R_{WD}(\theta_n) \leq C$.
    In particular, all functions $f_n$ are Lipschitz continuous with a uniform bound for the Lipschitz constants. 
    Proceeding by contradiction, suppose that $|f_{n_j}(0)| \to \infty$ along a subsequence $(f_{n_j})$.
    Combining this with the uniform Lipschitz bound implies that either $f_{n_j}\geq C$ on $\Omega$ or $f_{n_j} \leq -C$ for sufficiently large $j$.
    Consequently, \Cref{assum: loss} forces $\lim_{j\to\infty} F_{n_j}(f_{n_j}) = \infty$.
    This directly contradicts the hypothesis that $\limsup_{n\to\infty}F_n(f_n) < \infty$, thereby confirming the boundedness of $(f_n(0))$.
    Therefore, we can apply \citet[Lemma C.3]{park2023minimum} and obtain the existence of a function $f\in\B$ and a subsequence $(f_{n_j})$ such that $f_{n_j}\to f$ in $L^1(\Omega)$. 
\end{proof}

%%%%%%%%%%%%%%%%%%%%%%%%%%%%%%%%%%%

\section{Additional Numerical Illustrations}\label{sec: additional numerical experiments}

In this section, we provide additional details on the experiments presented already, and we also include some additional experiments.
We focus on the datasets from the main data, i.e., the two-dimensional toy dataset from \cref{fig:toy_model}---where Bayes classifiers are not unique and we can nicely illustrate the different decision boundaries induced by natural and total variation regularization / adversarial training---the CIFAR-10 dataset, as well as ImageNet.

\paragraph{Toy Dataset.}

% DGA: fonts could be bigger (no space issue here :))
\begin{figure}[htb]
    \centering
    \includegraphics[width=\linewidth]{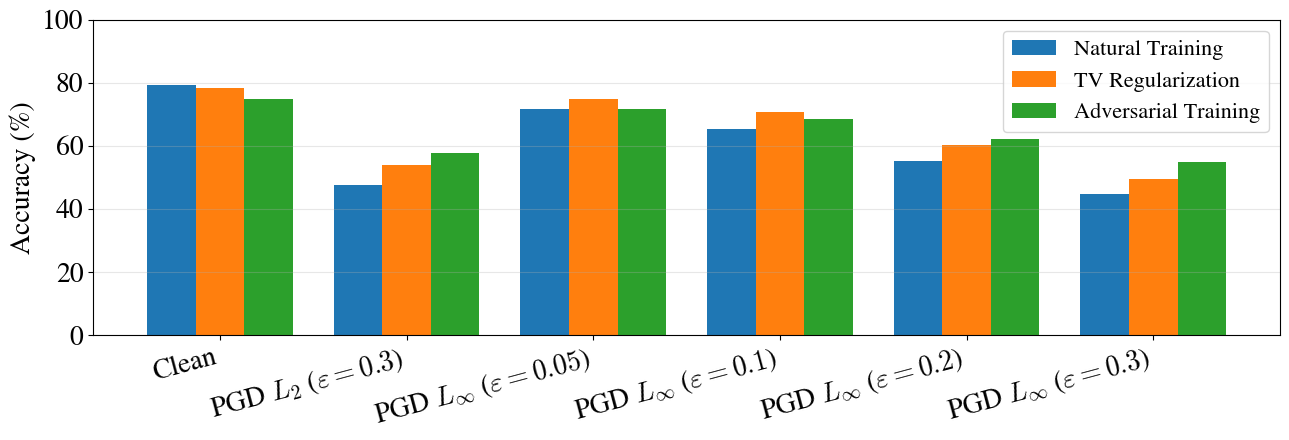}
    \caption{Clean and adversarial accuracies of the models from \cref{fig:toy_model}, trained on the toy dataset using the architecture \eqref{eq:neural network}.
    We used $n=1,000$ data points, $m_n=2,000$ neurons, $\eps_n = 0.24$, and $\lambda_n=0.01\cdot\eps_n$.
    The Bayes (i.e., best achievable) accuracy for this dataset is $80\%$.}
    \label{fig:robustness_bars}
\end{figure}

First, we revisit the toy example from \cref{fig:toy_model}.
Here, the probability distribution for the data points was chosen such that in the inner square $[-0.5,0,5]^2$ the probability of having label $0$ or $1$ is $50\%$ each, making Bayes classifiers non-unique in this set. 
Furthermore, in the set $[0.5,1]\times[-1,1]\cup[-0.5,0.5]\times[-1,-0.5]$ the probability for a data point to have label $1$ equals $90\%$.
In the remaining area that probability is $10\%$.
All Bayes classifiers equal $1$ in the former set, $0$ in the latter, and can take any number between $0$ and $1$ in the inner square $[-0.5,0.5]^2$.
Hence, this data distribution is particularly useful for evaluating how different training methods influence the decision boundary, particularly, in this set where it is undetermined by the data.

In \cref{fig:robustness_bars} we report the clean and adversarial accuracies of the three models from \cref{fig:toy_model}.
Note that, due to the overlap of the class conditional distributions, the best achievable accuracy on this dataset is $80\%$.
Notably, adversarial training suffers most from the well-known accuracy-robustness trade-off on clean data, see also the rightmost panel in \cref{fig:toy_model}. 
We observe that the classifier obtained by natural training has the highest clean accuracy, but as expected is most susceptible to all attacks considered here. 
The attacks themselves are standard; we consider one $\ell_2$-attack and several $\ell_\infty$-attacks, all computed by projected gradient descent (PGD), see \citet{madry2018toward}.
Moreover, TV regularization, i.e., problem \eqref{eq: full objective}, and standard adversarial training \citep{madry2018toward} produce comparable results. 
While TV regularization has better clean accuracy and is more robust to $\ell_2$ and small $\ell_\infty$ attacks, adversarial training slightly dominates for larger adversarial budgets. 
Note that we used to cross-entropy loss for these results, hence the mathematical equivalence of TV regularization and adversarial training \citep{bungert2023geometry} does not hold here.

Finally, in \cref{fig:l2_classifiers} (which should be compared to \cref{fig:toy_model}) we show the resulting decision boundaries when using $\ell_2$-adversaries instead of $\ell_\infty$. 
Our theory does not predict gradient sparsity in this situation with $\ell_2$-attacks, and indeed the resulting decision boundaries (just like the gradients) inside the undetermined set $[-0.5,0.5]^2$ are not axis-parallel, but rather approximately diagonal. 
Nevertheless, the decision boundaries of TV regularization and adversarial training are better behaved than that of natural training, as a consequence of the implicit regularization with $\int_\Omega\|\nabla f\|_2(\mu_0(x)+\mu_1(x))\de x + \alpha[f]_\B$ which we proved in \cref{thm: main}.

%%%%%%%%%%%%%%%%%%%%%%%%%%%%%%%%%%%%%%%%%%%%%%%%%%%%%%%%%%%%%%%%%%%%%%%

\begin{figure}[htb]
    \centering
    \begin{tikzpicture}
    % Contour plot
    \node[anchor=south west, inner sep=0] (plot) at (0,0) {
 \includegraphics[width=\linewidth,trim=0.4cm 2cm 0.3cm 2cm,clip]{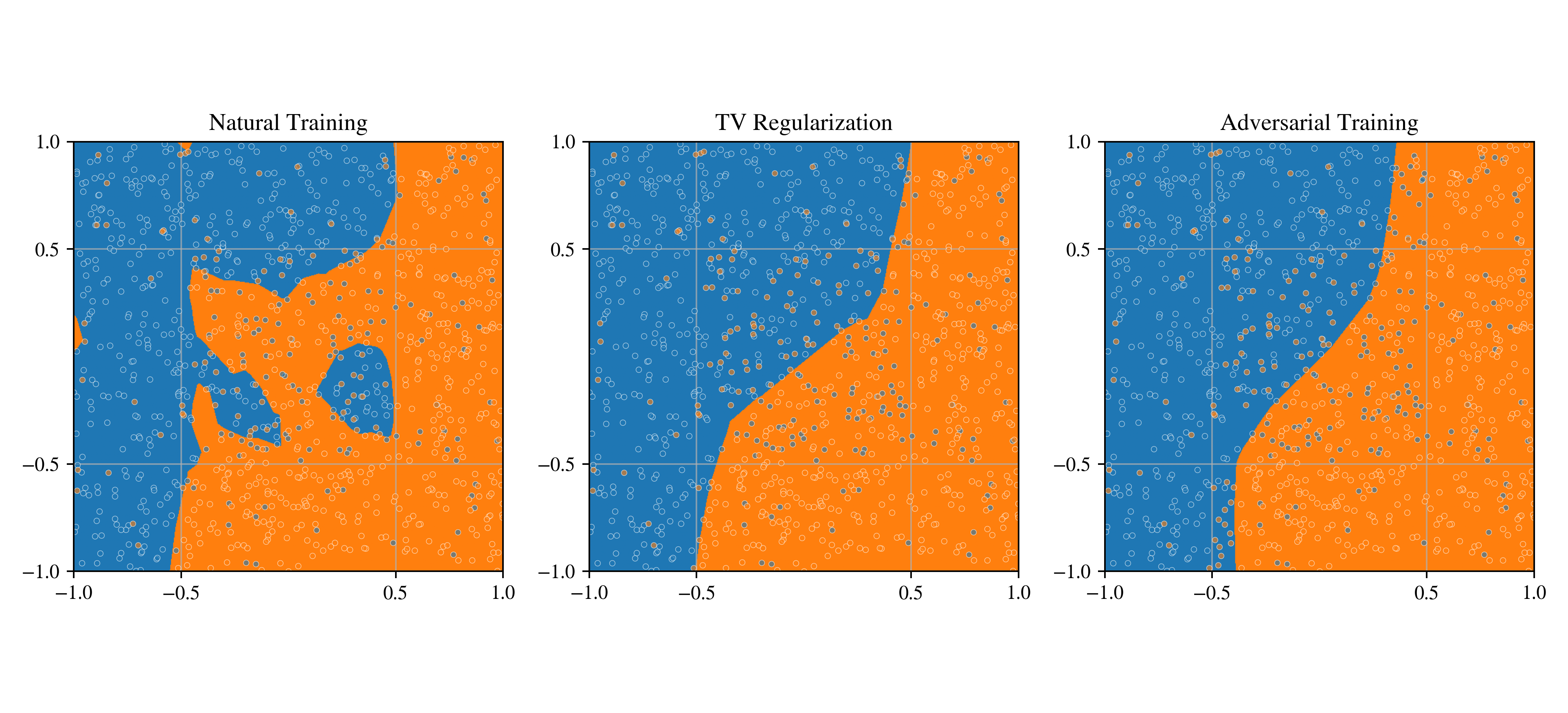}
    };
    % Gradient arrows
    \draw[->, very thick]
        (2.85,2.25) -- (2.5,2.6)
        node[midway, above right] {$\nabla f(x)$};    

    \draw[->, very thick]
        (7.,2.1) -- (7.3,1.7)
        node[midway, above right] {$\nabla f(x)$};    
    
    \draw[->, very thick]
        (11.7,2.1) -- (12.1,1.7)
        node[midway, above right] {$\nabla f(x)$};    

    % mark the points x
    \fill (2.85,2.25) circle (2pt)
        node[below left] {$x$};

    \fill (7.,2.1) circle (2pt)
        node[below left] {$x$};

    \fill (11.7,2.1) circle (2pt)
        node[below left] {$x$};    
\end{tikzpicture}
    
    \caption{Decision boundaries, using $\ell_2$-attacks instead of $\ell_\infty$-attacks:
    As expected, no gradient sparsity in the undetermined region $[-0.5,0.5]^2$ is observed, but the decision boundaries of TV regularization and adversarial training still have much lower complexity and are shorter than the one of natural training.}
    \label{fig:l2_classifiers}
\end{figure}

%%%%%%%%%%%%%%%%%%%%%%%%%%%%%%%%%%%%%%%%%%%%%%%%%%%%%%%%%%%%%%%%%%%%%%%

\paragraph{CIFAR-2 and Two-Layer Neural Networks.}

We provide additional numerical illustrations obtained by training two layer neural networks on two classes extracted from the CIFAR-10 dataset. 
Specifically, we trained the two-layer neural network architecture from \eqref{eq:neural network} with $4,096$ hidden neurons on the subset of CIFAR-10 that contains only the classes \textit{frog} and \textit{ship}.
We do this to reduce the problem to a binary classification problem, precisely as analyzed in \cref{thm: main}.
As for the toy example, we utilize three different training methods: natural training, TV regularization, and adversarial training. 

The results are displayed in \cref{fig:robustness_bars_CIFAR10} where we compare the achieved accuracy for clean and adversarially perturbed inputs and \cref{fig:sparsity_comparison_CIFAR10} for a thresholded sparsity comparison of the input gradients.
It can be observed that adversarial training, as well as TV regularization both result in better robustness and higher thresholded gradient sparsity, with the latter being even more affected by TV regularization. 
Both methods are comparable in term of adversarial robustness and yield higher robustness against the considered attacks than the naturally trained model.

%%%%%%%%%%%%%%%%%%%%%%%%%%%%%%%%%%%%%%%%%%%%%%%%%%%%%%%%%%%%%%%%

\begin{figure}[htb]
    \centering
    \includegraphics[width=\linewidth]{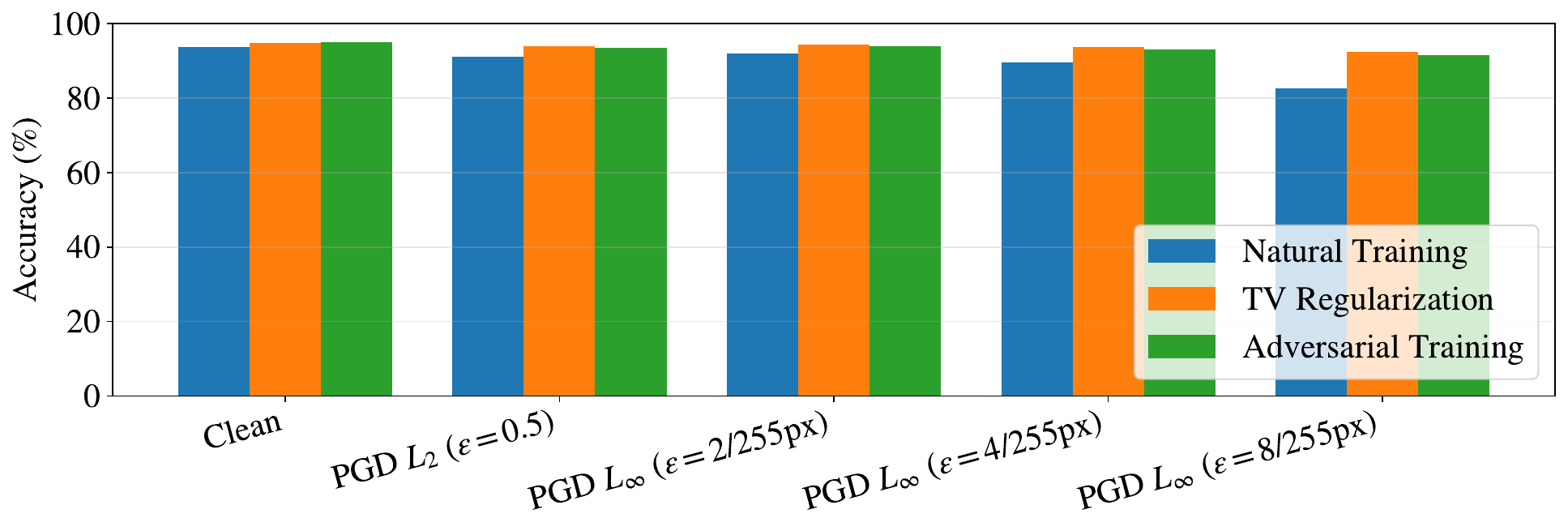}
    \caption{Clean and adversarial accuracies of the two-layer neural network models trained on two classes of CIFAR-10.
    We used $n=10,000$, $m=4,096$ neurons, $\eps_n =0.0587$, and $\lambda_n=0.0055\cdot\eps_n$.}
    \label{fig:robustness_bars_CIFAR10}
\end{figure}

\begin{figure}[htb]
    \centering
    \includegraphics[width=\linewidth]{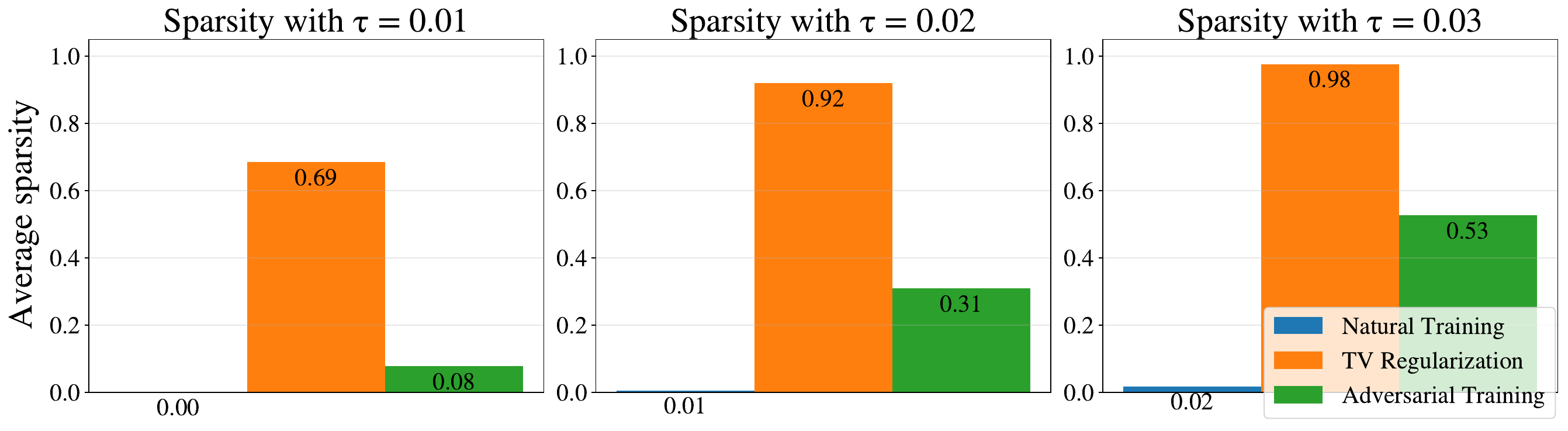}
    \caption{Sparsity of the saliency maps of the two-layer models. Larger value of the bar indicates more values below the threshold $\tau$.}
    \label{fig:sparsity_comparison_CIFAR10}
\end{figure}

%%%%%%%%%%%%%%%%%%%%%%%%%%%%%%%%%%%%%%%%%%%%%%%%%%%%%%%%%%%%%%%%%

\paragraph{ImageNet.}

In addition to the thresholded sparsity comparison in \Cref{fig:sweep}, we provide a histogram of the input gradient magnitudes to further analyze the networks' sensitivities.
Unlike the spatial visualizations in \Cref{fig:sensitivity_maps_varying_eps}, here the gradient magnitude is the absolute gradient value of every individual color channel.
Using the pre-trained models described in \cref{sec:numerical illustrations}, \Cref{fig:histogram} allows for a direct comparison of the raw gradient distributions between a naturally trained ResNet50 and its adversarially trained counterpart ($\eps = 1/255$) over a subset of the ImageNet validation dataset.
As already expected from the thresholded sparsity results, the adversarially trained model's gradient values are much more concentrated around zero than those of the naturally trained model.
In particular, this histograms reveals that the gradients are not strictly sparse, meaning that very few of the components are exactly zero, but rather that the majority of the gradient components have magnitudes extremely close to zero. 

%%%%%%%%%%%%%%%%%%%%%%%%%%%%%%%%%%%%%%%%%%%%%%%%%%%%%%%%%%%%%%%%
\begin{figure}[h]
    \centering
    \includegraphics[width=0.5\linewidth]{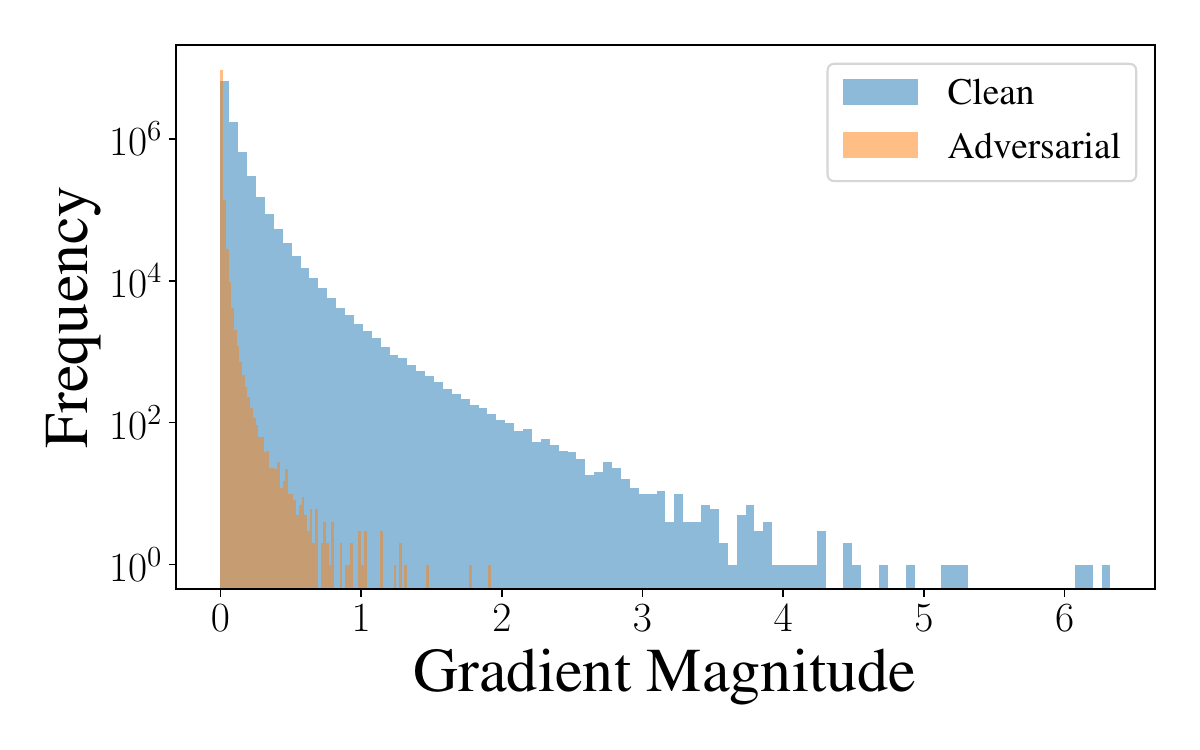}
    \caption{Distribution of input gradient magnitudes for naturally and adversarially trained ResNet50 models on the ImageNet dataset. Notably, the adversarially trained model exhibits a higher frequency of near-zero values.}.
    \label{fig:histogram}
\end{figure}

%%%%%%%%%%%%%%%%%%%%%%%%%%%%%%%%%%%%%%%%%%%%%%%%%%%%%%%%%%%%%%%%

Although our theory gives explanations for the apparent sparsity of saliency maps when computing  the gradient of the true class component (see our discussion on the multiclass case in \cref{sec: main result}) the empirical behavior of trained models for incorrect target classes remains of interest. 
To investigate this, we take the image from \cref{fig:sensitivity_maps_varying_eps} (true label: \textit{hummingbird}) and compute the gradients for two different classes. 
\cref{fig:wrong_label} visualizes these gradients with respect to the classes \textit{goldfish} (top row) and \textit{tennis ball} (bottom row).
Similar to the true-class setting, the naturally trained model produces unstructured noise, while the adversarially trained models generate gradients that are sparser and more structured.

%%%%%%%%%%%%%%%%%%%%%%%%%%%%%%%%%%%%%%%%%%%%%%%%%%%%%%%%%%%%%%%%%%%%%%%%

\begin{figure}[htb]
\centering
   \begin{subfigure}[b]{0.19\linewidth}
        \centering
        \includegraphics[width=\linewidth]{Figures/custom_original.pdf}
    \end{subfigure}
    \hfill
    \begin{subfigure}[b]{0.19\linewidth}
        \centering
        \includegraphics[width=\linewidth]{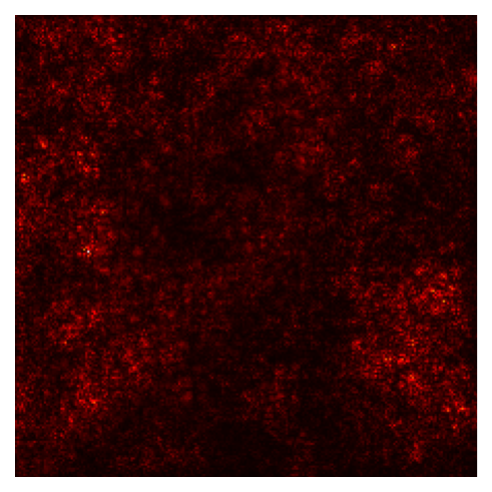}
    \end{subfigure}
    \hfill
    \begin{subfigure}[b]{0.19\linewidth}
        \centering
        \includegraphics[width=\linewidth]{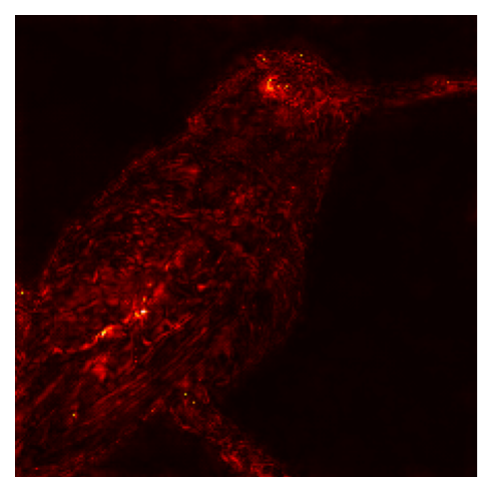}
    \end{subfigure}
    \hfill
    \begin{subfigure}[b]{0.19\linewidth}
        \centering
        \includegraphics[width=\linewidth]{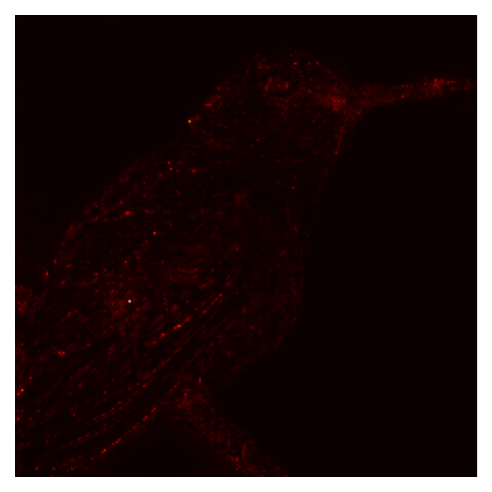}
    \end{subfigure}
    \hfill
    \begin{subfigure}[b]{0.19\linewidth}
        \centering
        \includegraphics[width=\linewidth]{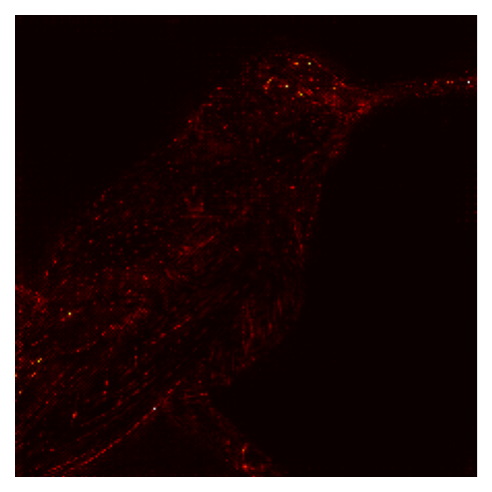}
    \end{subfigure}

    \begin{subfigure}[b]{0.19\linewidth}
        \centering
        \includegraphics[width=\linewidth]{Figures/custom_original.pdf}
        \caption{Original}
        \label{fig:sens_orig_off_target}
    \end{subfigure}
    \hfill
    \begin{subfigure}[b]{0.19\linewidth}
        \centering
        \includegraphics[width=\linewidth]{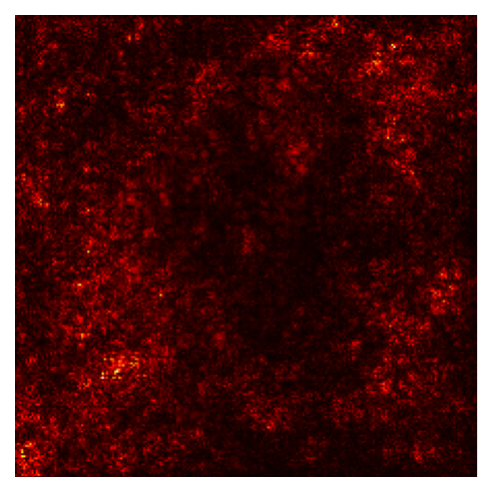}
        \caption{Natural training}
    \end{subfigure}
    \hfill
    \begin{subfigure}[b]{0.19\linewidth}
        \centering
        \includegraphics[width=\linewidth]{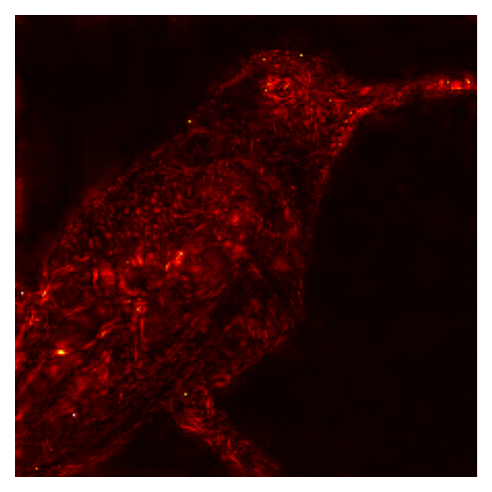}
        \caption{$\eps=0.5/255$}
    \end{subfigure}
    \hfill
    \begin{subfigure}[b]{0.19\linewidth}
        \centering
        \includegraphics[width=\linewidth]{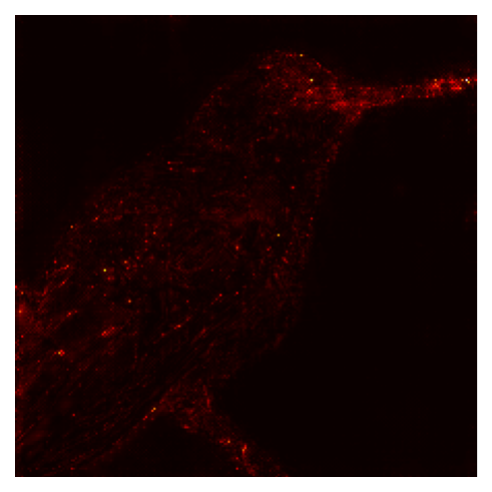}
        \caption{$\eps=1/255$}
    \end{subfigure}
    \hfill
    \begin{subfigure}[b]{0.19\linewidth}
        \centering
        \includegraphics[width=\linewidth]{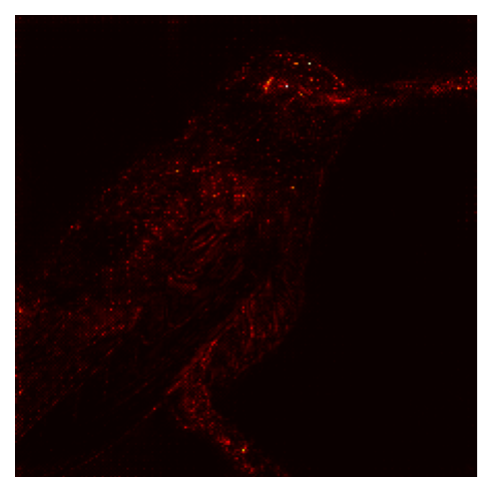}
        \caption{$\eps=2/255$}
    \end{subfigure}
    \caption{Off-target saliency maps across varying adversarial budgets. Left: The original image (ground truth: \textit{hummingbird}). Right: Input gradients from a naturally trained ResNet50 and adversarially trained models with increasing budgets. The gradients are computed with respect to incorrect target classes rather than the ground truth: \textit{goldfish} (top row) and \textit{tennis ball} (bottom row).}
    \label{fig:wrong_label}
\end{figure}

This observation aligns with the discussion of multiclass classification in \cref{sec: main result} if the class conditionals $\mu_c$ for different classes $c\in\{1,\dots,C\}$ sufficient overlap. 
Otherwise, this remains an interesting topic for future research.

To generate the saliency maps in \cref{fig:wrong_label,fig:sensitivity_maps_varying_eps}, we first compute the gradient of the respective class logit with respect to the input.
Specifically, we used the true class for \cref{fig:sensitivity_maps_varying_eps} and incorrect classes for \cref{fig:wrong_label}.
We then extract the maximum absolute gradient magnitude across channels for each pixel and applying a global min-max normalization to scale the resulting spatial map into the $[0,1]$ range for visualization.

\end{document}